\title{On Corruption-Robustness in {\em Performative} Reinforcement Learning}
\author{
    Vasilis Pollatos\textsuperscript{\rm 1}\thanks{This work was done as part of a research immersion lab at the Max Planck Institute for Software Systems.},
    Debmalya Mandal\textsuperscript{\rm 2}\thanks{The work was done while at the Max Planck Institute for Software Systems.}, 
    Goran Radanovic\textsuperscript{\rm 3}
}
\newtheorem{lemma}{Lemma}
\newtheorem{theorem}{Theorem}
\newtheorem*{theorem*}{Theorem}
\newtheorem{corollary}{Corollary}
\newtheorem{definition}{Definition}
\newtheorem{assumption}{Assumption}
\newtheorem{remark}{Remark}
\newcommand{\expctu}[2]{\mathbb{E}_{#1}\left[#2\right]}
\newcommand{\ind}[1]{\mathds{1}\left[#1\right]}
\newcommand{\norm}[1]{\left\lVert#1\right\rVert}
\DeclareMathOperator*{\argmin}{arg\,min}
\DeclareMathOperator*{\argmax}{arg\,max}
\newcommand{\eps}{\epsilon}
\begin{document}

\maketitle


\begin{abstract}
In {\em performative} Reinforcement Learning (RL), an agent faces a policy-dependent environment: the reward and transition functions depend on the agent's policy. Prior work on performative RL has studied the convergence of repeated retraining approaches to a {\em performatively} stable policy. In the finite sample regime, these approaches repeatedly solve for a saddle point of a convex-concave objective, which estimates the Lagrangian of a regularized version of the reinforcement learning problem. In this paper, we aim to extend such repeated retraining approaches, enabling them to operate under corrupted data. More specifically, we consider Huber's $\epsilon$-contamination model, where an $\epsilon$ fraction of data points is corrupted by arbitrary adversarial noise. We propose a repeated retraining approach based on convex-concave optimization under corrupted gradients and a novel problem-specific robust mean estimator for the gradients. We prove that our approach exhibits last-iterate convergence to an approximately stable policy, with the approximation error 
linear in $\sqrt{\epsilon}$. We experimentally demonstrate the importance of accounting for corruption in performative RL. 

\end{abstract}

\section{Introduction}\label{sec.introduction}

In {\em performative reinforcement learning}~\cite{mandal2023performative, rankperformative}, the learner operates in a policy-dependent environment, where the reward and transition functions are influenced by the learner's policy.
A canonical example of such a setting is an RL system involving human users, such as RL-based recommender systems or chatbots: the RL policy affects user preferences, which in turn alters user engagement and behavior, thereby modifying the RL environment.

For an offline learner, {\em performativity} represents a specific type of distribution shift. Similar to standard reinforcement learning (RL), where the learner's policy influences the data it encounters due to the sequential decision-making process, performativity introduces an additional layer of complexity. The learner not only affects the observed data but also alters the underlying data generation process through its policy choices. This dual influence complicates the learning process, impacting the learner's ability to generalize effectively.



To alleviate this challenge prior work places sensitivity assumptions, formalized as Lipschitz conditions, which limit the extent the learner's policy can influence the reward and transition functions of the underlying environment. These conditions enable convergence guarantees to a performatively stable policy during repeated retraining. This process involves optimizing a regularized RL objective after each deployment round to update the current policy. In the finite data regime, the retraining step solves for a saddle point of a convex-concave objective, which approximates the Lagrangian of the regularized reinforcement learning problem. 


In this paper, we additionally consider a third type of distribution shift, specifically induced by an adversary capable of corrupting training data. This scenario is particularly relevant in practical settings where performativity plays a role. For example, recommendation systems are vulnerable to Sybil attacks, while chatbots can be susceptible to poisoning poisoning attacks (e.g., see the Tay.ai incident \citep{neff2016talking}). 
%
%
Therefore, it is crucial to explore strategies for achieving corruption-robustness in performative environments.


 
Our goal is to extend algorithmic and convergence results from prior work on perfomrative RL by allowing an $\epsilon$ fraction of data points in training data to be corrupted by adversarial noise---commonly referred to as the {\em Huber $\epsilon$-contamination model}. We recognize two main challenges: 
\begin{itemize}
    \item First, this involves solving a convex-concave optimization problem where access to a given objective function is corrupted. For instance, 
    gradient-based methods, which are commonly used for convex-concave optimization, rely on first-order gradient oracles to access the objective. 
    However, existing guarantees for these methods typically assume that the outputs of gradient oracles are clean and not adversarially corrupted.  
    \item Second, generic robust estimators, such as those used for estimating gradients, do not account for the problem structure, often making them impractical in challenging high-dimensional settings.
\end{itemize}

\textbf{Contributions:} Our work aims to resolve these challenges within the performative RL framework. Our contributions are as follow:  
\begin{itemize}
    \item We propose a robust version of Optimistic Optimistic Follow the Regularized Leader (OFTRL) for optimizing convex-concave objectives under corruption  and provide theoretical analysis of thereof. This analysis shows that the robust OFTRL achieves optimal convergence rates and information-theoretically optimal terminal error. These results are of an independent interest and complement the concurrent work on gradient-based algorithms for convex-concave optimization under corrupted gradients. See the related work section for more details. 
    \item We extend the existing algorithmic approach to performative reinforcement learning, making it robust against corruption. More specifically, we propose a novel repeated retraining approach based on robust OFTRL and a novel coordinate-wise robust mean estimator for estimating the gradients. The robust mean estimator is particularly suited for performative RL. We theoretically analyze our repeated retraining approach, showing that it exhibits last-iterate convergence to an approximately stable policy, with the approximation error that scales linearly with the square root of the corruption level $\epsilon$. 
    \item Using a simulation-based experimental testbed, we showcase the importance of accounting for corrupted gradients, and the efficacy of our approach. 
\end{itemize}

\section{Related Work}\label{sec.related_work}

We recognize several lines of works related to this paper: {\em performative prediction and RL}, {\em corruption-robust offline RL}, and {\em convex-concave optimization}, and {\em convex optimization under corrupted gradients}. 

\textbf{Performative Prediction and RL}: Performative prediction models data distribution shifts due to the model deployment~\cite{perdomo2020performative}. Much prior work has studied convergence properties of algorithms to different solution concepts, including performative stability~\cite{perdomo2020performative, mendler2020stochastic} and performative optimality~\cite{izzo2021learn,miller2021outside}. In recent years, other variants of the canonical setting have been proposed, including multi-player variants~\cite{narang2023multiplayer,piliouras2023multi}, variants that consider more nuanced state-dependent distribution shifts~\cite{brown2022performative, li2022state}, or variants that introduce constraints~\cite{yan2024zero} or bilevel optimization~\cite{lu2023bilevel}. 
%
%
The closest to our setting is the work of 
\citep{mandal2023performative}, who introduced the concept of performativity in reinforcement learning. 
The performative RL framework relates to Stackelberg stochastic games~\citep{letchford2012computing,zhong2021can}, in which a principal agent commits to a policy to which follower agents respond—making the principal's effective environment performative. However, the performative RL framework abstracts away from game-theoretic considerations, as it does not model performativity through game-theoretic agents.
\cite{rankperformative} extends the performative RL framework by considering gradual environment shifts, akin to those considered in the performative precision settings of~\cite{brown2022performative, li2022state}.
%
Our paper contributes to the extensive literature on performative prediction~\cite{hardt2023performative} by introducing corruption-robustness to performative RL.

\textbf{Corruption-Robust Offline RL}: 
From a technical perspective, our results build on the analysis~\cite{mandal2023performative}, who adapted the minimax optimization problem~\citep{ZHH+22} for the offline setting of performative RL. Hence, our work relates to the vast literature on corruption robustness in offline RL~\citep{zhang2022corruption, wu2022copa, ye2024corruption, nika2024corruption, mandal2024corruption}. 
However, these works do not utilize corruption-robust minimax optimization in their frameworks, nor do their underlying framework model performativity effects.  
An additional discussion of how the bounds in some of these works compares to ours is provided in the convergence analysis of our approach. 


\textbf{Convex-Concave Optimization}: 
Our approach relies on convex-concave optimization. There has been a vast literature on this topic, from the \emph{fictitious play} algorithm~\citep{Robinson51} and the extra-gradient method~\citep{Korpelevich76, Tseng95} for solving bilinear optimization problems, to Gradient Ascent Descent-based algorithms for the general convex-concave optimization problem~\cite{Nemirovski04, Nesterov07, Tseng08, NO09, MOP19, MOP20}. Most relevant to our setting are works on convex-concave optimization that assume inexact oracles.
\citep{juditsky2011solving, huang2022new} consider biased stochastic gradient oracles. In \citep{beznosikov2020gradient},  a zeroth-order biased oracle with stochastic and bounded deterministic noise is assumed and \cite{dvinskikh2022gradient} consider a zeroth-order oracle corrupted by adversarial noise. 
 Our work differs from these results as our setting has a bounded gradient corruption (bias) and we provide guarantees for the actual error of our algorithm instead of the expected.
Arguably, the closest related work on convex-concave optimization to ours is the concurrent work of  \cite{zhang2024optimal}. They provide convergence guarantees under adversarial  noise for smooth convex-concave functions, but achieve this through a different algorithm. Hence, our results on convex-concave optimization  complement theirs. Our information-theoretic lower bound on the duality gap has a different flavor than the lower bounds in \cite{zhang2024optimal}; the latter focuses on notions related to algorithmic {\em reproducibility}. Hence, our lower bound and the analysis we used for deriving it is novel. 
 
\textbf{Convex Optimization under Corrupted Gradients}: In convex optimization gradient corruption has been more extensively studied \citep{polyak1987introduction, d2008smooth, devolder2013exactness, devolder2014first}. From this line of research most relevant are the works of \cite{https://doi.org/10.1111/rssb.12364} and \cite{pmlr-v139-wang21r}, who study corruption in gradient descent due to poisoning attacks with applications to machine learning, as well as works of~\cite{ahn2022reproducibility}, that study reproducibility in optimization in the face of inexact gradients.


\section{Preliminaries}\label{sec.preliminaries}

In this section, we provide the necessary background. Our main approach builds on convex-concave optimization under corrupted gradients. 
More specifically, we 
consider smooth convex-concave objectives. We propose a robust version of  Optimistic Follow the Regularized Leader (OFTRL), and provide a convergence guarantee for it. In particular, we show that it exhibits $O(1/T)$ convergence rate for the duality gap. This result and the analysis are of  independent interest, complementing prior work that studied convex-concave analysis under gradient corruption. 


\textbf{Notation.} In the following sections, we use $\|\cdot\|$ to denote the $L_2$ norm $\|\cdot\|_2$, $[N]$ to denote the set $\{1,2,...N\}$ and $\Pi_{\mathcal{X}}(x)$ to denote the projection of a vector $x$ on a set $\mathcal{X}$.

\subsection{Convex-concave optimization}

We consider the following constrained minimax problem 
\begin{equation} \label{minmax}
\min\limits_{x\in \mathcal{X}}\max\limits_{y\in \mathcal{Y}}  f(x,y),
\end{equation} 
where $f$ is a convex-concave function,  and $\mathcal{X}$ and $\mathcal{Y}$ are convex bounded domains. We denote the upper bounds to the radius of domains $\mathcal{X}$ and $\mathcal{Y}$ by $D_X$ and $D_Y$ respectively, i.e., $\max\limits_{x\in\mathcal{X}} \|x\| \leq D_X$ and  $\max\limits_{y\in\mathcal{Y}} \|y\| \leq D_Y$. We assume that gradients of this function are bounded, i.e., $\max\limits_{x\in\mathcal{X},y\in\mathcal{Y}}\|\nabla_x f(x,y)\| \leq G_X$ and $\max\limits_{x\in\mathcal{X},y\in\mathcal{Y}}\|\nabla_y f(x,y)\| \leq G_Y$, for some constants $G_X, G_Y$. These assumptions are satisfied in the objective function that we will consider in the sections on perfomrative RL.

\subsection{Contamination Model}

We are interested in designing a robust optimization method for finding a saddle point of $f$ which only has access to $f$ through noisy (first order) gradient oracles with bounded noise norm. We assume we are given a sampling procedure that generates unbiased gradient samples an $1-\epsilon$ fraction of the time and adversarial samples (potentially with unbounded corruption) otherwise. This is a strong contamination model known as Huber contamination. 
We aim to design an optimization method that is robust in the sense that: a) its convergence guarantees have an information theoretically optimal dependence on the noise norm of the inexact gradient oracle, matching our lower bound in \ref{thr:lb} and b) it deploys robust gradient estimators that trim the unbounded corruption down to a bounded error.

\subsection{Smooth Convex-Concave Objectives}

The convex-concave objective that we will study in the sections on performative RL satisfies smoothness. Hence, in this section, we consider a class of convex-concave functions $f$ that are smooth. In particular, we assume that for all $x\in \mathcal{X}$ and $y \in \mathcal{Y}$ functions $\nabla_x f(\cdot,y), \nabla_x f(x,\cdot), \nabla_y f(\cdot,y)$ and $ \nabla_y f(x,\cdot)$ have Lipschitz constants $L_{XX}, L_{XY}, L_{XY}$ and $L_{YY}$ respectively, w.r.t. $\ell_2$ norm. Similarly to the setting of exact gradient oracles, these smoothness conditions enable us to achieve a better convergence rate. 

\textbf{Robust OFTRL.} To find a saddle point of $f$, we propose a robust version of gradient-based Optimistic Follow the Regularized Leader, as defined in Algorithm \ref{alg:aoftrl}. The algorithm follows standard OFTRL steps (e.g., see \citep{orabona2019modern}).
It alternates between updating $x_t$ and $y_t$, in optimizing for each a regularized objective. Regularizer $\psi_X$ (resp. $\psi_Y$) is   $\lambda_X$ (resp. $\lambda_Y$) strongly convex and bounded over $\mathcal X$ (resp. $\mathcal Y$). For instance $\psi_X(x)$ could be equal to $\frac{\lambda_X}{2}\|x\|^2$ and $\psi_Y(y)$ could be equal to $\frac{\lambda_Y}{2}\|y\|^2$. Importantly, the algorithm utilizes robust gradient estimates in line 7. While adversarial samples can potentially have unbounded corruption, prior work has shown that there exist robust estimators for the Huber $\epsilon$-contamination model (e.g. \cite{diakonikolas2020outlier}) that can filter the samples and guarantee that the gradient estimation has a bounded error w.r.t. the true gradient and this error scales with $\epsilon$.
For the analysis in the next subsection, 
we will assume black box access to a (robust) gradient estimation oracle upon query on some point $(x_t, y_t)$.
%
For the results on performative RL, we provide a problem-specific robust estimator. 

\begin{algorithm}
\begin{algorithmic}[1]
\caption{Robust OFTRL}\label{alg:aoftrl}
\STATE Initialize $\alpha,b,c>0$
\STATE$(\lambda_X,\lambda_Y)  \gets 3 (L_{X X}+L_{X Y} \alpha+b,\ L_{Y Y}+L_{X Y} / \alpha+c)$ 
\STATE$(g_{X,0},\ g_{Y,0})\gets(\textbf{0},\textbf{0})$
\FOR{$t=1,...T$}
  \STATE $x_t\gets\argmin\limits_{x\in\mathcal{X}}\  \psi_X(x)+\langle g_{X,t-1},x\rangle+\sum\limits_{i=1}^{t-1}\langle g_{X,i},x\rangle$ 
  \STATE $y_t\gets\argmin\limits_{y\in\mathcal{Y}}\  \psi_Y(y)+\langle g_{Y,t-1},y\rangle+\sum\limits_{i=1}^{t-1}\langle g_{Y,i},y\rangle$ 
  \STATE Calculate robust estimations $g_{X,t}$ and $g_{Y,t}$ of
   $\nabla_x f(x_t,y_t)$ and $-\nabla_y f(x_t,y_t)$ respectively 
 \ENDFOR
 \STATE $(\bar{x},\ \bar{y}) \gets (\frac{1}{T}\sum\limits_{t=1}^T x_t,\ \ \frac{1}{T}\sum\limits_{t=1}^T y_t)$ 
 \STATE \textbf{Return} $\bar{x}, \bar{y}$ 
 \end{algorithmic}
\end{algorithm}  

\subsection{Analysis of Robust OFTRL}

Next, we analyze the convergence guarantees of Robust OFTRL. 
Denoting the errors of robust gradient estimation 
as $\zeta_t^X= g_{X,t}-\nabla_x f(x_t,y_t)$ and $\zeta_t^Y= g_{Y,t}-\nabla_y f(x_t,y_t)$, we obtain the following result. 
\begin{theorem} \label{thr:aoftrl}
The output $(\bar{x},\bar{y})$ of Algorithm \ref{alg:aoftrl} satisfies for all $ x\in\mathcal{X}$ and $ y\in\mathcal{Y}$  : 
\begin{align*}
    &f\left(\bar{x}, y\right) -f\left(x, \bar{y}\right) \leq 
    \frac{\psi_X (x)+\psi_Y (y)}{T}+ \frac{\| \nabla_x f(x_1,y_1)  \|^2}{\lambda_X \cdot T} \\
    &+\frac{\|\nabla_y f(x_1,y_1)\|^2}{\lambda_Y \cdot T} +\frac{6D_X}{T}\sum_{t=1}^{T} \|\zeta_{t}^X\| + \frac{6D_Y}{T}\sum_{t=1}^{T} \|\zeta_{t}^Y\|.
\end{align*}

\end{theorem}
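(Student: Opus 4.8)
The plan is to treat the two updates in Algorithm~\ref{alg:aoftrl} as two coupled instances of optimistic online learning and to bound the duality gap of the averaged iterate by the sum of their individual regrets, measured against the \emph{true} gradients. First I would use convexity of $f(\cdot,y)$, concavity of $f(x,\cdot)$, and Jensen's inequality to write, for any comparators $x\in\mathcal{X}$, $y\in\mathcal{Y}$,
\begin{equation*}
f(\bar x, y) - f(x, \bar y) \le \frac{1}{T}\sum_{t=1}^T \big(f(x_t, y) - f(x, y_t)\big).
\end{equation*}
Inserting $\pm f(x_t,y_t)$ and applying the first-order (sub)gradient inequalities for the convex-concave $f$ gives the standard reduction
\begin{equation*}
f(\bar x, y) - f(x, \bar y) \le \frac{1}{T}\Big(\sum_{t=1}^T \langle \nabla_x f(x_t,y_t),\, x_t - x\rangle + \sum_{t=1}^T \langle -\nabla_y f(x_t,y_t),\, y_t - y\rangle\Big),
\end{equation*}
so the gap is controlled by the regret of the $x$-player (losses $\nabla_x f(x_t,y_t)$) plus that of the $y$-player (losses $-\nabla_y f(x_t,y_t)$). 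It remains to bound each of these two regrets.

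Next I would pass from the true gradients to the corrupted gradients $g_{X,t},g_{Y,t}$ that the algorithm actually plays, writing $\nabla_x f(x_t,y_t) = g_{X,t} - \zeta_t^X$ and treating the $y$-player symmetrically (its target is $-\nabla_y f$ and its estimation-error norm is $\|\zeta_t^Y\|$). This splits the $x$-regret into (A)~$\sum_t \langle g_{X,t},\, x_t - x\rangle$, the regret in the played gradients, and (B)~$-\sum_t \langle \zeta_t^X,\, x_t - x\rangle$. Term (B) is bounded by Cauchy-Schwarz and the domain radius $\|x_t - x\|\le 2D_X$, contributing $2D_X\sum_t\|\zeta_t^X\|$. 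For (A) I would invoke the standard optimistic-FTRL regret identity (as in \citep{orabona2019modern}) for the $\lambda_X$-strongly convex regularizer $\psi_X$ with hint $g_{X,t-1}$, which yields a bound of the form
\begin{equation*}
\sum_{t=1}^T \langle g_{X,t},\, x_t - x\rangle \le \psi_X(x) + \sum_{t=1}^T \Big(\langle g_{X,t} - g_{X,t-1},\, x_t - x_{t+1}\rangle - \tfrac{\lambda_X}{2}\|x_t - x_{t+1}\|^2\Big).
\end{equation*}
I would decompose the prediction-error term via $g_{X,t}-g_{X,t-1} = \big(\nabla_x f(x_t,y_t) - \nabla_x f(x_{t-1},y_{t-1})\big) + \big(\zeta_t^X - \zeta_{t-1}^X\big)$. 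The corruption part is again bounded by the diameter, $\langle \zeta_t^X-\zeta_{t-1}^X,\, x_t-x_{t+1}\rangle \le 2D_X(\|\zeta_t^X\|+\|\zeta_{t-1}^X\|)$; summing, each $\|\zeta_t^X\|$ appears at most twice, yielding an extra $4D_X\sum_t\|\zeta_t^X\|$. Together with (B) this produces exactly the coefficient $6D_X$ (and symmetrically $6D_Y$) of the statement.

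The main obstacle is showing that the remaining \emph{smoothness} part of the prediction-error terms is completely absorbed by the negative quadratic terms $-\tfrac{\lambda_X}{2}\|x_t-x_{t+1}\|^2$ and $-\tfrac{\lambda_Y}{2}\|y_t-y_{t+1}\|^2$. For $t\ge 2$ I would bound $\|\nabla_x f(x_t,y_t)-\nabla_x f(x_{t-1},y_{t-1})\| \le L_{XX}\|x_t-x_{t-1}\| + L_{XY}\|y_t-y_{t-1}\|$ using the stated Lipschitz constants, then pair each resulting term with a movement term through Young's inequality, using the free parameter $\alpha$ to split the cross term carrying $L_{XY}$ between the $x$- and $y$-budgets. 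After summing over $t$ and combining the two players' analyses, the choice $\lambda_X = 3(L_{XX}+L_{XY}\alpha+b)$ and $\lambda_Y = 3(L_{YY}+L_{XY}/\alpha+c)$ from line~2 is precisely what makes the aggregate of all quadratic movement terms nonpositive, so they may be discarded; this matching of the several families of quadratic terms across both players (the factor $3$ and the $\alpha$-weighting) is where I expect the bulk of the care, with the slack $b,c>0$ ensuring strict dominance. Finally I would handle the boundary round $t=1$, where $g_{X,0}=\mathbf{0}$ leaves no previous gradient for smoothness: a single Young's inequality on $\langle \nabla_x f(x_1,y_1),\, x_1-x_2\rangle$ spends part of the $t=1$ strong-convexity budget and leaves the residual $\tfrac{1}{\lambda_X}\|\nabla_x f(x_1,y_1)\|^2$ (and symmetrically for $y$). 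Collecting $\psi_X(x)+\psi_Y(y)$, the two initial-gradient terms, and the corruption terms, dividing by $T$, and combining with the reduction of the first paragraph gives the claimed bound.
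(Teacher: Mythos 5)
Your proposal is correct and follows essentially the same route as the paper's proof: the same reduction of the duality gap to the two players' regrets against the true gradients, the same split of the corruption contribution into $2D_X$ (from converting true-gradient regret to played-gradient regret) plus $4D_X$ (from the noise part of the optimistic prediction error), and the same use of Fenchel--Young plus smoothness with the $\alpha$-weighted choice of $\lambda_X,\lambda_Y$ to cancel the quadratic movement terms, with the $t=1$ round yielding the initial-gradient residuals. No gaps to report.
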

The proof of this theorem is based on the results from \citep{orabona2019modern}, which consider the exact gradients.
We see that the algorithm converges to an approximate saddle point with the $\frac{1}{T}$ convergence rate and the approximation error, i.e., asymptotic duality gap, dependent on the errors of robust gradient estimation. 
The $\frac{1}{T}$
convergence rate is optimal for smooth convex-concave problems, as shown in prior work~\citep{ouyang2021lower}.
%
Next, we show that the dependence of the asymptotic duality gap on the gradient noise and domain radius 
is information theoretically optimal. 

Let $\mathcal{A}$ be some deterministic algorithm that estimates saddle points. 
$\mathcal{A}$ has only access to a noisy gradient oracle of $f$ that can be called $T$ times. At timestep $t$ the algorithm chooses a point $(x_t,y_t)$ and the oracle returns  $g_x(x_t,y_t)=\nabla_x f(x_t,y_t)+\zeta^X_t$ and   $g_y(x_t,y_t)=\nabla_y f(x_t,y_t)+\zeta^Y_t$. The noise is bounded as follows: $\|\zeta^X_t\|\leq Z_X\ \text{ and }\ \|\zeta^Y_t\|\leq Z_Y\ \forall t \in[T]$. The algorithm has knowledge of the constants $Z_X$ and $Z_Y$ but does not know the exact noise values. In this setting we can derive the following lower bound: 
\begin{theorem}\label{thr:lb}
    Consider a deterministic algorithm $\mathcal{A}$ that estimates saddle points of convex concave functions $f(x,y)$ over the domain $\|x\|\leq D_X$, $\|y\|\leq D_Y$, where $x$ and $y$ are $d$-dimensional vectors, using $T$ adaptive queries on noisy gradient oracles with $\|\zeta^X_t\|\leq Z_X$ and  $\|\zeta^Y_t\| \leq Z_Y$  for all $t\in[T]$ and $Z_Y\leq D/2$, $Z_X\leq D/2$, where $D=min\{D_X,D_Y\}$. For any such $\mathcal{A}$:
    \begin{itemize}
    \item[ ] There exists a convex concave (bilinear) function $f(x,y)$ and a noise sequence realisation, such that $\mathcal{A}$ returns a point $(x_0,y_0)$ that has distance at least $\frac{Z_X+Z_Y}{\sqrt{2}}$ from any saddle point of $f$ and duality gap $f(x_0,y)-f(x,y_0)\geq\frac{1}{4}Z_Y D_Y+\frac{1}{4}Z_X D_X$ for some pair $(x,y)$ inside the domain $\|x\|\leq D_X$, $\|y\|\leq D_Y$.
    \end{itemize} 
\end{theorem}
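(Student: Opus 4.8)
The plan is to establish the lower bound through an indistinguishability (fooling) argument built on a small family of bilinear problems whose noisy gradient transcripts can be made to coincide, forcing the deterministic algorithm $\mathcal{A}$ to return the same point for every member of the family. Concretely, for sign choices $\sigma,\tau\in\{+1,-1\}$ I would take the bilinear functions $f_{\sigma\tau}(x,y)=xy-\tau Z_X\,x-\sigma Z_Y\,y$ acting on one coordinate of $x$ and one of $y$ (with the remaining coordinates set to zero to reach dimension $d$), whose unique saddle point is the interior point $(\sigma Z_Y,\tau Z_X)$; the hypotheses $Z_X,Z_Y\le D/2$ guarantee that each such saddle lies inside the domain. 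I would then fix the oracle to answer every query $(x_t,y_t)$ with $(g_x,g_y)=(y_t,x_t)$, i.e.\ as if the linear terms were absent. For each $f_{\sigma\tau}$ this corresponds to the constant corruption $\zeta^X_t=\tau Z_X$, $\zeta^Y_t=\sigma Z_Y$, which is admissible since $\norm{\zeta^X_t}=Z_X$ and $\norm{\zeta^Y_t}=Z_Y$. Hence all four problems generate identical transcripts and $\mathcal{A}$ outputs the same $(x_0,y_0)$.

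The adversary then selects the member whose saddle lies in the quadrant opposite to $(x_0,y_0)$, namely $\sigma^\star=-\sign(x_0)$ and $\tau^\star=-\sign(y_0)$ (breaking ties by either sign when a coordinate vanishes). The point of this choice is that the displacement from $(x_0,y_0)$ to the saddle $s^\star=(\sigma^\star Z_Y,\tau^\star Z_X)$ has coordinates of matching sign, so that $\abs{x_0-\sigma^\star Z_Y}=\abs{x_0}+Z_Y$ and $\abs{y_0-\tau^\star Z_X}=\abs{y_0}+Z_X$. For the distance claim this immediately gives $\norm{(x_0,y_0)-s^\star}=\sqrt{(\abs{x_0}+Z_Y)^2+(\abs{y_0}+Z_X)^2}\ge\sqrt{Z_X^2+Z_Y^2}$, and the stated bound follows from the quadratic-mean/arithmetic-mean inequality $\sqrt{Z_X^2+Z_Y^2}\ge (Z_X+Z_Y)/\sqrt2$.

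For the duality-gap claim I would evaluate the gap of the chosen bilinear function in closed form: maximizing over $y$ and minimizing over $x$ separately shows that, at the extremal comparison point, $f_{\sigma^\star\tau^\star}(x_0,y)-f_{\sigma^\star\tau^\star}(x,y_0)$ equals $\abs{x_0-\sigma^\star Z_Y}\,D_Y+\abs{y_0-\tau^\star Z_X}\,D_X$ plus a bilinear cross term. Substituting the matching-sign identities turns the leading part into $(\abs{x_0}+Z_Y)D_Y+(\abs{y_0}+Z_X)D_X$, while the cross term is bounded below by $-(Z_X\abs{x_0}+Z_Y\abs{y_0})$. Using $Z_X\le D_Y/2$ and $Z_Y\le D_X/2$ to absorb the $\abs{x_0}$ and $\abs{y_0}$ contributions leaves a clean residual of $Z_XD_X+Z_YD_Y$, which is even stronger than the claimed $\tfrac14 Z_YD_Y+\tfrac14 Z_XD_X$.

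I expect the crux to be the simultaneity requirement, that the same $f$ witness both the distance and the gap bounds. A single antipodal pair of functions is not enough: the algorithm can place $(x_0,y_0)$ so as to align one coordinate with a saddle and thereby kill the large $D_Y$- or $D_X$-weighted term in that function's gap while still sitting far from it, so the farther saddle need not have the larger gap. The sign-parameterized family is precisely what removes this loophole, since the opposite-quadrant member keeps both weighted terms bounded away from zero at once. The remaining steps are routine: confirming admissibility of the corruption under the noise budget, verifying via $Z\le D/2$ that every saddle remains in the feasible set (so that ``distance to any saddle'' refers to the unique interior saddle), and treating the degenerate cases $x_0=0$ or $y_0=0$.
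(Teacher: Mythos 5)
Your proof is correct and takes a genuinely different route from the paper's. The paper works with three full-dimensional scenarios, $f_1=\langle x,y\rangle+\langle x,\zeta_X\rangle+\langle y,\zeta_Y\rangle$, $f_2=\langle x,y\rangle-\langle x,\zeta_X\rangle-\langle y,\zeta_Y\rangle$ and $f_3=\langle x,y\rangle$, all made transcript-indistinguishable, and then proves the two claims \emph{separately}: the distance bound comes from the antipodal pair $(f_1,f_2)$ (no point is within $\sqrt{Z_X^2+Z_Y^2}$ of both saddles), while the duality-gap bound comes from a four-way case analysis on $\|x_0\|\lessgtr\|\zeta_Y\|$, $\|y_0\|\lessgtr\|\zeta_X\|$ that plays $f_1$ against $f_3$ and loses the factor $\tfrac14$ in the process. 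Your sign-parameterized family $f_{\sigma\tau}$ with the adaptive opposite-quadrant choice $\sigma^\star=-\sign(x_0)$, $\tau^\star=-\sign(y_0)$ buys two things the paper's argument does not: a \emph{single} witness function for which both the distance and the gap bounds hold simultaneously (which is literally what the ``and'' in the theorem statement asserts, whereas the appendix version splits the claim into two bullets with possibly different witnesses), and a gap of $Z_YD_Y+Z_XD_X$ rather than a quarter of it, with no case analysis. Your identification of the ``crux'' is exactly the loophole the paper's case analysis is patching. One small point to fix: for $d>1$, if $f_{\sigma\tau}$ acts only on the first coordinates, the oracle must return $(y_t[1]e_1,\,x_t[1]e_1)$ (the gradient of $x[1]y[1]$) rather than the full vectors $(y_t,x_t)$; otherwise the corruption $\zeta^X_t$ picks up the remaining coordinates of $y_t$ and can exceed the budget $Z_X$. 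Your parenthetical ``as if the linear terms were absent'' indicates the right intent, but the displayed oracle answer should be stated in that truncated form (or, alternatively, you can lift your construction to the full-dimensional bilinear form as the paper does, choosing the perturbation directions antiparallel to $x_0$ and $y_0$).
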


Theorem \ref{thr:aoftrl} and Theorem \ref{thr:lb} provide a rather complete characterization of the convergence properties of robust OFTRL under corrupted gradients. To the best of our knowledge, these characterization results are novel (see the related work section for comparison to prior work). Moreover, we can prove similar results for Optimistic Mirror Descent Ascent, which is known to have an exponential last iterate convergence rate for objectives that satisfy the \textit{Metric Subregularity} (MS) condition \cite{wei2021linear}. We provide this analysis in the appendix. The latter results are novel and of independent interest for minimax optimization and they could be useful for performative RL if we prove that the Lagrangian \eqref{eq.perf_rl.langrangian} satisfies the MS condition or if we simply make it satisfy MS by adding regularization on both variables.

\section{Formal Setting}


%

The focus of this work is on the performative reinforcement learning framework, introduced by \cite{mandal2023performative}.

\subsection{Policy-dependent Markov Decision Process}

The performative reinforcement learning framework considers a policy-dependent Markov Decision Process (MDP) defined as tuple  $M(\pi) = (\mathcal{S}, \mathcal{A}, P^\pi, r^\pi, \rho, \gamma)$, where: $\mathcal{S}$ is a finite state space, $\mathcal{A}$ is a finite action space, $\pi: \mathcal{S} \rightarrow \mathcal P (A)$ is a stochastic policy,  $P^{\pi}:\mathcal{S}\times\mathcal{A}\rightarrow\mathcal{P}(\mathcal{S})$ is a transition function, with $P^{\pi}(s, a, s')$ denoting the probability of transition to state $s'$ when action $a$ is taken in state $s$, $r^\pi:\mathcal{S}\times\mathcal{A}\rightarrow \mathbb{R}$ is the reward function, $\rho \in\mathcal{P}(\mathcal{S})$ is the initial state distribution, and $\gamma \in [0,1)$ is the discount factor. We denote $S = |\mathcal{S}|$ and $A = |\mathcal{A}|$, and we assume that rewards are bounded, i.e., $r^\pi(s,a)\leq R$ for some (unknown) constant $R$.
%

\subsection{Performatively Stable Policy} 
To define a solution concept in this framework, we  define the value of policy $\pi$ in $M(\pi')$ given initial state distribution $\rho$ as $V_{\pi'}^{\pi}(\rho) = \expctu{\tau}{\sum_{t=0}^\infty \gamma^{t} \cdot r^{\pi'}(s_t, a_t) | \rho}$,
%
%
where $\tau = (s_0, a_0, s_1, a_1, ...)$ is a trajectory obtained by executing policy $\pi$ in MDP $M(\pi')$.  The solution concept of interest is a {\em performatively stable} policy $\pi_S$, which satisfies: $\pi_{S} \in \argmax_{\pi} V^{\pi}_{\pi_S}(\rho)$. 

\subsection{Occupancy Measures}
We denote by $d^{\pi}(s, a)$ the occupancy measure of policy $\pi$ in MDP $M(\pi)$, i.e., 
$d^{\pi} = \expctu{\tau}{\sum_{t=0}^\infty \gamma^{t} \cdot \ind{s_t = s, a_t = a} | \rho}$.
%
%
Occupancy measure $d^\pi$ satisfies the Bellman flow constraint
$$\forall s: \rho(s) +\gamma \sum\limits_{s',a} d^\pi(s',a) P^\pi(s',a,s)= \sum\limits_{a}d^\pi(s,a).$$
%
%
%
%
%
For a generic $d \ge 0$, we define policy $\pi^{\downarrow d}$ as 
\begin{align}\label{eq.occupancy_policy}
    \pi^{\downarrow d}(s, a) = \begin{cases}\frac{d(s, a)}{\sum_{a'} d(s, a')} &\text{ if } \sum_{a'} d(s, a') > 0 \\
    \frac{1}{A} &\text{ othw. }
    \end{cases}.
\end{align}
If $d$ is a valid occupancy measure in $M(\pi^{\downarrow d})$ (i.e., if it satisfies the Bellman flow constraints), the occupancy measure of $\pi^{\downarrow d}$, i.e., $d^{\pi^{\downarrow d}}$, is equal to $d$. In general, $d^{\pi^{\downarrow d}}$ and $d$ may differ. The occupancy measure of a performatively stable policy is denoted by $d_{S}$.

\subsection{Data Generation Process}
We are interested in a finite sample, offline RL regime. 
%
The data generation process is assumed to be i.i.d.: $(s_i, a_i)$ is sampled from normalized $d_{n}$, i.e., $(s_i, a_i) \sim (1-\gamma) \cdot d_{n}$, $r_i = r_n(s_i, a_i)$ and $s'_{i}$ is sampled from $P_n$ transition kernel, i.e., $s'_{i} \sim P_n(s_i, a_i, \cdot)$. We also make a coverage assumption that $d_{n}(s,a)$ is positive for all $s\in\mathcal{S}, a \in\mathcal{A}$. This assumption can be satisfied if the dynamics make all states reachable and we mix some exploratory random policy with $\pi_n$. In particular, we assume that if $d(s,a)\geq c$  then $\forall (s,a)\in\mathcal{S}\times\mathcal{A}$, $d^{\pi^{\downarrow d}}(s,a)\geq B(c)>0$. For the rest of this work, all deployed policies will satisfy this condition and thus the coverage assumption will hold with constant $B(c)$.


%
\subsection{Contamination Model}
%
We consider the Huber $\epsilon$-contamination model, where an $\epsilon$ fraction of data points can be corrupted. In this case, both transition and reward can be corrupted. In particular, when the adversary corrupts a sample $(s_i, a_i, s_i', r_i)$,
next state $s_i'$ can be replaced by any $s_c \in \mathcal{S}$ and reward $r_i$ can be placed by any $r_c \in \mathbb R$. We assume that $s_i$ and $a_i$ are not corrupted. The latter assumption is made for technical simplicity. In the appendix, at the end of proof of Theorem \ref{thr:robust_grad}, we show how the assumption could be avoided and still derive similar results with a more complicated analysis.







\section{Corruption-Robust Performative RL}\label{sec.robust_performative_rl}
 
We follow prior work on performative RL, and study repeated retraining and its convergce to an approximate performatively stable point. In repeated retraining, the policy is retrained after each deployment round. In a canonical setting, we have access to the MDP model $M(\pi_{n})$, where $\pi_{n}$ is the policy deployed in round $n$. 
We will denote this MDP model by $M_n = M(\pi_{n})$, and its reward and transition function by $r_n$ and $P_n$, respectively. Assuming access to $M_n$, 
repeated retraining optimizes the following regularized RL objective after each round $n$: 
\begin{align*}
    d_{n+1}^* \in \arg\max_{d \in \mathcal C(M_n)} \sum_{s, a} d(s, a) \cdot r_n - \frac{\lambda}{2} \cdot \norm{d}_2^2,
\end{align*}
where $\mathcal C(M_n)$ is the space of occupancy measures compatible with the MDP $M_n$, satisfying: 
\begin{align*}
     \forall s: \rho(s) +\gamma \sum\limits_{s',a} d(s', a) P_n(s',a,s)= \sum\limits_{a}d(s,a).
\end{align*}
Note that $\pi_n$ is obtained from $d_{n}^*$, and is defined as $\pi_n := \pi^{\downarrow d_{n}^*}$. 
To directly optimize from data, \citet{mandal2023performative} consider the following minimax optimization problem:
%
\begin{align}\label{eq.perf_rl.langrangian}
    d_{n+1}^*, h_{n+1}^* \in \arg\max_{d \ge 0} \arg\min_{h} \mathcal{L} (d, h, M_{n}),
\end{align}
where objective $\mathcal{L}$ is the Lagrangian of the regularized RL objective, defined as:
\begin{align*}
    \mathcal{L} (d, h, M_n) &= -\frac{\lambda}{2}\norm{d}_2^2 + \sum_s h(s)\rho(s) + \sum_{s,a} d(s, a) \times 
    \\&  \left [ r_n(s, a) - h(s) + \gamma \sum_{s'}P_n(s, a, s') h(s') \right ].
\end{align*}
%
Given dataset $D_n$ containing $m$ samples $(s_i, a_i, s_i', r_i)$, we can replace $\mathcal L$ with its empirical version $\mathcal {\hat L}$:
\begin{align*}
    \mathcal{\hat L} (d, h, M_n) &= -\frac{\lambda}{2}\norm{d}_2^2 + \sum_s h(s)\rho(s)\\ 
    &+ \sum_{(s,a, r, s') \in D_n} \frac{d(s, a)}{d_n(s, a)} \cdot \frac{r - h(s) + \gamma \cdot h(s')}{m \cdot (1-\gamma)},
\end{align*}
where $m$ is the size of $D_n$ and $d_n(s, a)$ is the occupancy measure of policy $\pi_n$ in $M_n$. 
This allow us to directly optimize from data and establish last-iterate convergence guarantees from finite samples. 

\begin{remark}
In the above framework we assumed for simplicity knowledge of the occupancy measure $d_{n}$ that generates the samples. In practice, we can estimate it up to arbitrary accuracy from the samples using Monte-Carlo. 
\end{remark}





\subsection{Robust Repeated Retraining}

We build on the repeated retraining approach described above, but consider the case where dataset $D_n$ is corrupted, according to the corruption model specified in the formal setting. Our approach is depicted in Algorithm \ref{alg:rob_rr_prl}.

In each round $n$, the algorithm first collects a contaminated data $D_n$ by deploying policy $\pi_n$ in $M(\pi_n)$. The next step is to approximately solve problem \eqref{eq.perf_rl.langrangian} -- given that $D_n$ is corrupted directly utilizing $\mathcal{\hat L}$ instead of $\mathcal L$ may not yield any guarantees. Hence, we apply robust OFTR, described in the preliminaries, with $f = \mathcal L$ and the robust gradient estimators from the next subsection.
Finally, the algorithm calculates new policy $\pi_{n+1}$ by mixing the occupancy measure obtained via robust OFTRL with an exploratory random policy and applying Eq. \eqref{eq.occupancy_policy}.  We do this by adding some positive constant $c$ to $\bar d$.  The mixing step ensures the coverage property for the next iteration will be satisfied.

In the next subsection, we first propose a novel robust gradient estimator of $\mathcal{L}$, which can be combined with robust OFTRL to approximately solve \eqref{eq.perf_rl.langrangian}. We provide guarantees on the estimation error for this estimator that scales with the corruption level $\epsilon$. We then focus on the convegence analysis of Algorithm \ref{alg:rob_rr_prl}, and show that it exhibits last-iterate convergence to an approximately stable policy, with the approximation error proportional to $\sqrt{\epsilon}$.





 \begin{algorithm}
\begin{algorithmic}[1]
\caption{Robust Repeated Retraining}\label{alg:rob_rr_prl}
\STATE $\pi_0(a|s) = 1/|\mathcal{A}|, \forall s,a \in \mathcal{S}\times\mathcal{A}$\\
\FOR{$n=1,..., N$}
  \STATE $D_n \leftarrow$ Sample $d^{\pi_{n}}$ + Huber $\epsilon$-contamination 
  \STATE$ \bar d_{n+1} \leftarrow$ Apply Robust OFTRL with $f = \mathcal L$ and gradient estimators from Section {\em Robust Gradient Estimation} on $D_n$ 
  \STATE $\tilde{d}_{n+1} \leftarrow \bar{d}_{n+1}+c$, where $c > 0$
    \STATE $\pi_{n+1} \leftarrow \pi^{\downarrow \tilde{d}_{n+1}}$, where $\pi^{\downarrow d}$ is defined in Eq. \eqref{eq.occupancy_policy}.
 \ENDFOR
 \STATE \textbf{Return} $\tilde{d}_N$ 
 \end{algorithmic}
\end{algorithm}

\subsection{Robust Gradient Estimation}\label{sec.robust_gradients} 

We now focus on robust estimation of gradients $g_d := \nabla_d \mathcal{L}(d,h, M_n)$ and $g_h := \nabla_h \mathcal{L}(d,h, M_n)$. 
If some of the samples  
are corrupted, naive averaging may not suffice. Therefore, we explore robust alternatives. We propose the following steps:
%
\begin{itemize}
\item For the gradient w.r.t. $d$, given a subset $\{(s_i,s'_i,a_i,r_i)\ |\ i \in [\tilde{m}]\}$ of $D_n$, with corruption level $\epsilon$, we  apply a robust mean estimator to the dataset $D_d:=\{\hat g_d^i\ |\ i \in [\tilde{m}]\}$, where each sample $\hat  g_d^i$ is a single-entry $|\mathcal{S}|\cdot | \mathcal{A} |$-dimensional vector constructed by sample $(s_i,s'_i,a_i,r_i)$ according to the formula $\hat  g_d^i(s,a)=\ind{(s_i,a_i)=(s,a)} \cdot  \frac{\gamma h(s_i') -h(s_i)+r_i}{(1-\gamma)\cdot d_{n}(s_i,a_i)}$. $D_d$ contains both corrupted and clean samples. Each clean sample $\hat g_d^i$ is an unbiased estimator of $g_d+\lambda d$ (proof provided in the appendix). Finally, we add $-\lambda d(s,a)$ to the robust mean of $D_d$.
\item For the gradient w.r.t. $h$, given a subset $\{(s_i,s'_i,a_i,r_i)\ |\ i \in [\tilde{m}]\}$ of $D_n$ (disjoint with that used for $g_d$), with corruption level $\epsilon$, we  apply a robust mean estimator to the dataset $D_h:=\{\hat g_h^i\ |\ i \in [\tilde{m}]\}$, where each sample $\hat 
 g_h^i$ is a $|\mathcal{S}|$-dimensional vector constructed by sample $(s_i,s'_i,a_i,r_i)$ according to the formula $\hat 
 g_h^i(s')=d(s_i,a_i) \frac{\gamma \ind{s'=s_i'} -\ind{s'=s_i}}{(1-\gamma)\cdot d_{n}(s_i,a_i)}$. Each clean sample $\hat g_h^i$ is an unbiased estimator of $g_h-\rho$. 
 Finally, we add $\rho$ to the robust mean of  $D_h$.
 \end{itemize}


\begin{algorithm}
\begin{algorithmic}[1]
\caption{Robust coordinate-wise mean}\label{alg:rob_mean}
\STATE Input: $\{\hat  g_d^i\ |\ i \in [\tilde{m}]\}$, $\epsilon$ 
\FOR{$k=1,..., S\cdot A$}
  \STATE $data \gets \{\hat g_d^1[k],..., \hat g_d^{\tilde{m}}[k]\}$
  \STATE $Med_k \gets median(data)$ 
  \STATE$clean\gets (1-\epsilon)\cdot \tilde{m}\text{ closest }data\text{ entries to }Med_k$ 
  \STATE $\hat{g_d}[k] \gets mean(clean)$
\ENDFOR
 \STATE \textbf{Return} $\hat{g_d}$ 
 \end{algorithmic}
\end{algorithm} 

{\bf Robust Mean Estimators.} As a robust mean estimator we consider any estimator whose error has one (statistical) term, vanishing with the number $\tilde{m}$ of samples and one bias term polynomial to the frequency $\epsilon$ of corrupted samples. The error of a robust estimator should not scale with the magnitude of corruption, especially when corruption can be unbounded, as it can happen in the reward samples in our setting. For the estimation of $g_d$, we use Algorithm \ref{alg:rob_mean}. 
For the estimation of $g_h$ it suffices to apply naive averaging to achieve the kind of result that we wish.
The errors of gradient estimators  are analysed in the following theorem.

\begin{theorem}\label{thr:robust_grad}
     Let us use $\hat{g}_h =\frac{1}{\tilde{m}}\sum\limits_{i=1}^{\tilde{m}} \hat g_h^i$ to estimate $g_h$ and Algorithm \ref{alg:rob_mean} to estimate $g_d$, and assume that the corruption level in the respective datasets is bounded by $\epsilon <0.5$.
    %
    %
    Then with probability at least $1-\delta$ the estimation errors satisfy the following guarantees: 
    \begin{align*}
        \|\hat{g}_h-g_h\|_1 &\leq \underbrace{\frac{4}{(1-\gamma)^2B(c)}\left(\frac{\sqrt{S\ \log(4S/\delta)}}{\sqrt{\tilde{m}}}+\epsilon\right)}_{E_1(\tilde{m},\epsilon,\delta)},
        \\
    \|\hat{g}_d-g_d\|_2 &\leq \underbrace{6\sqrt{SA}\frac{2h_{max}+R}{(1-\gamma)B(c)}\left(\frac{\sqrt{2 \log\left(\frac{4SA}{\delta}\right)}}{\sqrt{\tilde{m}}}+2\epsilon\right)}_{E_2(\tilde{m},\epsilon,\delta)}.
    \end{align*}
\end{theorem}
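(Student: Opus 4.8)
The plan is to treat the two estimators separately, since the corruption behaves very differently for $g_h$ and $g_d$, and in each case to split the estimation error into a \emph{statistical} part coming from the clean samples and a \emph{bias} part coming from the $\epsilon$-fraction of contaminated samples. Note that the two subsets used for $\hat g_h$ and for Algorithm \ref{alg:rob_mean} are disjoint, so the clean samples feeding each estimator are i.i.d.\ and unbiased for $g_h-\rho$ and $g_d+\lambda d$ respectively, as established just before the theorem.

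First, for $\hat g_h$ I would exploit a structural observation: the estimator $\hat g_h^i(s') = d(s_i,a_i)\frac{\gamma\ind{s'=s_i'}-\ind{s'=s_i}}{(1-\gamma)d_n(s_i,a_i)}$ does not involve the reward, so even a corrupted sample only relocates an indicator and therefore stays bounded; using the coverage assumption $d_n\ge B(c)$ and $d(s,a)\le 1/(1-\gamma)$, every entry is at most $1/((1-\gamma)^2 B(c))$ in magnitude, and each sample has only two nonzero coordinates. This is precisely why naive averaging suffices. I would then write $\hat g_h - g_h = \frac{1}{\tilde m}\sum_i \hat g_h^i - (g_h-\rho)$ and decompose it into the reweighted clean-mean deviation, the corrupt-sample contribution, and a reweighting term. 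The last two are each bounded by $\epsilon$ times the maximal single-sample $L_1$ norm, producing the $\tfrac{4\epsilon}{(1-\gamma)^2 B(c)}$ bias. For the statistical term the delicate point is to obtain a $\sqrt S$ rather than an $S$ dependence: summing a plain per-coordinate bound over $S$ coordinates loses a factor $\sqrt S$, so I would instead use the sparsity of the samples, bounding $\sum_{s'}\mathrm{Var}(\hat g_h^i(s'))$ by $\mathbb{E}\norm{\hat g_h^i}_2^2$ and combining a per-coordinate Bernstein inequality with Cauchy--Schwarz (equivalently, an $L_2$ concentration converted via $\norm{\cdot}_1\le\sqrt S\norm{\cdot}_2$). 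A union bound over the $S$ coordinates yields the $\sqrt{S\log(4S/\delta)}/\sqrt{\tilde m}$ factor.

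Second, for $\hat g_d$ the reward enters $\hat g_d^i(s,a) = \ind{(s_i,a_i)=(s,a)}\frac{\gamma h(s_i')-h(s_i)+r_i}{(1-\gamma)d_n(s_i,a_i)}$, so a corrupted sample can be arbitrarily large and averaging is hopeless; here the work is in analyzing Algorithm \ref{alg:rob_mean} coordinate by coordinate. Fix $k=(s,a)$. Clean entries are bounded by $b:=\frac{2h_{max}+R}{(1-\gamma)B(c)}$ (using $\abs{\gamma h(s')-h(s)+r}\le 2h_{max}+R$ and $d_n\ge B(c)$), and since $\epsilon<1/2$ a strict majority of entries are clean, so the coordinate median $Med_k$ lies within the clean band and hence within $O(b)$ of $\mu_k=\mathbb{E}[\hat g_d^i(k)]$. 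Because at least $(1-\epsilon)\tilde m$ clean entries lie within $O(b)$ of $Med_k$, the trimming radius (that of the $(1-\epsilon)\tilde m$ closest entries) is itself $O(b)$; this is the key step, as it \emph{trims the unbounded reward corruption down to a bounded $O(b)$ error}. I would then decompose the retained mean minus $\mu_k$ into (i) the clean empirical-mean deviation, controlled by a per-coordinate Hoeffding/Bernstein bound giving $b\cdot\sqrt{\log(SA/\delta)/\tilde m}$, (ii) the at most $\epsilon\tilde m$ clean entries wrongly excluded, each $O(b)$, and (iii) the at most $\epsilon\tilde m$ corrupt entries retained, each now provably within $O(b)$ of $\mu_k$. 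Terms (ii) and (iii) give the $O(\epsilon b)$ bias. A union bound over the $SA$ coordinates and the passage $\norm{\cdot}_2\le\sqrt{SA}\max_k\abs{\cdot}$ then yields $E_2$.

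I expect the main obstacle to be the coordinate-wise robustness argument for $g_d$: one must argue carefully that, despite unbounded reward corruption and despite most per-coordinate entries being zero when $(s,a)$ is rarely visited, the median stays pinned inside the clean band and the trimming radius stays $O(b)$, so that every retained point---clean or corrupt---has bounded deviation from $\mu_k$. Keeping the constants clean (the factors $6$, $2$, and the $\sqrt 2$) and correctly allocating the failure probability $\delta$ across the two estimators and across coordinates (hence the $4S$ and $4SA$ inside the logarithms) is the remaining bookkeeping. The secondary subtlety, as noted, is securing the $\sqrt S$ rather than $S$ statistical rate for $g_h$, which forces the variance/sparsity argument instead of a plain union bound.
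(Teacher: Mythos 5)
Your proposal is correct, and for the harder $g_d$ part it is essentially the paper's argument: clean entries are bounded by $b=\tfrac{2h_{max}+R}{(1-\gamma)B(c)}$; since $\epsilon<1/2$ the coordinate median is pinned inside the clean band, so every entry retained by the trimming step is within $2b$ of the median and hence bounded by $3b$; the retained set then consists of at least $(1-2\epsilon)\tilde m$ clean points plus at most $\epsilon\tilde m$ bounded corrupt points and is missing at most $\epsilon\tilde m$ clean points, which yields exactly the $O(b\epsilon)$ bias and the per-coordinate Hoeffding term, followed by a union bound over the $SA$ coordinates and $\norm{\cdot}_2\le\sqrt{SA}\,\norm{\cdot}_\infty$. (The paper packages the last step as a reusable ``bounded mean with an $\epsilon$-fraction of bounded outliers'' lemma applied per coordinate with corruption fraction $\epsilon/(1-\epsilon)\le 2\epsilon$, which is where the factor $2\epsilon$ in $E_2$ comes from, but this is the same decomposition you describe.) Where you genuinely diverge is the statistical term for $\hat g_h$: the paper does \emph{not} use a sparsity/variance/Bernstein argument. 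It simply notes that every sample (corrupted or not) has $\ell_2$ norm at most $2/((1-\gamma)^2B(c))$ because the reward never enters $\hat g_h^i$, applies Hoeffding coordinate-wise with this crude envelope, takes a union bound over $S$ coordinates, and converts to the $\ell_2$ norm via $\norm{\cdot}_2\le\sqrt{S}\,\norm{\cdot}_\infty$, which already produces the $\sqrt{S\log(4S/\delta)/\tilde m}$ rate; indeed the appendix proof establishes the bound in $\ell_2$, and the $\ell_1$ appearing in the theorem statement is not what the paper's own argument delivers. Your Bernstein-plus-sparsity route (using that each $\hat g_h^i$ has at most two nonzero coordinates so $\sum_{s'}\mathrm{Var}(\hat g_h^i(s'))$ is controlled, then Cauchy--Schwarz over coordinates) is more work but is precisely what one would need to obtain the $\ell_1$ bound with only a $\sqrt S$ factor as literally stated, so your approach buys a justification of the stated norm that the paper's simpler envelope argument does not.
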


\subsection{Convergence Analysis}\label{sec.main_results}

Our goal is to show that
the repeated optimization approach as specified in Algorithm \ref{alg:rob_rr_prl}, outputs a solution that is approximately stable. 
We restrict the domain of variables $d$ and $h$ in  
 Algorithm \ref{alg:rob_rr_prl} as follows: $\mathcal D = \{ d : 0 \le d(s, a) \le \frac{1}{1-\gamma}\}$ and $\mathcal H = \{ h : -h_{max} \le h(s) \le h_{max} \}$, where $h_{max} > 0$. 
 Furthermore, we will assume that $D_n$ has a large enough number $m$ of samples and a bounded corruption level, as specified by the following assumption, whose role we explain in the next paragraph.  
\begin{assumption}\label{assume.bound_corrupt} (bounded corruption)
     Every $D_n$ can be split in $2T$ batches, each having corruption level of at most $\epsilon < 0$.
\end{assumption}

\textbf{Robust OFTRL Guarantees.} To prove the convergence of Algorithm \ref{alg:rob_rr_prl}, we first need to provide an upper bound on the quality of the solution that robust OFTRL (Algorithm \ref{alg:aoftrl}) outputs.  
We show that after sufficiently many iterations $T$, robust OFTRL with gradient estimators defined in the previous section 
can find an approximate saddle point to \eqref{eq.perf_rl.langrangian}, but with bounded domains $\mathcal D$ and $\mathcal H$. 
Similarly to \cite{https://doi.org/10.1111/rssb.12364}, to avoid statistical issues, we split the original dataset of $m$ samples in $2T$ equal batches, assuming that each batch has corruption level at most $\epsilon<0.5$. In each iteration, we apply each gradient estimator on a fresh batch. We refer to this process as {\em batch-splitting}. 
\begin{lemma}\label{lm.approx.bound}
There exists $T$ such that the output of Algorithm \ref{alg:aoftrl} run for $T$ iterations on $f = \mathcal L$, with $\mathcal X = \mathcal D$, $\mathcal Y = \mathcal H$, $g_{X, t} = \hat g_h$,  $g_{Y, t} = \hat g_d$, and {\em batch-splitting}, satisfies
\begin{align}\label{eq.gda_duality_gap_bound}
    \max\limits_{d \in \mathcal{D}}\mathcal{L}(d,\bar{h}, M_n)- \min\limits_{h \in \mathcal{H}}\mathcal{L}(\bar{d},h, M_n) \le 7C(\delta)
\end{align}  
under Assumption \ref{assume.bound_corrupt}, with probability at least $1-\delta$, where\\ $C(\delta) := \sqrt{S}\left(E_1\left(\frac{m}{2T},\epsilon,\frac{\delta}{T}\right) h_{max} +  \frac{\sqrt{A}E_2\left(\frac{m}{2T},\epsilon,\frac{\delta}{T}\right)}{1-\gamma}\right)$.
\end{lemma}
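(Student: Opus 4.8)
The plan is to reduce the claim to the two results already established for the two components of Algorithm \ref{alg:rob_rr_prl}: the generic convergence bound for Robust OFTRL (Theorem \ref{thr:aoftrl}) and the per-batch gradient-estimation guarantees (Theorem \ref{thr:robust_grad}). First I would instantiate Theorem \ref{thr:aoftrl} with $f = \mathcal{L}(\cdot,\cdot,M_n)$, identifying the convex (minimization) variable with $h$ over $\mathcal{H}$ and the concave (maximization) variable with $d$ over $\mathcal{D}$; this matches the gradient assignments $g_{X,t} = \hat g_h$, $g_{Y,t} = \hat g_d$, and since $\mathcal{L}$ is $\lambda$-strongly concave in $d$ and linear in $h$, the convex--concave and boundedness hypotheses of the preliminaries hold. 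Under this identification the quantity $f(\bar{x},y) - f(x,\bar{y})$ of Theorem \ref{thr:aoftrl}, specialized to the maximizing $d$ and minimizing $h$, equals the duality gap $\max_{d}\mathcal{L}(d,\bar{h},M_n) - \min_{h}\mathcal{L}(\bar{d},h,M_n)$ in \eqref{eq.gda_duality_gap_bound}. This splits the task into controlling (i) the $O(1/T)$ optimization terms and (ii) the two gradient-error sums.

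For (ii), which produces the target constant, I would invoke batch-splitting enforced by Assumption \ref{assume.bound_corrupt}: each of the $2T$ gradient queries is evaluated on a fresh batch of size $\tilde{m} = m/(2T)$, so the batch is independent of the query point $(d_t,h_t)$, which depends only on earlier batches. Theorem \ref{thr:robust_grad} then applies at each iteration with confidence $\delta/T$, and a union bound over all $2T$ batches gives, with probability at least $1-\delta$, simultaneous control of every error: the $h$-gradient error obeys $\|\zeta_t^X\| \le \|\hat g_h - g_h\|_1 \le E_1(\tilde m,\epsilon,\delta/T)$ (using $\|\cdot\|_2 \le \|\cdot\|_1$) and the $d$-gradient error obeys $\|\zeta_t^Y\| \le E_2(\tilde m,\epsilon,\delta/T)$. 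Substituting the domain radii $D_X = h_{max}\sqrt{S}$ and $D_Y = \sqrt{SA}/(1-\gamma)$ into $\tfrac{6D_X}{T}\sum_t\|\zeta_t^X\| + \tfrac{6D_Y}{T}\sum_t\|\zeta_t^Y\|$ collapses each average (every summand is bounded by the same $E_1$ or $E_2$) and yields exactly $6\bigl(\sqrt{S}\,h_{max}E_1 + \tfrac{\sqrt{SA}}{1-\gamma}E_2\bigr) = 6\,C(\delta)$.

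For (i), the terms $\tfrac{\psi_X(x)+\psi_Y(y)}{T} + \tfrac{\|\nabla_x f(x_1,y_1)\|^2}{\lambda_X T} + \tfrac{\|\nabla_y f(x_1,y_1)\|^2}{\lambda_Y T}$ are each of the form $A_0/T$, where $A_0$ depends only on the bounds of the regularizers over $\mathcal{D},\mathcal{H}$ and the bounded gradients $G_X,G_Y$, all of which are independent of $T$. It then suffices to exhibit a $T$ with $A_0/T \le C(\delta)$, which gives the total $6C(\delta) + C(\delta) = 7C(\delta)$.

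The main obstacle is precisely this choice of $T$, because $C(\delta)$ itself depends on $T$ through $\tilde m = m/(2T)$: raising $T$ shrinks the optimization error $A_0/T$ but inflates the per-batch statistical error inside $E_1,E_2$. The resolving observation is that $C(\delta)$ has a $T$-independent bias floor proportional to $\epsilon$ (the $\epsilon$-terms of $E_1,E_2$ persist as $\tilde m \to \infty$), so it is enough to pick $T$ large enough that $A_0/T$ falls below this floor, while keeping $T \le m/2$ so the batches are nonempty; such a $T$ exists provided $m$ is sufficiently large, as assumed. A secondary point I would state explicitly is the justification for batch-splitting itself: reusing data would statistically couple the gradient estimate at $(d_t,h_t)$ to the iterate generated from that same data, breaking the i.i.d. concentration underlying Theorem \ref{thr:robust_grad}; the freshness of each batch is exactly what restores independence and lets the per-iteration bounds be applied at the random query points.
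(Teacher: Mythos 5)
Your proposal is correct and follows essentially the same route as the paper: instantiate Theorem \ref{thr:aoftrl} on $\mathcal{L}$, bound the noise sums via Theorem \ref{thr:robust_grad} with per-iteration confidence $\delta/T$ and a union bound over the fresh batches (yielding the $6C(\delta)$ term with $D_X = \sqrt{S}h_{max}$ and $D_Y = \sqrt{SA}/(1-\gamma)$), then choose $T$ large enough that the $O(1/T)$ optimization terms fall below $C(\delta)$. Your explicit treatment of the circularity that $C(\delta)$ itself depends on $T$ through $\tilde m = m/(2T)$, resolved via the $T$-independent $\epsilon$-floor in $E_1, E_2$, is in fact more careful than the paper's one-line dismissal of this point.
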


Now, we want to provide a bound on the quality of the output of robust OFTRL w.r.t. the true solution of $\eqref{eq.perf_rl.langrangian}$. Consider the set of $d \ge 0$ that satisfy the Bellman flow constraint in $M_n$ and denote its {\em Hoffman constant} \citep{garber2019logarithmic} by $\sigma_n$. To simplify the exposition, we define quantity $\alpha(M_n, \delta)$:
    \begin{align*}
        \alpha(M_n,\delta) = \sqrt{\frac{14C(\delta)}{\lambda}}+ C_n' +\sqrt{ 2C_n' \left(\frac{\|r_n\|}{\lambda}+\frac{1}{1-\gamma}\right)}
    \end{align*}
    where $C_n'=\frac{ \|r_n\|_2\sqrt{S}\sigma_n^{-1/2}}{(1-\gamma) h_{max}}$. 
    For a generic MDP $M(\pi)$, we analogously define $\alpha(M(\pi),\delta)$. 
    Next we show that robust OFTRL outputs an approximately optimal solution to \eqref{eq.perf_rl.langrangian}.

\begin{theorem}\label{thm.pl_robust_gda}
    Consider the robust OFTRL from Lemma \ref{lm.approx.bound}, and assume its number of iterations $T$ is s.t. \eqref{eq.gda_duality_gap_bound} is holds. Under Assumption \ref{assume.bound_corrupt}, the output of robust OFTRL $\bar d$ satisfies $\|d^*_n-\bar{d}\|_2 \leq \alpha(M_n,\delta)$ with probability at least $1-\delta$.
\end{theorem}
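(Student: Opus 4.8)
The plan is to bound $\|d_n^*-\bar d\|_2$ by inserting two intermediate points and controlling three distances that correspond exactly to the three summands of $\alpha(M_n,\delta)$. Write $d^\star := \argmax_{d\in\mathcal D}\mathcal L(d,\bar h, M_n)$ for the best response to the returned dual iterate $\bar h$, and let $\hat d$ be the Euclidean projection of $d^\star$ onto the feasible polytope $\mathcal C(M_n)\cap\{d\ge 0\}$. The triangle inequality gives
\[
\|d_n^*-\bar d\|_2 \le \|\bar d - d^\star\|_2 + \|d^\star - \hat d\|_2 + \|\hat d - d_n^*\|_2,
\]
and I would bound the three terms by $\sqrt{14C(\delta)/\lambda}$, by $C_n'$, and by $\sqrt{2C_n'(\|r_n\|/\lambda + 1/(1-\gamma))}$ respectively.

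For the first term, the quadratic regularizer makes $\mathcal L(\cdot,\bar h, M_n)$ $\lambda$-strongly concave, and $d^\star$ is its constrained maximizer, so strong concavity gives $\frac{\lambda}{2}\|d^\star-\bar d\|_2^2 \le \mathcal L(d^\star,\bar h, M_n)-\mathcal L(\bar d,\bar h, M_n)$. The right-hand side is at most the duality gap $\max_{d}\mathcal L(d,\bar h, M_n)-\min_h\mathcal L(\bar d,h, M_n)$, which Lemma \ref{lm.approx.bound} bounds by $7C(\delta)$ under Assumption \ref{assume.bound_corrupt}; rearranging yields the first summand.

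The second and third terms are where the difficulty lies, since $\mathcal L$ is only linear (not strongly convex) in $h$, so a small duality gap does not by itself pull $\bar h$ toward the true multiplier nor $d^\star$ toward $d_n^*$. Instead I would control the Bellman-flow violation $v(d^\star)(s)=\nabla_h\mathcal L(d^\star,\cdot, M_n)(s)$ of the best response, which by Danskin's theorem is the gradient of the dual function at $\bar h$ and can be bounded in terms of $\|r_n\|$ and the dual radius $h_{max}$; this is the origin of the $1/h_{max}$ factor in $C_n'$. Feeding this violation into Hoffman's bound for the polytope $\mathcal C(M_n)$ (with Hoffman constant $\sigma_n$) produces a feasible $\hat d$ with $\|d^\star-\hat d\|_2 \le C_n'$, the second summand. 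For the third, both $\hat d$ and $d_n^*$ are feasible and $d_n^*$ maximizes the $\lambda$-strongly concave objective $F(d)=\langle d,r_n\rangle-\frac{\lambda}{2}\|d\|_2^2$ over the feasible set, so $\frac{\lambda}{2}\|\hat d-d_n^*\|_2^2 \le F(d_n^*)-F(\hat d)$; bounding the value gap by a uniform gradient bound of order $\|r_n\|+\lambda/(1-\gamma)$ times the Hoffman distance $\|d^\star-\hat d\|_2\le C_n'$ gives the $\sqrt{2C_n'(\cdots)}$ form.

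I expect the main obstacle to be precisely this violation-to-distance translation. Because the dual variable enters $\mathcal L$ only linearly, there is no quadratic growth in $h$ to exploit, and it must be replaced by the polyhedral error bound supplied by $\sigma_n$, while simultaneously accounting for the truncation of the dual domain $\mathcal H$ at $h_{max}$. That truncation is what prevents exact recovery of $d_n^*$ and what makes the second and third terms persist even as the duality gap $C(\delta)\to 0$. Additional care is needed to verify $d_n^*\in\mathcal D$ and that the relevant gradients remain within the assumed bounds, so that Lemma \ref{lm.approx.bound} and the Hoffman inequality both apply on the restricted domains $\mathcal D$ and $\mathcal H$.
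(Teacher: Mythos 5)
Your skeleton matches the paper's: a triangle inequality through two intermediate points, strong concavity of $\mathcal L(\cdot,h,M_n)$ for the first leg (your version, evaluated at $\bar h$ rather than at the optimal multiplier, is correct and even slightly cleaner), Hoffman's bound for the second, and strong concavity of $F(d)=\langle d,r_n\rangle-\frac{\lambda}{2}\|d\|_2^2$ for the third. However, there is a genuine gap in the choice of the first intermediate point. You take $d^\star=\argmax_{d\in\mathcal D}\mathcal L(d,\bar h,M_n)$, the best response to the \emph{computed} dual iterate, whereas the paper routes through $d^*_{\mathcal L}$, the exact saddle point of the box-restricted problem. The two properties your second and third legs rely on --- per-state flow violation at most $\frac{\|r_n\|}{(1-\gamma)h_{max}}$, and $F(\cdot)\ge F(d_n^*)$ --- are proved in the paper (Lemma \ref{lemma:two}) only for $d^*_{\mathcal L}$, by a contradiction argument that uses the \emph{exactly zero} duality gap at that saddle: a larger violation would let the dual player push $h(s)$ to $\pm h_{max}$ and drive $\min_h\mathcal L(d^*_{\mathcal L},h)$ below $\max_d\mathcal L(d,h^*_{\mathcal L})$. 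Neither property transfers to your $d^\star$: the penalty $d^\star$ actually faces is $\langle\bar h,C_{d^\star}\rangle$ for the one fixed $\bar h$, not the worst case over $\mathcal H$, so if $\bar h$ is small (or misaligned with the violation) on some state, $d^\star$ can violate the flow constraint there up to the limits of the box $\mathcal D$ regardless of $h_{max}$. Identifying $C_{d^\star}$ as $\nabla g(\bar h)$ via Danskin does not bound its magnitude; the best one can do from the $7C(\delta)$ duality gap (e.g.\ via co-coercivity of the $\tfrac{1}{\lambda}$-smooth dual function $g$) is $\|C_{d^\star}\|\le\|C_{d^*_{\mathcal L}}\|+O(\sqrt{C(\delta)/\lambda})$, which injects extra error terms into your second and third summands that are not present in $\alpha(M_n,\delta)$. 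The same circularity appears in your third leg, where $F(d^\star)\ge F(d_n^*)$ itself needs the unproved violation bound.

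The fix is exactly the paper's move: insert $d^*_{\mathcal L}$ instead of $d^\star$. The first leg still gives $\|\bar d-d^*_{\mathcal L}\|\le\sqrt{14C(\delta)/\lambda}$ (decompose $\mathcal L(d^*_{\mathcal L},h^*_{\mathcal L})-\mathcal L(\bar d,h^*_{\mathcal L})$ through $\mathcal L(d^*_{\mathcal L},\bar h)$ and use the duality gap), and the second and third legs then follow from Lemma \ref{lemma:two} plus Hoffman's bound applied to $C_{d^*_{\mathcal L}}$, exactly yielding the three summands of $\alpha(M_n,\delta)$ with no residual terms.
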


\textbf{Convergence of Algorithm \ref{alg:rob_rr_prl}.} 
Now, we are ready to derive convergence guarantees of  Algorithm \ref{alg:rob_rr_prl}. To do so, we need two additional assumptions, which we use to establish contraction properties of repeated retraining.
The first one, $\epsilon$-sensitivity is a standard in the literature on performative prediction, and we take it from prior work on performative RL \citep{mandal2023performative}. 

\begin{assumption}\label{assume.sensitivity} 
($\tilde \epsilon$-sensitivity) For any two MDPs $M(\pi)$ and $M(\pi')$, the following holds $\|r^{\pi} - r^{\pi'}\|_2 \le \tilde \eps_r \|d^{\pi} - d^{\pi'}\|_2$ and $\|P^{\pi} - P^{\pi'}\|_2 \le \tilde \eps_p \|d^{\pi} - d^{\pi'}\|_2$.
\end{assumption}

The second one, is a rather weak assumption requiring that any MDP induced by the deployed policy does not have an infinite factor $\alpha$ for fixed $\delta$. 

\begin{assumption}\label{assume.hoffman} 
($\bar \alpha$-boundedness) For any MDP $M(\pi)$ induced by stationary policy $\pi$ we have $\alpha(M(\pi),\delta) \le \bar \alpha(\delta)$.   
\end{assumption}
%
%

\begin{figure*}[h!]
    \centering
    \begin{subfigure}[b]{0.24\textwidth}
         \centering
         \includegraphics[width=\textwidth]{./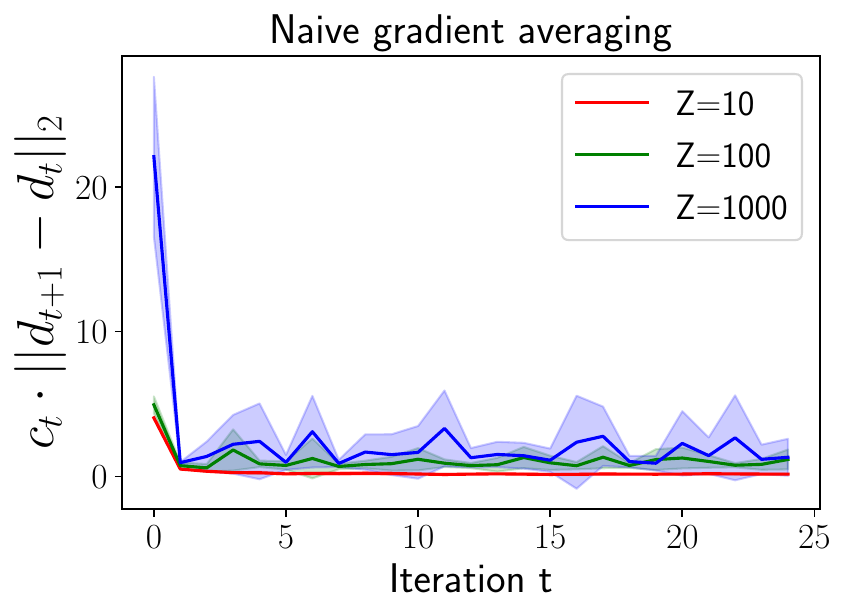}
         \caption{Fixed $\epsilon$, naive}
         \label{plot:fixed_epsilon_naive}
     \end{subfigure}
     \hfill
     \begin{subfigure}[b]{0.24\textwidth}
         \centering
         \includegraphics[width=\textwidth]{./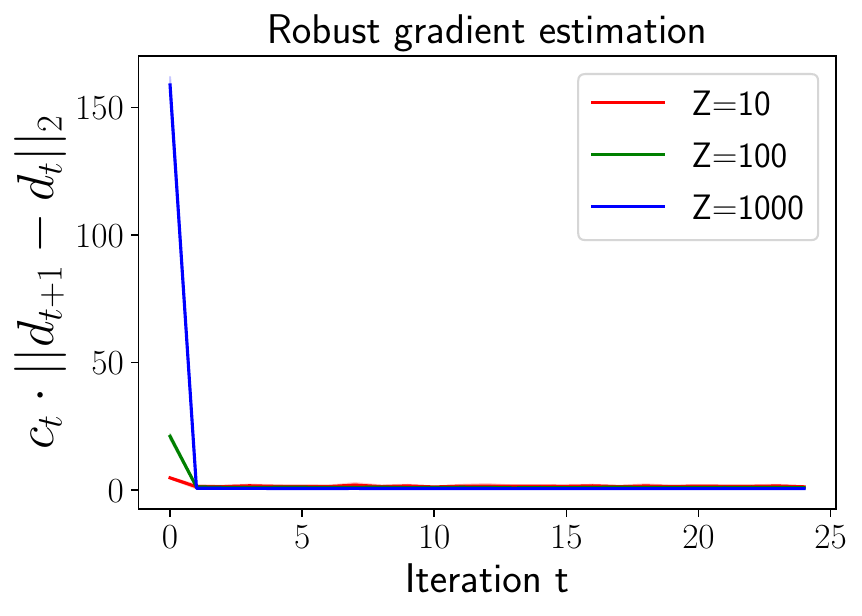}
         \caption{Fixed $\epsilon$, robust}
         \label{plot:fixed_epsilon_robust}
     \end{subfigure}
     \hfill
     \begin{subfigure}[b]{0.24\textwidth}
         \centering
         \includegraphics[width=\textwidth]{./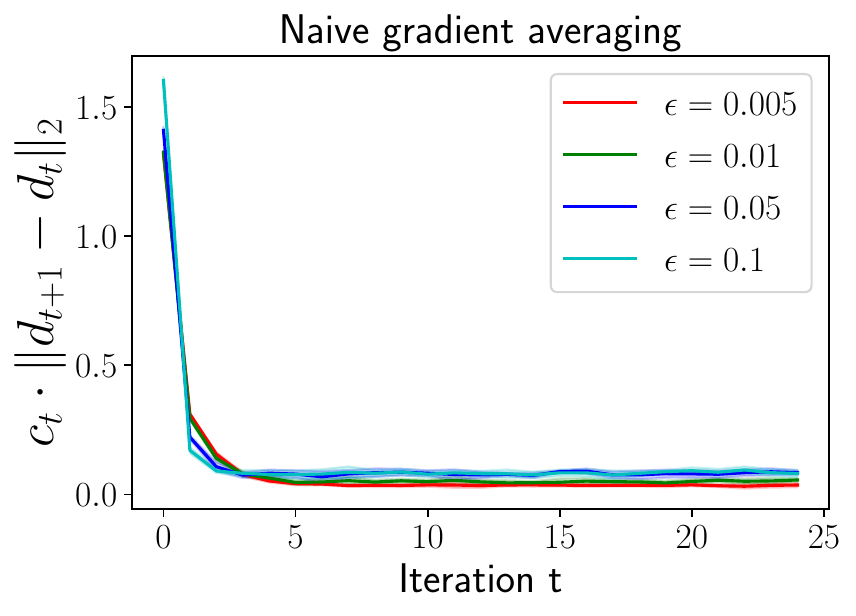}
         \caption{Fixed $Z$, naive}
         \label{plot:fixed_mu_naive}
     \end{subfigure}
     \hfill
     \begin{subfigure}[b]{0.24\textwidth}
         \centering
         \includegraphics[width=\textwidth]{./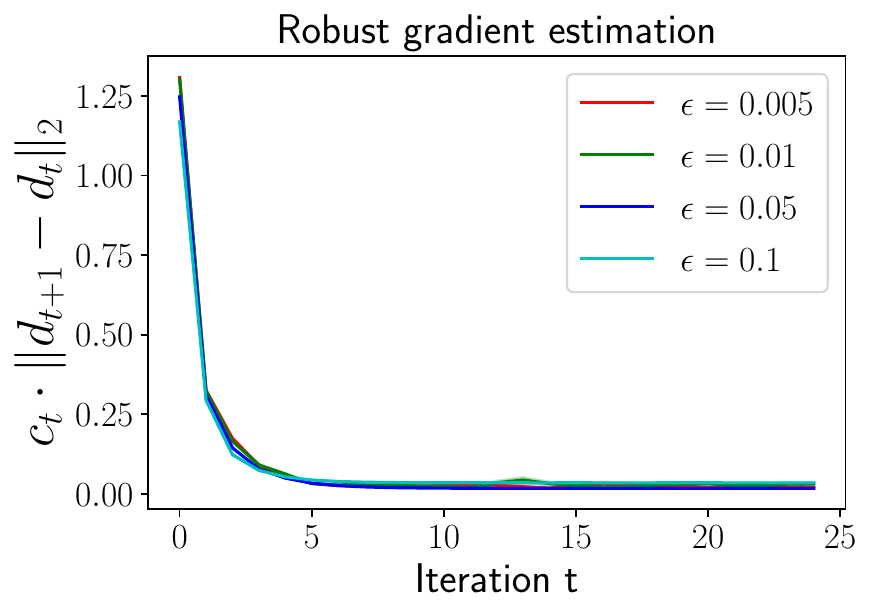}
         \caption{Fixed $Z$, robust}
         \label{plot:fixed_mu_robust}
     \end{subfigure}
    \caption{
    Convergence of repeated retraining approaches: (a) and (d) consider a non-robust variant of Algorithm \ref{alg:rob_mean} that utilizes naive gradient averaging; (b) and (c) consider Algorithm \ref{alg:rob_mean} that utilizes robust gradient estimation. The $y$ axis is  the normalised distance between $d_{t+1}$ and $d_t$ with $c_t=1/\|d_t\|_2$, similar to the experiments in \citep{mandal2023performative}, while the $x$ axis is the number of repeated retraining iterations. For Fig. \ref{plot:fixed_epsilon_naive} and Fig. \ref{plot:fixed_epsilon_robust}, $\epsilon=0.01$ and we vary $Z$. For Fig. \ref{plot:fixed_mu_naive} and Fig. \ref{plot:fixed_mu_robust}, $Z=15$ and we vary $\epsilon$. 
    Each line presents the average over $10$ experiments initialised with different seeds. 
    }
    \label{fig:plots-perf-RL}
\end{figure*}

Note that after each round $n$, the distance between $d_n$ and $d^*_n$ is at most $\bar{C}(\delta):=\bar \alpha(\delta) + c\sqrt{SA}$. This is due to the definition of $\tilde{d}_n$---we obtain $\tilde d$ from $\bar d$ by adding a constant $c>0$ to each of its entry --- and the robust OFTRL guarantees for $\bar d$.  
Using this bound and the previous two assumptions we derive a convergence result for Algorithm \ref{alg:rob_rr_prl} (see the appendix for a more formal statement).

\begin{theorem}\label{thr:perf_rl} (Informal Statement) Under Assumption \ref{assume.bound_corrupt}, Assumption \ref{assume.sensitivity} and Assumption \ref{assume.hoffman}, there exist $\lambda$ and $N$ such that the output of 
Algorithm \ref{alg:rob_rr_prl}
satisfies $\tilde{d}_N \in \{ d \in \mathcal D :\|d-d_S\|_2 \leq \tilde{C}:=4\cdot \bar{C}(\delta/N) \}$ with probability at least $1-\delta$, where $d_S$ is a performatively stable policy. 
\end{theorem}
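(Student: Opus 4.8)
The plan is to cast Algorithm \ref{alg:rob_rr_prl} as an inexact (perturbed) fixed-point iteration and to combine a contraction property with a per-round error bound. Define the \emph{exact retraining operator} on occupancy measures by $G(d) := d^*\big(M(\pi^{\downarrow d})\big)$, i.e.\ the exact maximizer of the regularized objective \eqref{eq.perf_rl.langrangian} in the MDP induced by the policy $\pi^{\downarrow d}$. A (regularized) performatively stable occupancy measure is precisely a fixed point of $G$, so I would identify $d_S$ with $G(d_S) = d_S$. Under this identification the theorem reduces to bounding $\|\tilde d_N - d_S\|_2$ for the iterates $\tilde d_n$, each of which approximates one application of $G$.

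First I would quantify the per-round error, i.e.\ how far $\tilde d_{n+1}$ is from $G(\tilde d_n)$. Since $\pi_n = \pi^{\downarrow \tilde d_n}$ and $M_n = M(\pi_n)$, the exact target is $G(\tilde d_n) = d^*_{n+1}$, the exact solution in $M_n$. Theorem \ref{thm.pl_robust_gda} gives $\|\bar d_{n+1} - d^*_{n+1}\|_2 \le \alpha(M_n,\delta)$, and the mixing step adds a constant, contributing $\|\tilde d_{n+1} - \bar d_{n+1}\|_2 = c\sqrt{SA}$. By the triangle inequality and the $\bar\alpha$-boundedness of Assumption \ref{assume.hoffman}, $\|\tilde d_{n+1} - G(\tilde d_n)\|_2 \le \alpha(M_n,\delta) + c\sqrt{SA} \le \bar C(\delta)$. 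Because the guarantee of Theorem \ref{thm.pl_robust_gda} (via Lemma \ref{lm.approx.bound} and Theorem \ref{thr:robust_grad}) holds with probability $1-\delta$ per round, I would then apply a union bound over the $N$ rounds, replacing $\delta$ by $\delta/N$; this is the origin of the $\bar C(\delta/N)$ in the final bound.

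Second, and this is the crux, I would establish that $G$ is a contraction: there is $\kappa < 1$ with $\|G(d) - G(d')\|_2 \le \kappa\,\|d-d'\|_2$, provided $\lambda$ is chosen large enough. The argument is adapted from the exact-data analysis of \citep{mandal2023performative}: the objective \eqref{eq.perf_rl.langrangian} is $\lambda$-strongly concave in $d$, so the maximizer $d^*(M)$ is $O(1/\lambda)$-Lipschitz in the data $(r,P)$ defining $M$; by the $\tilde\epsilon$-sensitivity of Assumption \ref{assume.sensitivity}, the reward and transition of $M(\pi^{\downarrow d})$ are in turn Lipschitz in the inducing occupancy measure with constants $\tilde\epsilon_r,\tilde\epsilon_p$. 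Composing these bounds yields $\kappa = O\big((\tilde\epsilon_r+\tilde\epsilon_p)/\lambda\big)$, so $\lambda$ large enough forces $\kappa \le 2/3$. The delicate point is the perturbation analysis of the \emph{constrained} saddle point: the feasible polytope $\mathcal C(M)$ itself moves when $P$ changes, so the stability of $d^*$ must be tracked through the Hoffman constant $\sigma_n$ of the Bellman-flow constraints and the coverage constant $B(c)$; keeping these uniformly bounded across all deployed policies is exactly what Assumption \ref{assume.hoffman} and the coverage condition provide.

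Finally I would unroll the inexact iteration. Writing $d_S = G(d_S)$ and combining the contraction with the per-round error, $\|\tilde d_{n+1} - d_S\|_2 \le \kappa\,\|\tilde d_n - d_S\|_2 + \bar C(\delta/N)$, so by induction $\|\tilde d_N - d_S\|_2 \le \kappa^N\|\tilde d_0 - d_S\|_2 + \frac{\bar C(\delta/N)}{1-\kappa}$. With $\kappa \le 2/3$ the geometric series contributes at most $3\,\bar C(\delta/N)$, and choosing $N$ large enough that the transient $\kappa^N\|\tilde d_0 - d_S\|_2$ falls below $\bar C(\delta/N)$ yields $\tilde d_N \in \{d : \|d - d_S\|_2 \le 4\,\bar C(\delta/N)\}$, as claimed. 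I expect the contraction step — specifically the Lipschitz stability of the constrained maximizer under simultaneous reward and constraint-set (transition) perturbations — to be the main obstacle, whereas the per-round accounting and the geometric-series wrap-up are routine once Theorem \ref{thm.pl_robust_gda} is in hand.
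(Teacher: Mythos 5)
Your proposal is correct and follows essentially the same route as the paper: a per-round error bound of $\bar C(\delta/N)$ obtained from Theorem \ref{thm.pl_robust_gda} plus the mixing constant and a union bound over rounds, combined with the contraction property of the exact retraining operator (which the paper, like you, imports from the analysis of \citet{mandal2023performative}) via an inexact fixed-point iteration. The only cosmetic difference is in the final accounting — you unroll a geometric series with $\kappa\le 2/3$ to get $3\bar C+\bar C=4\bar C$, while the paper runs a two-case argument with $\beta=1/4$ to land on $\bar C/\beta=4\bar C$ — which is immaterial.
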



Theorem \ref{thr:perf_rl} states that Robust Repeated Retraining (Algorithm \ref{alg:rob_rr_prl}) exhibits last-iterate convergence to an approximate stable point. Ignoring $\epsilon$-independent additive terms, the approximation error is proportional $\sqrt{\epsilon}$.
Namely, factor $\bar{C}(\delta/N)$ depends on $\bar \alpha(\delta)$, which has square root dependence on $\epsilon$ since $C$ is linear in $\epsilon$. In prior work on corruption-robust offline RL, suboptimality gaps can scale as $O(\sqrt{\epsilon})$ or $O(\epsilon)$, depending on the exact setting and assumption (e.g., see~\cite{zhang2022corruption, nika2024corruption}). However, we note that our setting is not directly comparable, since it combines offline RL with repeated retraining. 
We leave the analysis of the tightness of this bound for future work. 

The result in Theorem~\ref{thr:perf_rl} provides an asymptotic convergence guarantee w.r.t. $\lambda$ and $N$. Following the proof of Theorem 1 from~\cite{mandal2023performative}, it is easy to show that $\lambda > \lambda_0$ 
and $N > \frac{1}{1-\lambda_0} \cdot \log(\frac{2}{\bar{C}_0 \cdot (1-\gamma)})$ guarantee the convergence.\footnote{This lower bound on $\lambda$ is the same as the one in Theorem 3 from~\cite{mandal2023performative}, which considers the finite sample case. The lower bounds on $N$ are comparable but not the same, due to differences in  the convergence criteria.} Here, $\lambda_0 = \frac{24 S^{3/2} (2 \tilde \epsilon_r + 5\tilde \epsilon_p)}{(1-\gamma^4)}$, while $\bar{C}_0$ is a lower bound on $4 \cdot \bar{C}(\delta/N)$ independent of $N$, but possibly dependent on $\lambda$.   


We can further assess the return that policy defined by $\tilde{d}_N$, i.e., $\pi^{\downarrow \tilde{d}_N}$, achieves in $\mathcal{M}(\pi_{N})$.
Assuming that the initial state distribution has a full support over state space, one can show that this return is comparable to the return of a performatively stable policy $\pi_{S}$ 
in $M(\pi_{S})$:
it is worse by at most an instance-specific constant proportional to $\tilde{C}$.
We provide further details in the appendix.

\section{Experimental Evaluation}\label{sec.experiments}

In this section we experimentally test the efficacy of our approach in performative RL under corruption.

\textbf{Environment.} Our MDP model is a $W\times W$ gridworld environment, inspired by the gridworld environment in \citep{triantafyllou2021blame}, where state $s = (i, j)$ encodes the location/cell that the agent occupies. In round $n$, the reward function is defined as $r_n(s, a)=R_{i,j}-c_p \cdot \sum\limits_{a\in\mathcal{A}}d_n(i\cdot W+j,a)$. We set $W=8$, while the values $R_{i,j}$ are defined as in the gridworld environment used in \citep{triantafyllou2021blame}. The transitions are deterministic and only controlled by the four agent's actions (left, right, up, down). Samples are transition tuples $(s_i,s'_i,a_i,r_i)$. In corrupted samples we add Gaussian noise $N(Z,0.5)$ to $r_i$ and we replace $s_i'$ with a random state $s'$ with probability exponentially decreasing with the distance between $s'$ and $s_i'$ on the grid. We study the convergence of the robust repeated retraining based on OFTRL (Algorithm \ref{alg:rob_rr_prl}). As a baseline approach, we consider a version of repeated retraining based on OFTRL which uses a naive estimator of $g_d$ instead of Algorithm \ref{alg:rob_mean}.
%
%
In all the experiments, we set $\gamma=0.99,c_p=1$ and $\lambda=0.001$. To create transition samples, we collect $1000$ trajectories with an effective horizon of $1/(1-\gamma)=100$.

\textbf{Results.} The plots in Fig. \ref{fig:plots-perf-RL} show the convergence results for different values of the noise magnitude $Z$ and the corruption frequency $\epsilon$.   
We observe that naive gradient estimation results in more noisy convergence of repeated retraining, compared to robust gradient estimation. The effect is stronger when we fix $\epsilon$ and progressively increase $Z$. Then the curve of repeated retraining with a naive estimator oscillates with magnitude scaling with $Z$, while the curve of repeated retraining with a robust estimator stays virtually unaffected. Fixing $Z$ and progressively increasing $\epsilon$ we see that both robust and naive retraining are affected, with the error increasing with $\epsilon$. All these effects are in agreement with our theoretical results and they showcase the utility of robust gradient estimation.

\section{Conclusion}\label{sec.conclusion}

We considered performative reinforcement learning under corrupted data. We introduced a repeated retraining approach and showed that it converges to an approximately stable policy, where the approximation error depends on the level of corruption. One of the most interesting future research directions is to investigate the tightness of the approximation error. Extending this work to RL settings with function approximation is another important avenue for future research.

\subsection*{Acknowledgements}\label{sec.ack}
This work has been partially supported by project MIS 5154714 of the National Recovery and Resilience Plan Greece 2.0 funded by the European Union under the NextGenerationEU Program. This research was, in part, funded by the Deutsche Forschungsgemeinschaft (DFG, German Research Foundation) – project number $467367360$.


\bibliography{main}

\begin{thebibliography}{55}
\providecommand{\natexlab}[1]{#1}

\bibitem[{Agarwal et~al.(2019)Agarwal, Jiang, Kakade, and Sun}]{agarwal2019reinforcement}
Agarwal, A.; Jiang, N.; Kakade, S.~M.; and Sun, W. 2019.
\newblock Reinforcement learning: Theory and algorithms.
\newblock \emph{CS Dept., UW Seattle, Seattle, WA, USA, Tech. Rep}, 32.

\bibitem[{Ahn et~al.(2022)Ahn, Jain, Ji, Kale, Netrapalli, and Shamir}]{ahn2022reproducibility}
Ahn, K.; Jain, P.; Ji, Z.; Kale, S.; Netrapalli, P.; and Shamir, G.~I. 2022.
\newblock Reproducibility in optimization: Theoretical framework and limits.
\newblock \emph{Advances in Neural Information Processing Systems}, 35: 18022--18033.

\bibitem[{Beznosikov, Sadiev, and Gasnikov(2020)}]{beznosikov2020gradient}
Beznosikov, A.; Sadiev, A.; and Gasnikov, A. 2020.
\newblock Gradient-free methods with inexact oracle for convex-concave stochastic saddle-point problem.
\newblock In \emph{International Conference on Mathematical Optimization Theory and Operations Research}, 105--119. Springer.

\bibitem[{Brown, Hod, and Kalemaj(2022)}]{brown2022performative}
Brown, G.; Hod, S.; and Kalemaj, I. 2022.
\newblock Performative prediction in a stateful world.
\newblock In \emph{International conference on artificial intelligence and statistics}, 6045--6061. PMLR.

\bibitem[{Chiang et~al.(2012)Chiang, Yang, Lee, Mahdavi, Lu, Jin, and Zhu}]{chiang2012online}
Chiang, C.-K.; Yang, T.; Lee, C.-J.; Mahdavi, M.; Lu, C.-J.; Jin, R.; and Zhu, S. 2012.
\newblock Online optimization with gradual variations.
\newblock In \emph{Conference on Learning Theory}, 6--1. JMLR Workshop and Conference Proceedings.

\bibitem[{d'Aspremont(2008)}]{d2008smooth}
d'Aspremont, A. 2008.
\newblock Smooth optimization with approximate gradient.
\newblock \emph{SIAM Journal on Optimization}, 19(3): 1171--1183.

\bibitem[{Devolder(2013)}]{devolder2013exactness}
Devolder, O. 2013.
\newblock \emph{Exactness, inexactness and stochasticity in first-order methods for large-scale convex optimization}.
\newblock Ph.D. thesis, CORE UCLouvain Louvain-la-Neuve, Belgium.

\bibitem[{Devolder, Glineur, and Nesterov(2014)}]{devolder2014first}
Devolder, O.; Glineur, F.; and Nesterov, Y. 2014.
\newblock First-order methods of smooth convex optimization with inexact oracle.
\newblock \emph{Mathematical Programming}, 146: 37--75.

\bibitem[{Diakonikolas, Kane, and Pensia(2020)}]{diakonikolas2020outlier}
Diakonikolas, I.; Kane, D.~M.; and Pensia, A. 2020.
\newblock Outlier robust mean estimation with subgaussian rates via stability.
\newblock \emph{Advances in Neural Information Processing Systems}, 33: 1830--1840.

\bibitem[{Dvinskikh et~al.(2022)Dvinskikh, Tominin, Tominin, and Gasnikov}]{dvinskikh2022gradient}
Dvinskikh, D.; Tominin, V.; Tominin, Y.; and Gasnikov, A. 2022.
\newblock Gradient-free optimization for non-smooth minimax problems with maximum value of adversarial noise.
\newblock \emph{arXiv preprint arXiv}, 2202.

\bibitem[{Garber(2019)}]{garber2019logarithmic}
Garber, D. 2019.
\newblock Logarithmic regret for online gradient descent beyond strong convexity.
\newblock In \emph{The 22nd International Conference on Artificial Intelligence and Statistics}, 295--303. PMLR.

\bibitem[{Hardt and Mendler-D{\"u}nner(2023)}]{hardt2023performative}
Hardt, M.; and Mendler-D{\"u}nner, C. 2023.
\newblock Performative prediction: Past and future.
\newblock \emph{arXiv preprint arXiv:2310.16608}.

\bibitem[{Hoffman(2003)}]{hoffman2003approximate}
Hoffman, A.~J. 2003.
\newblock On approximate solutions of systems of linear inequalities.
\newblock In \emph{Selected Papers Of Alan J Hoffman: With Commentary}, 174--176. World Scientific.

\bibitem[{Huang and Zhang(2022)}]{huang2022new}
Huang, K.; and Zhang, S. 2022.
\newblock New first-order algorithms for stochastic variational inequalities.
\newblock \emph{SIAM Journal on Optimization}, 32(4): 2745--2772.

\bibitem[{Izzo, Ying, and Zou(2021)}]{izzo2021learn}
Izzo, Z.; Ying, L.; and Zou, J. 2021.
\newblock How to learn when data reacts to your model: performative gradient descent.
\newblock In \emph{International Conference on Machine Learning}, 4641--4650. PMLR.

\bibitem[{Juditsky, Nemirovski, and Tauvel(2011)}]{juditsky2011solving}
Juditsky, A.; Nemirovski, A.; and Tauvel, C. 2011.
\newblock Solving variational inequalities with stochastic mirror-prox algorithm.
\newblock \emph{Stochastic Systems}, 1(1): 17--58.

\bibitem[{Korpelevich(1976)}]{Korpelevich76}
Korpelevich, G.~M. 1976.
\newblock The extragradient method for finding saddle points and other problems.
\newblock \emph{Matecon}, 12: 747--756.

\bibitem[{Letchford et~al.(2012)Letchford, MacDermed, Conitzer, Parr, and Isbell}]{letchford2012computing}
Letchford, J.; MacDermed, L.; Conitzer, V.; Parr, R.; and Isbell, C. 2012.
\newblock Computing optimal strategies to commit to in stochastic games.
\newblock In \emph{Proceedings of the AAAI Conference on Artificial Intelligence}, volume~26, 1380--1386.

\bibitem[{Li and Wai(2022)}]{li2022state}
Li, Q.; and Wai, H.-T. 2022.
\newblock State dependent performative prediction with stochastic approximation.
\newblock In \emph{International Conference on Artificial Intelligence and Statistics}, 3164--3186. PMLR.

\bibitem[{Lu(2023)}]{lu2023bilevel}
Lu, S. 2023.
\newblock Bilevel optimization with coupled decision-dependent distributions.
\newblock In \emph{International Conference on Machine Learning}, 22758--22789. PMLR.

\bibitem[{Mandal et~al.(2024)Mandal, Nika, Kamalaruban, Singla, and Radanovi{\'c}}]{mandal2024corruption}
Mandal, D.; Nika, A.; Kamalaruban, P.; Singla, A.; and Radanovi{\'c}, G. 2024.
\newblock Corruption Robust Offline Reinforcement Learning with Human Feedback.
\newblock \emph{arXiv preprint arXiv:2402.06734}.

\bibitem[{Mandal, Triantafyllou, and Radanovic(2023)}]{mandal2023performative}
Mandal, D.; Triantafyllou, S.; and Radanovic, G. 2023.
\newblock Performative reinforcement learning.
\newblock In \emph{International Conference on Machine Learning}, 23642--23680. PMLR.

\bibitem[{Mendler-D{\"u}nner et~al.(2020)Mendler-D{\"u}nner, Perdomo, Zrnic, and Hardt}]{mendler2020stochastic}
Mendler-D{\"u}nner, C.; Perdomo, J.; Zrnic, T.; and Hardt, M. 2020.
\newblock Stochastic optimization for performative prediction.
\newblock \emph{Advances in Neural Information Processing Systems}, 33: 4929--4939.

\bibitem[{Miller, Perdomo, and Zrnic(2021)}]{miller2021outside}
Miller, J.~P.; Perdomo, J.~C.; and Zrnic, T. 2021.
\newblock Outside the echo chamber: Optimizing the performative risk.
\newblock In \emph{International Conference on Machine Learning}, 7710--7720. PMLR.

\bibitem[{Mohri and Yang(2015)}]{mohri2015accelerating}
Mohri, M.; and Yang, S. 2015.
\newblock Accelerating optimization via adaptive prediction.
\newblock \emph{arXiv preprint arXiv:1509.05760}.

\bibitem[{Mokhtari, Ozdaglar, and Pattathil(2019)}]{MOP19}
Mokhtari, A.; Ozdaglar, A.; and Pattathil, S. 2019.
\newblock Proximal point approximations achieving a convergence rate of o (1/k) for smooth convex-concave saddle point problems: Optimistic gradient and extra-gradient methods.
\newblock \emph{arXiv preprint arXiv:1906.01115}, 3.

\bibitem[{Mokhtari, Ozdaglar, and Pattathil(2020)}]{MOP20}
Mokhtari, A.; Ozdaglar, A.; and Pattathil, S. 2020.
\newblock A unified analysis of extra-gradient and optimistic gradient methods for saddle point problems: Proximal point approach.
\newblock In \emph{International Conference on Artificial Intelligence and Statistics}, 1497--1507. PMLR.

\bibitem[{Narang et~al.(2023)Narang, Faulkner, Drusvyatskiy, Fazel, and Ratliff}]{narang2023multiplayer}
Narang, A.; Faulkner, E.; Drusvyatskiy, D.; Fazel, M.; and Ratliff, L.~J. 2023.
\newblock Multiplayer performative prediction: Learning in decision-dependent games.
\newblock \emph{Journal of Machine Learning Research}, 24(202): 1--56.

\bibitem[{Nedi{\'c} and Ozdaglar(2009)}]{NO09}
Nedi{\'c}, A.; and Ozdaglar, A. 2009.
\newblock Subgradient methods for saddle-point problems.
\newblock \emph{Journal of optimization theory and applications}, 142: 205--228.

\bibitem[{Neff(2016)}]{neff2016talking}
Neff, G. 2016.
\newblock Talking to bots: Symbiotic agency and the case of Tay.
\newblock \emph{International Journal of Communication}.

\bibitem[{Nemirovski(2004)}]{Nemirovski04}
Nemirovski, A. 2004.
\newblock Prox-method with rate of convergence O (1/t) for variational inequalities with Lipschitz continuous monotone operators and smooth convex-concave saddle point problems.
\newblock \emph{SIAM Journal on Optimization}, 15(1): 229--251.

\bibitem[{Nesterov(2007)}]{Nesterov07}
Nesterov, Y. 2007.
\newblock Dual extrapolation and its applications to solving variational inequalities and related problems.
\newblock \emph{Mathematical Programming}, 109(2-3): 319--344.

\bibitem[{Nika et~al.(2024)Nika, Mandal, Singla, and Radanovic}]{nika2024corruption}
Nika, A.; Mandal, D.; Singla, A.; and Radanovic, G. 2024.
\newblock Corruption-Robust Offline Two-Player Zero-Sum Markov Games.
\newblock In \emph{International Conference on Artificial Intelligence and Statistics}, 1243--1251. PMLR.

\bibitem[{Orabona(2019)}]{orabona2019modern}
Orabona, F. 2019.
\newblock A modern introduction to online learning.
\newblock \emph{arXiv preprint arXiv:1912.13213}.

\bibitem[{Ouyang and Xu(2021)}]{ouyang2021lower}
Ouyang, Y.; and Xu, Y. 2021.
\newblock Lower complexity bounds of first-order methods for convex-concave bilinear saddle-point problems.
\newblock \emph{Mathematical Programming}, 185(1-2): 1--35.

\bibitem[{Perdomo et~al.(2020)Perdomo, Zrnic, Mendler-D{\"u}nner, and Hardt}]{perdomo2020performative}
Perdomo, J.; Zrnic, T.; Mendler-D{\"u}nner, C.; and Hardt, M. 2020.
\newblock Performative prediction.
\newblock In \emph{International Conference on Machine Learning}, 7599--7609. PMLR.

\bibitem[{Piliouras and Yu(2023)}]{piliouras2023multi}
Piliouras, G.; and Yu, F.-Y. 2023.
\newblock Multi-agent performative prediction: From global stability and optimality to chaos.
\newblock In \emph{Proceedings of the 24th ACM Conference on Economics and Computation}, 1047--1074.

\bibitem[{Polyak(1987)}]{polyak1987introduction}
Polyak, B.~T. 1987.
\newblock Introduction to optimization. optimization software.
\newblock \emph{Inc., Publications Division, New York}, 1: 32.

\bibitem[{Prasad et~al.(2020)Prasad, Suggala, Balakrishnan, and Ravikumar}]{https://doi.org/10.1111/rssb.12364}
Prasad, A.; Suggala, A.~S.; Balakrishnan, S.; and Ravikumar, P. 2020.
\newblock Robust estimation via robust gradient estimation.
\newblock \emph{Journal of the Royal Statistical Society: Series B (Statistical Methodology)}, 82(3): 601--627.

\bibitem[{Rank et~al.(2024)Rank, Triantafyllou, Mandal, and Radanovic}]{rankperformative}
Rank, B.; Triantafyllou, S.; Mandal, D.; and Radanovic, G. 2024.
\newblock Performative Reinforcement Learning in Gradually Shifting Environments.
\newblock In \emph{The 40th Conference on Uncertainty in Artificial Intelligence}.

\bibitem[{Robinson(1951)}]{Robinson51}
Robinson, J. 1951.
\newblock An iterative method of solving a game.
\newblock \emph{Annals of mathematics}, 296--301.

\bibitem[{Triantafyllou, Singla, and Radanovic(2021)}]{triantafyllou2021blame}
Triantafyllou, S.; Singla, A.; and Radanovic, G. 2021.
\newblock On blame attribution for accountable multi-agent sequential decision making.
\newblock \emph{Advances in Neural Information Processing Systems}, 34: 15774--15786.

\bibitem[{Tseng(1995)}]{Tseng95}
Tseng, P. 1995.
\newblock On linear convergence of iterative methods for the variational inequality problem.
\newblock \emph{Journal of Computational and Applied Mathematics}, 60(1-2): 237--252.

\bibitem[{Tseng(2008)}]{Tseng08}
Tseng, P. 2008.
\newblock On accelerated proximal gradient methods for convex-concave optimization.
\newblock \emph{submitted to SIAM Journal on Optimization}, 2(3).

\bibitem[{Wang, Mianjy, and Arora(2021)}]{pmlr-v139-wang21r}
Wang, Y.; Mianjy, P.; and Arora, R. 2021.
\newblock Robust Learning for Data Poisoning Attacks.
\newblock In Meila, M.; and Zhang, T., eds., \emph{Proceedings of the 38th International Conference on Machine Learning}, volume 139 of \emph{Proceedings of Machine Learning Research}, 10859--10869. PMLR.

\bibitem[{Wei et~al.(2021)Wei, Lee, Zhang, and Luo}]{wei2021linear}
Wei, C.-Y.; Lee, C.-W.; Zhang, M.; and Luo, H. 2021.
\newblock Linear Last-iterate Convergence in Constrained Saddle-point Optimization.
\newblock In \emph{International Conference on Learning Representations}.

\bibitem[{Wu et~al.(2022)Wu, Li, Xu, Zhang, Kailkhura, Kenthapadi, Zhao, and Li}]{wu2022copa}
Wu, F.; Li, L.; Xu, C.; Zhang, H.; Kailkhura, B.; Kenthapadi, K.; Zhao, D.; and Li, B. 2022.
\newblock Copa: Certifying robust policies for offline reinforcement learning against poisoning attacks.
\newblock \emph{arXiv preprint arXiv:2203.08398}.

\bibitem[{Yan and Cao(2024)}]{yan2024zero}
Yan, W.; and Cao, X. 2024.
\newblock Zero-regret performative prediction under inequality constraints.
\newblock \emph{Advances in Neural Information Processing Systems}, 36.

\bibitem[{Yang et~al.(2011)Yang, Mahdavi, Jin, and Zhu}]{yang2011regret}
Yang, T.; Mahdavi, M.; Jin, R.; and Zhu, S. 2011.
\newblock Regret Bound by Variation for Online Convex Optimization.
\newblock \emph{arXiv preprint arXiv:1111.6337}.

\bibitem[{Ye et~al.(2024)Ye, Yang, Gu, and Zhang}]{ye2024corruption}
Ye, C.; Yang, R.; Gu, Q.; and Zhang, T. 2024.
\newblock Corruption-robust offline reinforcement learning with general function approximation.
\newblock \emph{Advances in Neural Information Processing Systems}, 36.

\bibitem[{Zhan et~al.(2022)Zhan, Huang, Huang, Jiang, and Lee}]{ZHH+22}
Zhan, W.; Huang, B.; Huang, A.; Jiang, N.; and Lee, J. 2022.
\newblock Offline reinforcement learning with realizability and single-policy concentrability.
\newblock In \emph{Conference on Learning Theory}, 2730--2775. PMLR.

\bibitem[{Zhang et~al.(2024)Zhang, Yang, Karbasi, and He}]{zhang2024optimal}
Zhang, L.; Yang, J.; Karbasi, A.; and He, N. 2024.
\newblock Optimal guarantees for algorithmic reproducibility and gradient complexity in convex optimization.
\newblock \emph{Advances in Neural Information Processing Systems}, 36.

\bibitem[{Zhang et~al.(2022)Zhang, Chen, Zhu, and Sun}]{zhang2022corruption}
Zhang, X.; Chen, Y.; Zhu, X.; and Sun, W. 2022.
\newblock Corruption-robust offline reinforcement learning.
\newblock In \emph{International Conference on Artificial Intelligence and Statistics}, 5757--5773. PMLR.

\bibitem[{Zhao et~al.(2020)Zhao, Zhang, Zhang, and Zhou}]{10.5555/3495724.3496773}
Zhao, P.; Zhang, Y.-J.; Zhang, L.; and Zhou, Z.-H. 2020.
\newblock Dynamic Regret of Convex and Smooth Functions.
\newblock In \emph{Proceedings of the 34th International Conference on Neural Information Processing Systems}, NIPS'20. Red Hook, NY, USA: Curran Associates Inc.
\newblock ISBN 9781713829546.

\bibitem[{Zhong et~al.(2021)Zhong, Yang, Wang, and Jordan}]{zhong2021can}
Zhong, H.; Yang, Z.; Wang, Z.; and Jordan, M.~I. 2021.
\newblock Can Reinforcement Learning Find Stackelberg-Nash Equilibria in General-Sum Markov Games with Myopic Followers?
\newblock \emph{arXiv preprint arXiv:2112.13521}.

\end{thebibliography}


\appendix 
{\allowdisplaybreaks
\newpage
\onecolumn
\section{Appendix}

\subsection{List of Appendices}
\begin{itemize}
   
    \item \textbf{Proof of Theorem 1}, studying the convergence of robust OFTRL.
    \item \textbf{Special Classes of Smooth Convex-Concave Objectives}. This subsection contains the proof of Theorem 7 and Corollary 1, studying the convergence of robust OMDA on smooth convex-concave functions satisfying the \textit{metric subregularity condition}.
    \item \textbf{Proof of Theorem 2}, deriving lower bounds for the asymptotic error of minimax optimization under noisy gradients.
    \item \textbf{Unbiased Gradient Estimators for the MDP Lagrangian}, where we show that uncorrupted gradient samples created by the formula we propose are unbiased estimators of the true gradients of the Lagrangian.
    \item \textbf{Proof of Theorem 3}, bounding the error of the robust gradient estimators.
    \item \textbf{Proof of Lemma 1}.
    \item \textbf{Proof of Theorem 4}, bounding the distance of the estimated approximate occupancy measure to the optimal occupancy measure of $M_n$.
    \item \textbf{Proof of Theorem 5}, studying the convergence of repeated retraining to an approximately stable policy. 
    \item \textbf{Return Suboptimality of the Approximately Stable Policy} compared to the return of the performatively stable policy.
    \item \textbf{Implementation Details for the Experimental Analysis}.
    
\end{itemize}
\subsection{Proof of Theorem \ref{thr:aoftrl}}
A classical approach to convex-concave optimization is to fix two players and alternate between two no-regret online learning algorithms - one for each player. For smooth convex-concave functions we proposed alternating OFTRL. This approach is based on online learning with predictions (see \citep{10.5555/3495724.3496773,mohri2015accelerating,chiang2012online,yang2011regret}). The idea is to consider gradients as loss functions and use the gradient of the previous iteration as a prediction for the gradient of the current iteration. Due to smoothness, we should be able to use an appropriately small learning rate and ensure that the gradient variation is small and thus predictions are accurate. We will first analyze the online learning algorithm that each one of the \textit{min} and  \textit{max} players deploys. This algorithm is Optimistic FTRL on noisy linear losses (Algorithm \ref{alg:three}). Then, we show how alternating this online algorithm between one \textit{min} and one \textit{max} player converges to an approximate saddle point with $\mathcal{O}(1/T)$ convergence rate. The analysis of optimistic FTRL and alternating optimistic FTRL is based on \citep{orabona2019modern}. We introduce noise in the exact gradient methods found  in \citep{orabona2019modern} and study the effect of this noise in the quality of the calculated approximate saddle point.
\subsubsection{Optimistic FTRL on Noisy Linear Losses}
Suppose there is a hidden sequence of linear loss vectors $g_1, g_2, ... g_T$. At each timestep $t$ we receive a  noisy estimation $\hat{g}_{t}$ of  $g_{t}$. Based on the history of these estimates we apply a version of optimistic FTRL as follows:\\
\begin{algorithm}
\begin{algorithmic}[1]
\caption{Optimistic FTRL on noisy linear losses}\label{alg:three}
\STATE Initialize regulariser $\psi$
\STATE $g_{0}\gets0$
\FOR{$t=1,...T$}
  \STATE Produce prediction $\tilde{g}_t$ of next loss 
  \STATE Play $x_t=argmin_{x\in\mathcal{X}}\ \  \psi(x)+\langle \tilde{g}_t,x\rangle+\sum_{i=1}^{t-1}\langle \hat{g}_i,x\rangle$ 
  
 \STATE  Receive noisy estimation $\hat{g}_{t}$ of  $g_{t}$

 \ENDFOR
 \STATE $\bar{x} \gets \frac{1}{T}\sum\limits_{t=1}^T x_t$ 

 \STATE \textbf{Return} $\bar{x}$ 
 \end{algorithmic}
\end{algorithm} 

\noindent Let $\ell_t(x)=\langle g_t,x\rangle$, $\hat{\ell}_t(x)=\langle \hat{g}_t,x\rangle$ 
and $\tilde{\ell}_t(x)=\langle \tilde{g}_t,x\rangle$. Also let $F_t(x)=\psi(x)+\sum_{i=1}^{t-1}\langle g_i,x\rangle$ and  $\hat{F}_t(x)=\psi(x)+\sum_{i=1}^{t-1}\langle \hat{g}_i,x\rangle$. \\ \\
Moreover let $\zeta_t=g_t-\hat{g}_t$ be the noise in the estimation of loss vectors. \\ \\

\begin{theorem} \label{thr:oftrl}
The output $\bar{x}$ of Algorithm \ref{alg:three} satisfies for all $ u\in\mathcal{X}$  :
\[
\sum_{t=1}^T \ell_t(x_t)-\sum_{t=1}^T \ell_t(u) \leq \psi(u)-\psi(x_1)+  \sum_{t=1}^T \left\langle \hat{g}_t-\tilde{g}_t, x_t-x_{t+1}\right\rangle-\frac{\lambda}{2}\|x_t-x_{t+1}\|^2+2D\sum_{t=1}^T \|\zeta_t\|
\]
where  $D=\max\limits_{x\in\mathcal{X}} \|x\|$ and
$\lambda$ is the strong convexity constant of regulariser $\psi$.
\end{theorem}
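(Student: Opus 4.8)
The plan is to reduce the noisy regret bound to the classical Optimistic-FTRL analysis of \citep{orabona2019modern} for the \emph{observed} losses, and to absorb the mismatch between the observed losses $\hat g_t$ and the true losses $g_t$ into a single additive term that is linear in the noise magnitudes $\|\zeta_t\|$. The starting point is the elementary decomposition, valid for every comparator $u \in \mathcal{X}$,
\begin{equation*}
\sum_{t=1}^T \ell_t(x_t) - \sum_{t=1}^T \ell_t(u) = \sum_{t=1}^T \langle \hat g_t, x_t - u\rangle + \sum_{t=1}^T \langle \zeta_t, x_t - u\rangle,
\end{equation*}
which follows from $g_t = \hat g_t + \zeta_t$ and $\ell_t(x) = \langle g_t, x\rangle$. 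The first sum is exactly the regret that the optimistic update in line 5 of Algorithm \ref{alg:three} incurs against the losses it actually sees, while the second sum is a correction that I would control using boundedness of the domain.

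First I would bound the correction term. By Cauchy--Schwarz and the triangle inequality, $\langle \zeta_t, x_t - u\rangle \le \|\zeta_t\|\,\|x_t - u\| \le 2D\|\zeta_t\|$, since $x_t, u \in \mathcal{X}$ and $\max_{x\in\mathcal{X}}\|x\| \le D$. Summing over $t$ yields $\sum_t \langle \zeta_t, x_t - u\rangle \le 2D \sum_t \|\zeta_t\|$, which is precisely the last term of the claimed bound. The essential point is that this is the \emph{only} place where the noise enters the argument, and it enters solely through its norm, never interacting with the stability terms.

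Next I would establish the observed-regret bound. Here the losses seen by the algorithm are the linear functions $\hat\ell_t(x) = \langle \hat g_t, x\rangle$, the regularizer $\psi$ is $\lambda$-strongly convex with $\arg\min_x \psi(x) = x_1$ (since the initial prediction is $\tilde g_1 = \hat g_0 = 0$, so $x_1$ minimizes $\psi$ alone), and the prediction used at step $t$ is $\tilde g_t$. This is exactly the setting of Optimistic-FTRL with a fixed regularizer, so the analysis of \citep{orabona2019modern} applies essentially verbatim: writing $\hat F_t(x) = \psi(x) + \sum_{i=1}^{t-1}\langle \hat g_i, x\rangle$ and using that $\hat F_t(\cdot) + \langle \tilde g_t, \cdot\rangle$ is $\lambda$-strongly convex and minimized at $x_t$, one compares $x_t$ with the next iterate $x_{t+1}$ and telescopes the regularizer contributions. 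This gives
\begin{equation*}
\sum_{t=1}^T \langle \hat g_t, x_t - u\rangle \le \psi(u) - \psi(x_1) + \sum_{t=1}^T \Big[ \langle \hat g_t - \tilde g_t, x_t - x_{t+1}\rangle - \tfrac{\lambda}{2}\|x_t - x_{t+1}\|^2 \Big].
\end{equation*}
Substituting this together with the correction bound into the decomposition yields the statement.

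The hard part will be the strong-convexity step inside the observed-regret bound, namely extracting the negative quadratic term $-\tfrac{\lambda}{2}\|x_t - x_{t+1}\|^2$. This is the term that, in the subsequent alternating analysis, must be made to dominate the variation terms $\langle \hat g_t - \tilde g_t, x_t - x_{t+1}\rangle$ once smoothness controls $\|\hat g_t - \tilde g_t\|$, and here it must be produced without any dependence on the noise. What makes this routine rather than delicate is precisely that the decomposition above isolates the noise \emph{before} the FTRL machinery is invoked: throughout the optimistic-FTRL argument one works only with the observed losses $\hat g_t$ and predictions $\tilde g_t$, so the strong-convexity and stability terms are identical to the noise-free case, and the noise contributes exactly the benign additive term $2D\sum_t\|\zeta_t\|$.
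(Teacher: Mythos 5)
Your proposal is correct and follows essentially the same route as the paper's proof: decompose the true regret into the observed regret against $\hat g_t$ plus the noise correction $\sum_t\langle\zeta_t,x_t-u\rangle$, bound the latter by $2D\sum_t\|\zeta_t\|$ via Cauchy--Schwarz and domain boundedness, and apply the standard Optimistic-FTRL analysis of \citep{orabona2019modern} (viewing the prediction as part of the regularizer and extracting the $-\tfrac{\lambda}{2}\|x_t-x_{t+1}\|^2$ term from strong convexity) to the observed losses. No gaps.
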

\begin{proof}
Although we apply optimistic FTRL based on the observations $\hat{g}_t$, the regret we wish to minimize is with respect to the true loss vectors $g_t$:
\[Regret(u)=\sum_{t=1}^T \ell_t(x_t)-\sum_{t=1}^T \ell_t(u)\]
We can also define the regret we obtain with respect to the loss estimations $\hat{g}_t$:
\[\hat{Regret}(u)=\sum_{t=1}^T \hat{\ell_t}(x_t)-\sum_{t=1}^T \hat{\ell}_t(u)\]
The two regrets are connected in the following way:
\begin{align*}
Regret(u) &=\sum_{t=1}^T \ell_t(x_t)-\sum_{t=1}^T \ell_t(u) \\
 &=\sum_{t=1}^T \langle g_t,x_t\rangle-\sum_{t=1}^T \langle g_t,u\rangle \\
  &=\sum_{t=1}^T \langle\zeta_t+\hat{g}_t,x_t\rangle-\sum_{t=1}^T \langle \zeta_t+\hat{g}_t,u\rangle \\
  &=\hat{Regret}(u)+\sum_{t=1}^T \langle\zeta_t,x_t\rangle-\sum_{t=1}^T \langle \zeta_t,u\rangle \\
\end{align*}
This way we can derive an upper bound for $Regret(u)$:
\[Regret(u)\leq \hat{Regret}(u)+\sum_{t=1}^T \|\zeta_t\|\|x_t\|+\sum_{t=1}^T \|\zeta_t\|\|u\| \leq \hat{Regret}(u)+2D\sum_{t=1}^T \|\zeta_t\|\]
where $D$ is the radius of domain $\mathcal{X}$. \\ \\ 
Next we will try to bound $\hat{Regret}(u)$. We can interpret the Optimistic-FTRL as FTRL with a regularizer $\tilde{\psi}_t(x)=\psi_t(x)+\tilde{\ell}_t(x)$. Also, note that $\tilde{\ell}_{T+1}(x)$ has no influence on the algorithm, so we can set it to the null function.
Hence, from the equality for FTRL (see \citep{orabona2019modern}), we immediately get
$$
\begin{aligned}
& \sum_{t=1}^T \hat{\ell_t}(x_t)-\sum_{t=1}^T \hat{\ell}_t(u) \\
& \quad \leq \tilde{\ell}_{T+1}(u)+\psi(u)-\min _{x \in V}(\tilde{\ell}_1(x)+\psi(x))+\sum_{t=1}^T\left[\hat{F}_t(x_t)-\hat{F}_{t+1}(x_{t+1})+\hat{\ell}_t(x_t)+\tilde{\ell}_t(x_t)-\tilde{\ell}_{t+1}(x_{t+1})\right] \\
& \quad=\psi(u)-\psi(x_1)+\sum_{t=1}^T\left[\hat{F}_t(x_t)-\hat{F}_{t+1}(x_{t+1})+\hat{\ell}_t(x_t)\right] .
\end{aligned}
$$
Next we focus on the terms $\hat{F}_t(x_t)-\hat{F}_{t+1}(x_{t+1})+\hat{\ell}_t(x_t)$. Observe that $\hat{F}_t(x)+\hat{\ell}_t(x)+i_V(x)$ is $\lambda$-strongly convex w.r.t. $\|\cdot\|$, hence we have
$$
\begin{aligned}
\hat{F}_t(x_t)-\hat{F}_{t+1}(x_{t+1})+\hat{\ell}_t(x_t) & =(\hat{F}_t(x_t)+\hat{\ell}_t(x_t))-(\hat{F}_t(x_{t+1})+\hat{\ell}_t(x_{t+1})) \\
& \leq\left\langle g_t^{\prime}, x_t-x_{t+1}\right\rangle-\frac{\lambda}{2}\|x_t-x_{t+1}\|^2,
\end{aligned}
$$
where $g_t^{\prime} \in \partial(\hat{F}_t(x_t)+\hat{\ell}_t(x_t)+i_V(x_t))$. Observing that $x_t=\operatorname{argmin}_{x \in V} \hat{F}_t(x)+\tilde{\ell}_t(x)$, we have $\mathbf{0} \in \partial(\hat{F}_t(x_t)+\tilde{\ell}_t(x_t)+i_V(x_t))$. Hence, there exists $\tilde{g}_t \in \partial \tilde{\ell}_t(x_t)$ such that $g_t^{\prime}=\hat{g}_t-\tilde{g}_{t}$ So, we have
$$
\begin{aligned}
& \hat{F}_t(x_t)-\hat{F}_{t+1}(x_{t+1})+\hat{\ell}_t(x_t) \leq\left\langle \hat{g}_t-\tilde{g}_t, x_t-x_{t+1}\right\rangle-\frac{\lambda}{2}\|x_t-x_{t+1}\|^2 .
\end{aligned}
$$
Thus we obtain:
\[
\hat{Regret}(u)=\sum_{t=1}^T \hat{\ell_t}(x_t)-\sum_{t=1}^T \hat{\ell}_t(u) \leq \psi(u)-\psi(x_1)+  \sum_{t=1}^T \left\langle \hat{g}_t-\tilde{g}_t, x_t-x_{t+1}\right\rangle-\frac{\lambda}{2}\|x_t-x_{t+1}\|^2
\]
\end{proof}
\subsubsection{Proof of Theorem \ref{thr:aoftrl}\\}
\textbf{Statement.} The output $(\bar{x},\bar{y})$ of Algorithm \ref{alg:aoftrl} satisfies for all $ x\in\mathcal{X}$ and $ y\in\mathcal{Y}$  : 
\begin{align*}
    &f\left(\bar{x}, y\right) -f\left(x, \bar{y}\right) \leq 
    \frac{\psi_X (x)+\psi_Y (y)}{T}+ \frac{\| \nabla_x f(x_1,y_1)  \|^2}{\lambda_X \cdot T} +\frac{\|\nabla_y f(x_1,y_1)\|^2}{\lambda_Y \cdot T} +\frac{6D_X}{T}\sum_{t=1}^{T} \|\zeta_{t}^X\| + \frac{6D_Y}{T}\sum_{t=1}^{T} \|\zeta_{t}^Y\|.
\end{align*}

\begin{proof}
First we decompose the duality gap into the regrets of the two players.
\begin{align*}
f(y,\bar{x})-f(\bar{y},x)&\leq\frac{1}{T}\sum\limits_{t=1}^T f(y,x_t)-\frac{1}{T}\sum\limits_{t=1}^T f(y_t,x) \Leftrightarrow\\
f(y,\bar{x})-f(\bar{y},x)&\leq\frac{1}{T}\sum\limits_{t=1}^T f(y,x_t)-\frac{1}{T}\sum\limits_{t=1}^T f(y_t,x_t)+\frac{1}{T}\sum\limits_{t=1}^T f(y_t,x_t)-\frac{1}{T}\sum\limits_{t=1}^T f(y_t,x) \Rightarrow\\
f(y,\bar{x})-f(\bar{y},x)&\leq\frac{1}{T}\left[\sum\limits_{t=1}^T \langle \nabla_y f(x_t,y_t), y \rangle-\sum\limits_{t=1}^T \langle \nabla_y f(x_t,y_t), y_t \rangle\right]+\frac{1}{T}\left[\sum\limits_{t=1}^T \langle \nabla_x f(x_t,y_t), x_t \rangle-\sum\limits_{t=1}^T \langle \nabla_x f(x_t,y_t), x \rangle\right]\\
& = \frac{1}{T}Regret_X(x)+\frac{1}{T}Regret_Y(y)
\end{align*}

\noindent Each of the players X and Y play Optimistic FTRL on noisy linear losses $\nabla_x f(x_t,y_t)$ and $-\nabla_y f(x_t,y_t)$ respectively .
Using our result for Optimistic FTRL on noisy linear losses we have:
\begin{align*}
f(y,\bar{x})-f(\bar{y},x)&\leq \frac{1}{T}\hat{Regret}_X(x)+\frac{1}{T}\hat{Regret}_Y(y)+\frac{2D_X}{T}\sum_{t=1}^T\|\zeta_t^X\|+\frac{2D_Y}{T}\sum_{t=1}^T\|\zeta_t^Y\|
\end{align*}
where
\[\hat{Regret}_X(x)\leq \psi_X(x)+  \sum_{t=1}^T \left\langle g_{X,t}-g_{X,t-1}, x_t-x_{t+1}\right\rangle-\frac{\lambda_X}{2}\|x_t-x_{t+1}\|^2\]
and
\[\hat{Regret}_Y(y)\leq \psi_Y(y)+  \sum_{t=1}^T \left\langle g_{Y,t}-g_{Y,t-1}, y_t-y_{t+1}\right\rangle-\frac{\lambda_Y}{2}\|y_t-y_{t+1}\|^2\]
For $\hat{Regret}_X(x)$ we have \begin{align*}
\hat{Regret}_X(x)&\leq \psi_X(x)+  \sum_{t=1}^T \left\langle g_{X,t}-g_{X,t-1}, x_t-x_{t+1}\right\rangle-\frac{\lambda_X}{2}\|x_t-x_{t+1}\|^2 \\
 &= \psi_X(x)+  \sum_{t=1}^T \left\langle \nabla_x f(x_t,y_t)-\nabla_x f(x_{t-1},y_{t-1}), x_t-x_{t+1}\right\rangle-\frac{\lambda_X}{2}\|x_t-x_{t+1}\|^2 + \\
 &\quad\sum_{t=1}^T \left\langle \zeta^X_t-\zeta^X_{t-1}, x_t-x_{t+1}\right\rangle\\
 &\leq \psi_X(x)+  \sum_{t=1}^T \left\langle \nabla_x f(x_t,y_t)-\nabla_x f(x_{t-1},y_{t-1}), x_t-x_{t+1}\right\rangle-\frac{\lambda_X}{2}\|x_t-x_{t+1}\|^2 + 4D_X\sum_{t=1}^T \|\zeta^X_t \|\\
\end{align*}

\noindent From the Fenchel-Young inequality, we have:
$$\langle \nabla_x f(x_t,y_t)-\nabla_x f(x_{t-1},y_{t-1}), x_t-x_{t+1}\rangle \leq \frac{\lambda_X }{4}\|x_t-x_{t+1}\| ^2+\frac{1}{\lambda_X }\|\nabla_x f(x_t,y_t)-\nabla_x f(x_{t-1},y_{t-1}\|^2$$ 
Thus we have $\forall x \in \mathcal{X}:$
$$
\hat{Regret}_X(x) \leq \psi_X (x)+\sum_{t=1}^T\left(\frac{1}{\lambda_X }\|\nabla_x f(x_t,y_t)-\nabla_x f(x_{t-1},y_{t-1}\|^2-\frac{\lambda_X }{4}\|x_t-x_{t+1}\| ^2\right) + 4D_X\sum_{t=1}^T \|\zeta^X_t \|
$$
For $t\geq 2$ we have:
$$
\begin{aligned}
\|\nabla_x f(x_t,y_t)-\nabla_x f(x_{t-1},y_{t-1}\|^2 & \leq (\| \nabla_x  f(x_t, y_t)- \nabla_x  f(x_{t-1}, y_t)\| +\| \nabla_x f(x_{t-1}, y_t)- \nabla_x  f(x_{t-1}, y_{t-1})\| )^2 \\
& \leq 2 L_{X X}^2\|x_{t-1}-x_t\| ^2+2 L_{X Y}^2\|y_{t-1}-y_t\| ^2 \\
\end{aligned}
$$
We can proceed in the exact same way for the $Y$-player too.
Summing the regret of the two algorithms, we have:
$$
\begin{aligned}
& \hat{Regret}_X(x)+\hat{Regret}_Y(y) \leq \psi_X (x)+\psi_Y (y)+\frac{\| \nabla_x f(x_1,y_1)  \|^2}{\lambda_X }+\frac{\|\nabla_y f(x_1,y_1)\|^2}{\lambda_Y } \\
& \quad+\sum_{t=2}^T\left(\left(\frac{2 L_{X X}^2}{\lambda_X }+\frac{2 L_{X Y}^2}{\lambda_Y }-\frac{\lambda_X }{4}\right)\|x_t-x_{t-1}\| ^2+\left(\frac{2 L_{Y Y}^2}{\lambda_Y }+\frac{2 L_{X Y}^2}{\lambda_X }-\frac{\lambda_Y }{4}\right)\|y_t-y_{t-1}\| ^2\right) \\
& \quad+4D_X\sum_{t=1}^T \|\zeta^X_t \|+4D_Y\sum_{t=1}^T \|\zeta^Y_t \|
\end{aligned}
$$
Choosing $\lambda_X  \geq 2 \sqrt{2}\left(L_{X X}+L_{X Y} \alpha\right)$ and $\lambda_  Y  \geq 2 \sqrt{2}\left(L_{Y Y}+L_{X Y} / \alpha\right)$ for any $\alpha>0$ kills all the terms in the sum in the second line. In fact, we have:
$$
\frac{2 L_{X X}^2}{\lambda_X }+\frac{2 L_{X Y}^2}{\lambda_Y } \leq \frac{2 L_{X X}^2}{2 \sqrt{2} L_{X X}}+\frac{2 L_{X Y}^2 \alpha}{2 \sqrt{2} L_{X Y}} \leq \frac{\lambda_X }{4}
$$
and similarly for the other term. \\ \\
Thus we have:
$$
\begin{aligned}
\hat{Regret}_X(x)+\hat{Regret}_Y(y) &\leq \psi_X (x)+\psi_Y (y)+\frac{\| \nabla_x f(x_1,y_1)  \|^2}{\lambda_X }+\frac{\|\nabla_y f(x_1,y_1)\|^2}{\lambda_Y } \\ & \quad +4D_X\sum_{t=1}^T \|\zeta^X_t \|+4D_Y\sum_{t=1}^T \|\zeta^Y_t \|\\
\end{aligned}
$$ \\
Finally, for the duality gap we have that for all $x\in\mathcal{X}$ and $y\in\mathcal{Y}$:

$\begin{aligned}
 f\left(\bar{x}, y\right)-f\left(x, \bar{y}\right) \leq & \frac{\psi_X (x)+\psi_Y (y)+\frac{\| \nabla_x f(x_1,y_1)  \|^2}{\lambda_X }+\frac{\|\nabla_y f(x_1,y_1)\|^2}{\lambda_Y }}{T} 
+ \frac{6D_X}{T}\sum_{t=1}^{T} \|\zeta_{t}^X\| + \frac{6D_Y}{T}\sum_{t=1}^{T} \|\zeta_{t}^Y\| \\
\end{aligned}
$
\end{proof}
\subsection{Special Classes of Smooth Convex-Concave Objectives}

In this subsection, we consider classes of smooth convex-concave functions $f$ that satisfy a condition called Saddle Point Metric Subregularity (SP-MS)~\citep{wei2021linear}. For such a function $f$ let $z_t = (x_t, y_t)$, $\mathcal{Z}=\mathcal{X}\times\mathcal{Y}$, $\mathcal{Z^*}$ be the set of saddle points of $f$ and $F(z_t)=\left(\nabla_x f(x_t,y_t),-\nabla_y f(x_t,y_t)\right)$. Moreover we use $\Pi_{\mathcal{A}}(x)$ to denote the projection of a vector $x$ on a set $\mathcal{A}$. Then the SP-MS condition is defined as follows. 

\begin{definition}\label{spms} 
 Given constants $\beta\geq0$ and $C>0$, function $F$ over a domain $\mathcal{Z}$ satisfies the {\em Saddle-Point Metric Subregularity} condition iff for all $z \in \mathcal{Z}\backslash\mathcal{Z^*}$: 
$$\sup_{z^{\prime} \in \mathcal{Z}} \frac{F(z)^{\top}\left(z-z^{\prime}\right)}{\left\|z-z^{\prime}\right\|}\geq C\left\|z-\Pi_{\mathcal{Z}^{*}}\left(z\right)\right\|^{\beta+1},
$$

\end{definition}
The SP-MS condition with $\beta=0$ captures two well known settings: bilinear games over polytope constraints and smooth strongly convex - strongly concave functions 
(see Theorems 5 and 6 in \citep{wei2021linear}). 
In the context of performative RL this condition is relevant because one may use regularization for both variables $d$ and $h$ of the  Lagrangian \eqref{eq.perf_rl.langrangian}. In this case, the regularized Lagrangian is a smooth strongly convex - strongly concave function and the SP-MS condition holds with $\beta=0$. It may also hold that the Lagrangian as defined in equation \eqref{eq.perf_rl.langrangian} already satisfies SP-MS with $\beta=0$ due to its regularized bilinear structure, however in this work we leave this as an open question.

To achieve better convergence guarantees under the SP-MP condition, we consider a robust version of Optimistic Mirror Descent Ascent (OMDA) algorithm, defined in Algorithm \ref{alg:omda}. This algorithm assumes that $\|F(z)-F(z')\|_{q}^{2}\leq L^2 \|z-z'\|_{p}^{2}$ for $q\geq 1$ with $\frac{1}{p}+\frac{1}{q}=1$, where $L$ is a given constant. The learning rate $\eta$ depends on constant $L$, and should satisfy $0< \eta <\frac{1}{8L}$. Here we focus on the Euclidean norm case, where $p=q=2$ and we use $\psi(u)=\frac{1}{2}\|u\|^2$.

\begin{algorithm}
\begin{algorithmic}[1]
\caption{Robust OMDA}\label{alg:omda}
\STATE Initialize $0 < \eta < \frac{1}{8 L}$ 
\STATE $(z_{0},\widehat z_{1})\gets(\textbf{0},\textbf{0})$\\
\FOR{$t=1,...T$}
  \STATE $z_t \gets \argmin_{z \in \mathcal{Z}}\{\eta\langle z,\tilde{F}(z_{t-1})\rangle+ D_{\psi}(z,\widehat{z}_{t})\}$ 
  \STATE $\widehat{z}_{t+1} \gets \argmin_{z \in \mathcal{Z}}\{\eta\langle z,\tilde{F}(z_{t})\rangle+ D_{\psi}(z,\widehat{z}_{t})\}$ 
  \STATE Calculate robust estimation $\tilde{F}(z_t)$ of $F(z_t)$
 \ENDFOR
 \STATE \textbf{Return} $z_T$ 
 \end{algorithmic}
\end{algorithm} 

The errors of the robust gradients are denoted as before: $$\tilde{F}(z_t)=\left(\nabla_x f(x_t,y_t)+\zeta_{t}^X,-\nabla_y f(x_t,y_t)-\zeta_{t}^Y\right)$$ Moreover let $\operatorname{dist}\left(\widehat{z}_{t}, \mathcal{Z}^{*}\right) $ denote the distance between $\widehat{z}_{t}$ and the set $\mathcal{Z}^{*}$ of saddle points of $f$. Then, the following guarantee holds.
\begin{theorem}\label{thrm8usc}
Robust OMDA satisfies the following last iterate convergence guarantee for functions satisfying SP-MS with $\beta=0$:
\begin{align*}
\operatorname{dist}^{2}\left(\widehat{z}_{t}, \mathcal{Z}^{*}\right) \leq &\operatorname{dist}^{2}\left(\widehat{z}_{1}, \mathcal{Z}^{*}\right)\left(\frac{1}{1+C^{\prime}}\right)^{t-1} + \\
    &\sum\limits_{i=1}^{t-1} \left(\frac{1}{1+C^{\prime}}\right)^{t-i}(\epsilon_i+\delta_i)
\end{align*}
where $C'=\min \left\{\frac{\eta^{2} C^{2}}{8}, \frac{1}{2}\right\}$, 
$\epsilon_t=\frac{3}{8}\eta^2\left(\|\zeta_{t}^X\|^2+\|\zeta_{t}^Y\|^2\right)$ and
$\delta_t=3\eta(D_X+D_Y)(\|\zeta_{t-1}^X\|+\|\zeta_{t}^X\|+\|\zeta_{t-1}^Y\|+\|\zeta_{t}^Y\|)$.
\end{theorem}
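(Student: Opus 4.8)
The plan is to reduce the theorem to a single one-step contraction of the squared distance to $\mathcal{Z}^*$ and then unroll it. Writing $q = \frac{1}{1+C'}$, the claimed bound is precisely the unrolling from $\widehat z_1$ of the recursion
\[
\operatorname{dist}^2(\widehat z_{t+1}, \mathcal{Z}^*) \le q\left(\operatorname{dist}^2(\widehat z_t, \mathcal{Z}^*) + \epsilon_t + \delta_t\right),
\]
so it suffices to establish this inequality for each $t$. Throughout I would abbreviate $\zeta_t = (\zeta_t^X, -\zeta_t^Y)$, so that $\tilde F(z_t) = F(z_t) + \zeta_t$ and $\|\zeta_t\|^2 = \|\zeta_t^X\|^2 + \|\zeta_t^Y\|^2$, and I would let $z^* = \Pi_{\mathcal{Z}^*}(\widehat z_t)$ denote the projection realizing $\operatorname{dist}(\widehat z_t, \mathcal{Z}^*)$.

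First I would record the first-order optimality (variational inequality) conditions for the two proximal steps of Algorithm \ref{alg:omda}. Since $\psi(u) = \tfrac12\|u\|^2$, we have $D_{\psi}(z,w) = \tfrac12\|z-w\|^2$ and both steps are Euclidean projections, whose optimality conditions read, for all $z \in \mathcal{Z}$,
\[
\langle \eta \tilde F(z_{t-1}) + z_t - \widehat z_t,\ z - z_t\rangle \ge 0, \qquad \langle \eta \tilde F(z_t) + \widehat z_{t+1} - \widehat z_t,\ z - \widehat z_{t+1}\rangle \ge 0.
\]
Instantiating these at $z = \widehat z_{t+1}$ and $z = z^*$ respectively, adding them, and invoking the three-point identity $\langle \widehat z_{t+1} - \widehat z_t,\, z^* - \widehat z_{t+1}\rangle = \tfrac12\big(\|z^* - \widehat z_t\|^2 - \|z^* - \widehat z_{t+1}\|^2 - \|\widehat z_{t+1} - \widehat z_t\|^2\big)$ yields a descent inequality of the form
\[
\|\widehat z_{t+1} - z^*\|^2 \le \|\widehat z_t - z^*\|^2 - \|\widehat z_{t+1} - \widehat z_t\|^2 - 2\eta\langle F(z_t), z_t - z^*\rangle + 2\eta\langle F(z_t) - F(z_{t-1}), \widehat z_{t+1} - z_t\rangle + \text{(noise)},
\]
where the noise collects the inner products $\langle \zeta_t, \cdot\rangle$ and $\langle \zeta_{t-1}, \cdot\rangle$ produced by replacing $\tilde F$ with $F$ in both updates.

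Next I would handle the three non-noise residual terms as in the exact-gradient OMDA analysis of \cite{wei2021linear}. Monotonicity of $F$ together with the saddle-point property of $z^*$ gives $\langle F(z_t), z_t - z^*\rangle \ge \langle F(z^*), z_t - z^*\rangle \ge 0$, so this term only helps. The smoothness assumption $\|F(z)-F(z')\| \le L\|z-z'\|$ bounds the optimistic coupling term $\langle F(z_t) - F(z_{t-1}), \widehat z_{t+1} - z_t\rangle$ by quantities controlled, after Young's inequality, by $\|z_t - z_{t-1}\|^2$ and $\|\widehat z_{t+1} - z_t\|^2$; the condition $0 < \eta < \tfrac{1}{8L}$ is precisely what makes these absorbable into the negative $-\|\widehat z_{t+1} - \widehat z_t\|^2$ term carried across consecutive iterations. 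The crucial step is then to apply the SP-MS condition with $\beta = 0$ from Definition \ref{spms}: selecting the supremizing $z'$ and combining it with the retained $-2\eta\langle F(z_t), z_t - z^*\rangle$ converts a portion of the gap into a strictly negative multiple of $\operatorname{dist}^2(\widehat z_t, \mathcal{Z}^*)$, producing the factor $C' = \min\{\eta^2 C^2/8,\ 1/2\}$ and hence the contraction $\|\widehat z_{t+1} - z^*\|^2 \le q\big(\|\widehat z_t - z^*\|^2 + \text{noise}\big)$.

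Finally I would collect the noise terms. The quadratic contributions, arising when $\zeta_t$ is absorbed via Young's inequality against a squared-iterate term, account for $\epsilon_t = \tfrac38\eta^2(\|\zeta_t^X\|^2 + \|\zeta_t^Y\|^2)$; the remaining linear contributions are bounded by Cauchy–Schwarz using the domain radii, with the relevant displacements bounded by $D_X + D_Y$, giving $\delta_t = 3\eta(D_X + D_Y)(\|\zeta_{t-1}^X\| + \|\zeta_t^X\| + \|\zeta_{t-1}^Y\| + \|\zeta_t^Y\|)$, where both $\zeta_{t-1}$ and $\zeta_t$ appear because the two proximal steps query the oracle at $z_{t-1}$ and $z_t$. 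Substituting these into the contraction and unrolling from $\widehat z_1$ gives the stated geometric sum. I expect the main obstacle to be the coupling of the SP-MS inequality—whose left-hand side is a supremum over $z' \in \mathcal{Z}$—with the concrete descent direction $z_t - z^*$, together with the accounting needed to match the exact constant $C' = \min\{\eta^2 C^2/8,\ 1/2\}$; obtaining the noise constants $\tfrac38\eta^2$ and $3\eta(D_X+D_Y)$ is then routine but requires careful bookkeeping of the Young's-inequality weights.
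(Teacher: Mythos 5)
Your overall skeleton (a one-step contraction unrolled into a geometric sum, with quadratic noise feeding $\epsilon_t$ and linear noise feeding $\delta_t$) matches the paper's, but the mechanism you propose for producing the contraction factor $C'$ is the wrong one, and it fails precisely in one of the two motivating cases. You plan to combine the SP-MS condition with ``the retained $-2\eta\langle F(z_t), z_t - z^*\rangle$'' to manufacture a negative multiple of $\operatorname{dist}^2(\widehat z_t,\mathcal Z^*)$. For a bilinear game $f(x,y)=x^\top A y$ — one of the two settings where SP-MS with $\beta=0$ is known to hold — one has $\langle F(z),z-z^*\rangle = x^\top A y - x^\top A y = 0$ identically, so that term carries no signal whatsoever and cannot be converted into a contraction. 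Moreover, the SP-MS left-hand side is a supremum over $z'\in\mathcal Z$ of $F(z)^\top(z-z')/\|z-z'\|$, whereas your descent inequality only retains the inner product against the fixed comparator $z^*=\Pi_{\mathcal Z^*}(\widehat z_t)$; you cannot swap in the supremizing $z'$ after the fact. The paper's proof (following Wei et al.) extracts the contraction from an entirely different quantity: the iterate movement $\alpha_t=\|\widehat z_{t+1}-z_t\|^2+\|z_t-\widehat z_t\|^2$. Its Lemma 5 uses the first-order optimality conditions of the two proximal steps to show that $4\alpha_t$ plus an $O(\eta^2\|\zeta_t\|^2)$ noise term dominates $\eta^2\sup_{z'}[F(\widehat z_{t+1})^\top(\widehat z_{t+1}-z')]_+^2/\|\widehat z_{t+1}-z'\|^2$, and only then applies SP-MS — at $\widehat z_{t+1}$, not $z_t$ — to get $\alpha_t\ge C'\Theta_{t+1}-\epsilon_t$. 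Combined with the descent inequality $\Theta_{t+1}\le\Theta_t-\tfrac{15}{16}\alpha_t+\delta_t$ this yields $(1+C')\Theta_{t+1}\le\Theta_t+\epsilon_t+\delta_t$, which is where the $(1+C')^{-1}$ rate (rather than a $1-C'$ rate) comes from.

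A second, smaller gap: your claimed recursion is purely in $\operatorname{dist}^2(\widehat z_t,\mathcal Z^*)$, but the optimistic descent inequality unavoidably leaves a carried term $+\tfrac{1}{16}D_\psi(\widehat z_t,z_{t-1})$ on the right-hand side (from bounding $\langle F(z_t)-F(z_{t-1}),\widehat z_{t+1}-z_t\rangle$ via smoothness and $\|z_t-z_{t-1}\|^2\le 2\|z_t-\widehat z_t\|^2+2\|\widehat z_t-z_{t-1}\|^2$). The paper therefore runs the recursion on the augmented potential $\Theta_t=\|\widehat z_t-\Pi_{\mathcal Z^*}(\widehat z_t)\|^2+\tfrac{1}{16}\|\widehat z_t-z_{t-1}\|^2$ and only recovers the stated bound via $\operatorname{dist}^2(\widehat z_t,\mathcal Z^*)\le\Theta_t$ and $\Theta_1=\operatorname{dist}^2(\widehat z_1,\mathcal Z^*)$ (which uses the initialization $z_0=\widehat z_1$). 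This part is repairable bookkeeping, but the SP-MS step above is a genuine missing idea.
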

We observe an interesting behaviour at the tuning of the learning rate. Decreasing the learning rate we face a trade-off between convergence speed and last iterate accuracy. If the gradient corruption is upper bounded by some universal constant, i.e., if $\|\zeta_{t}^X\|\leq \zeta_X$ and $\|\zeta_{t}^Y\|\leq \zeta_Y$ for all $t \in[T]$, the following result holds. 
%
%
%
\begin{corollary}\label{cor1}
At the $t$-th iteration of Robust OMDA we have the following guarantee:
\begin{align*}
\operatorname{dist}^{2}\left(\widehat{z}_{t}, \mathcal{Z}^{*}\right) \leq  (\operatorname{dist}^{2}\left(\widehat{z}_{1}, \mathcal{Z}^{*}\right) -d_{\infty})\left(\frac{1}{1+C^{\prime}}\right)^{t-1} + d_{\infty}
\end{align*}
where $d_{\infty}=\max\left\{\frac{8}{\eta^2C^2}, 2 \right\} (Z\eta+Z^2\eta^2)$, 
$C'=\min \left\{\frac{\eta^{2} C^{2}}{8}, \frac{1}{2}\right\}$ and $Z=3(\zeta_X+\zeta_Y)\max\{D_X+D_Y,1\}$.
\end{corollary}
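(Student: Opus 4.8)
The plan is to treat Corollary~\ref{cor1} as a direct specialization of Theorem~\ref{thrm8usc} to the case of uniformly bounded gradient corruption, so no new convergence machinery is needed. First I would collapse the time-varying per-step error terms $\epsilon_i,\delta_i$ into a single time-uniform constant. Substituting $\|\zeta_i^X\|\le\zeta_X$ and $\|\zeta_i^Y\|\le\zeta_Y$ into the definitions $\epsilon_i=\frac{3}{8}\eta^2(\|\zeta_i^X\|^2+\|\zeta_i^Y\|^2)$ and $\delta_i=3\eta(D_X+D_Y)(\|\zeta_{i-1}^X\|+\|\zeta_i^X\|+\|\zeta_{i-1}^Y\|+\|\zeta_i^Y\|)$ yields $\epsilon_i\le\frac{3}{8}\eta^2(\zeta_X^2+\zeta_Y^2)$ and $\delta_i\le 6\eta(D_X+D_Y)(\zeta_X+\zeta_Y)$. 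The quadratic piece then feeds a $Z^2\eta^2$ term and the linear piece a $Z\eta$ term, where $Z=3(\zeta_X+\zeta_Y)\max\{D_X+D_Y,1\}$ is defined precisely so that the numerical factors and the $D_X+D_Y$ dependence are absorbed, giving the clean time-uniform bound $\epsilon_i+\delta_i\le Z\eta+Z^2\eta^2=:B$ for every $i$ (the $\max\{\cdot,1\}$ guard ensuring the bound also holds in the regime $D_X+D_Y<1$).

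Next I would substitute $B$ into the guarantee of Theorem~\ref{thrm8usc}. Writing $q:=\frac{1}{1+C'}$, the uniform error factors out of the summation and leaves a finite geometric series:
\[
\sum_{i=1}^{t-1} q^{\,t-i} = \sum_{k=1}^{t-1} q^{k} = \frac{q\,(1-q^{t-1})}{1-q} = \frac{1}{C'}\bigl(1-q^{t-1}\bigr),
\]
where the last equality uses $\frac{q}{1-q}=\frac{1}{C'}$, immediate from $q=\frac{1}{1+C'}$. Hence $\operatorname{dist}^{2}(\widehat z_t,\mathcal{Z}^{*})\le \operatorname{dist}^{2}(\widehat z_1,\mathcal{Z}^{*})\,q^{t-1} + \frac{B}{C'}\bigl(1-q^{t-1}\bigr)$.

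Finally I would identify the limiting term and rearrange. Since $C'=\min\{\frac{\eta^2 C^2}{8},\frac12\}$, its reciprocal is $\frac{1}{C'}=\max\{\frac{8}{\eta^2 C^2},2\}$, so setting $d_{\infty}:=\frac{B}{C'}=\max\{\frac{8}{\eta^2 C^2},2\}(Z\eta+Z^2\eta^2)$ reproduces the constant in the statement. Rewriting the bound as $\operatorname{dist}^{2}(\widehat z_1,\mathcal{Z}^{*})\,q^{t-1}+d_{\infty}(1-q^{t-1}) = (\operatorname{dist}^{2}(\widehat z_1,\mathcal{Z}^{*})-d_{\infty})\,q^{t-1}+d_{\infty}$ gives exactly the claimed form.

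The only genuine obstacle is the first step: verifying that the per-step error really does collapse into $Z\eta+Z^2\eta^2$ with the stated $Z$, i.e., tracking that the constant $\frac{3}{8}$, the factor coming from the two consecutive noise indices appearing in $\delta_i$, and the $\max\{D_X+D_Y,1\}$ guard all combine so that the linear and quadratic contributions are dominated by $Z\eta$ and $Z^2\eta^2$ respectively. Everything after this bookkeeping is the exact geometric-series summation together with the one-line rearrangement, so the remainder of the argument is routine.
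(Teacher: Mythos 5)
Your proposal is correct and follows essentially the same route as the paper's proof: bound $\epsilon_i+\delta_i$ uniformly by $Z\eta+Z^2\eta^2$, sum the geometric series $\sum_{i=1}^{t-1}(1+C')^{-(t-i)}=\frac{1}{C'}(1-(1+C')^{-(t-1)})$, and identify $d_\infty=(Z\eta+Z^2\eta^2)/C'$ before the final rearrangement. The one bookkeeping point you flag is real and is glossed over in the paper as well: with $Z=3(\zeta_X+\zeta_Y)\max\{D_X+D_Y,1\}$ the two consecutive noise indices give $\delta_i\le 6\eta(D_X+D_Y)(\zeta_X+\zeta_Y)$, which is $\le 2Z\eta$ rather than $Z\eta$, so the stated constant is loose by a benign factor of $2$ in both your argument and the paper's.
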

Based on the above we have two options for $\eta$ (see the next subsection for details): 
\begin{itemize}
    \item If we want to maximize the convergence speed (or equivalently maximize $C^{\prime}$) we set $\eta=\frac{1}{8L}$.
    \item If we want to minimize the asymptotic error $d_{\infty}$ we set $\eta=\min\{\frac{1}{8L},\frac{2}{C}\}$.
\end{itemize}
In both cases the algorithm converges exponentially fast towards a point that has distance $\mathcal{O}((\zeta_X+\zeta_Y)(D_X+D_Y+1))$ to the set of optimal solutions. The dependence on  $(\zeta_X+\zeta_Y)$ is optimal but perhaps the factor $(D_X+D_Y)$ could be avoided, as the lower bounds 
in Theorem \ref{thr:lbext} suggest.

\subsubsection{Choice of OMDA Learning Rate}
Assuming that $\operatorname{dist}^{2}\left(\widehat{z}_{1}, \mathcal{Z}^{*}\right) > d_{\infty}$ we see that the upper bound of $\operatorname{dist}^{2}\left(\widehat{z}_{t}, \mathcal{Z}^{*}\right)$ is decreasing in $t$. The opposite case is trivial and it implies that we should keep the initial point and run no iterations of the algorithm. Thus, we will focus on the case that the upper bound is decreasing in $t$.
\begin{itemize}
    \item In terms of convergence speed, $C'$ is increasing in $\eta$ so for fast convergence we want an $\eta$ as high as possible, satisfying the constraint that $\eta\leq\frac{1}{8L}$. Thus we set $\eta=\frac{1}{8L}$.
    \item In terms of asymptotic error, we want to minimize $d_{\infty}$.
    \begin{itemize}
        \item  If $\eta \leq \frac{2}{C}$, then $d_{\infty}=\frac{8Z}{\eta C^2}+\frac{8Z^2}{C^2}$, which is decreasing in $\eta$ so we want $\eta$ to be as high as possible satisfying the constraint that $\eta\leq\frac{1}{8L}$ (and $\eta \leq \frac{2}{C}$). Thus we set $\eta=\min\{\frac{1}{8L},\frac{2}{C}\}$.
        \item If $\eta\geq\frac{2}{C}$ (and $\frac{2}{C}\leq\frac{1}{8L}$), then $d_{\infty}=2 (Z\eta+Z^2\eta^2)$. This is increasing in $\eta$, so we want $\eta$ to be as low as possible, satisfying $\eta\geq\frac{2}{C}$ and $\eta\leq\frac{1}{8L}$. Thus we set $\eta=\frac{2}{C}=\min\{\frac{1}{8L},\frac{2}{C}\}$.
    \end{itemize}
    As we see, the general rule to minimize $d_{\infty}$ is to set $\eta=\min\{\frac{1}{8L},\frac{2}{C}\}$. 
\end{itemize}

\if0
We consider the Noisy Optimistic Mirror Descent Ascent (OMDA) algorithm for solving problem \ref{minmax}:
\begin{align*}
    z_t &= \argmin_{z \in \mathcal{Z}}\{\eta\langle z,\tilde{F}(z_{t-1})+ D_{\psi}(z,\widehat{z}_{t})\rangle \\
    \widehat{z}_{t+1} &= \argmin_{z \in \mathcal{Z}}\{\eta\langle z,\tilde{F}(z_{t})+ D_{\psi}(z,\widehat{z}_{t})\rangle
\end{align*}
where $\mathcal{Z}=\mathcal{X}\times\mathcal{Y},z_t=(x_t,y_t), \widehat{z}_t=(\widehat{x}_t,\widehat{y}_t)\ $ and \\ $\tilde{F}(z_t)=\left(\nabla_x f(x_t,y_t)+\zeta_{t}^X,-\nabla_y f(x_t,y_t)-\zeta_{t}^Y\right)$. \\
Let $F(z_t)=\left(\nabla_x f(x_t,y_t),-\nabla_y f(x_t,y_t)\right)$. 
We assume that $\operatorname{dist}_{q}^{2}(F(z),F(z'))\leq L^2 \operatorname{dist}_{p}^{2}(z,z')$ for $q\geq 1$ with $\frac{1}{p}+\frac{1}{q}=1$. Also, let $D_X=\max\limits_{x\in\mathcal{X}} \|x\|_p$ and  $D_Y=\max\limits_{y\in\mathcal{Y}} \|y\|_p$.
For the learning rate we set $\eta<\frac{1}{8L}$. As regularizer we use $\psi(u)=\frac{1}{2}\|u\|^2$. For the rest of the analysis we set $p=q=2$ as we interested in $L2$ norms.
\begin{definition}\label{spms}
 (Saddle-Point Metric Subregularity (SP-MS)). The SP-MS condition for a function $F$ over a domain $\mathcal{Z}$ demands that
$$\sup_{z^{\prime} \in \mathcal{Z}} \frac{F(z)^{\top}\left(z-z^{\prime}\right)}{\left\|z-z^{\prime}\right\|}\geq C\left\|z-\Pi_{\mathcal{Z}^{*}}\left(z\right)\right\|^{\beta+1}
$$
for all $z \in \mathcal{Z}\backslash\mathcal{Z^*}$ and some constant parameters $\beta\geq0$ and $C>0$.
\end{definition}
The SP-MS condition with $\beta=0$ captures two well known settings: bilinear games over polytope constraints and strongly convex - strongly concave functions (for proof see Theorems 5 and 6 in \cite{wei2021linear}).
\begin{theorem}\label{thrm8usc}
Noisy OMDA, as we have defined it above, satisfies the following last iterate convergence guarantee for functions satisfying SP-MS with $\beta=0$:
\begin{align*}
    \operatorname{dist}^{2}\left(\widehat{z}_{t}, \mathcal{Z}^{*}\right) \leq &\operatorname{dist}^{2}\left(\widehat{z}_{1}, \mathcal{Z}^{*}\right)\left(\frac{1}{1+C^{\prime}}\right)^{t-1} + \\
    &\sum\limits_{i=1}^{t-1} \left(\frac{1}{1+C^{\prime}}\right)^{t-i}(\epsilon_i+\delta_i)
\end{align*}
where $C'=\min \left\{\frac{\eta^{2} C^{2}}{8}, \frac{1}{2}\right\}$, 
$\epsilon_t=\frac{3}{8}\eta^2\left(\|\zeta_{t}^X\|^2+\|\zeta_{t}^Y\|^2\right)$ and \\
$\delta_t=3\eta(D_X+D_Y)\left(\|\zeta_{t-1}^X\|+\|\zeta_{t}^X\|+\|\zeta_{t-1}^Y\|+\|\zeta_{t}^Y\|\right)$
\end{theorem}
$T=O(log(1/\epsilon)/log(1 + C'^2))<(log(1/\epsilon)/C'^2)=\tilda(O)(k^2)$
We observe an interesting behaviour at the tuning of the learning rate. Decreasing the learning rate we face a trade-off between convergence speed and last iterate accuracy.\\ 
Suppose the gradient noise is upper bounded as follows:
\[\|\zeta_{t}^X\|\leq \zeta_X, \quad \|\zeta_{t}^Y\|\leq \zeta_Y \quad t\in [T]\]
Then we can derive the following Corollary
\begin{corollary}\label{cor1}
At the $t$-th iteration of Noisy OMDA we have the following guarantee:
\begin{align*}
\operatorname{dist}^{2}\left(\widehat{z}_{t}, \mathcal{Z}^{*}\right) \leq  (\operatorname{dist}^{2}\left(\widehat{z}_{1}, \mathcal{Z}^{*}\right) -d_{\infty})\left(\frac{1}{1+C^{\prime}}\right)^{t-1} + d_{\infty}
\end{align*}
where $d_{\infty}=\max\left\{\frac{8}{\eta^2C^2}, 2 \right\} (Z\eta+Z^2\eta^2)$, \\
$C'=\min \left\{\frac{\eta^{2} C^{2}}{8}, \frac{1}{2}\right\}$\\ and $Z=3(\zeta_X+\zeta_Y)\max\{D_X+D_Y,1\}$.
\end{corollary}
Based on the above we have two options for $\eta$:
\begin{corollary}\label{cor2}
Regarding the upper bound in Corollary \ref{cor1}:
\begin{itemize}
    \item If we want to maximize the convergence speed (or equivalently maximize $C^{\prime}$) we set $\eta=\frac{1}{8L}$.
    \item If we want to minimize the asymptotic error $d_{\infty}$ we set $\eta=\min\{\frac{1}{8L},\frac{2}{C}\}$.
\end{itemize}
\end{corollary}
In both cases the algorithm converges exponentially fast towards a point that has distance $\mathcal{O}((\zeta_X+\zeta_Y)(D_X+D_Y+1))$ to the set of optimal solutions.

\fi
\subsubsection{Proof of Theorem \ref{thrm8usc}}
To prove Theorem \ref{thrm8usc} we follow the same steps as \cite{wei2021linear}, with the difference that we use noisy gradients instead of exact ones. We modify two auxiliary lemmas to account for the gradient noise. Lemmas \ref{lemmausc1} and \ref{lemmausc4} generalize Lemmas 1 and 4 respectively in  \citep{wei2021linear}. Then, using these two lemmas, we derive the result stated in Theorem \ref{thrm8usc}.\\\\
\textbf{Statement.} Robust OMDA satisfies the following last iterate convergence guarantee for functions satisfying SP-MS with $\beta=0$:
\begin{align*}
\operatorname{dist}^{2}\left(\widehat{z}_{t}, \mathcal{Z}^{*}\right) \leq &\operatorname{dist}^{2}\left(\widehat{z}_{1}, \mathcal{Z}^{*}\right)\left(\frac{1}{1+C^{\prime}}\right)^{t-1} + \sum\limits_{i=1}^{t-1} \left(\frac{1}{1+C^{\prime}}\right)^{t-i}(\epsilon_i+\delta_i)
\end{align*}
where $C'=\min \left\{\frac{\eta^{2} C^{2}}{8}, \frac{1}{2}\right\}$, 
$\epsilon_t=\frac{3}{8}\eta^2\left(\|\zeta_{t}^X\|^2+\|\zeta_{t}^Y\|^2\right)$ and
$\delta_t=3\eta(D_X+D_Y)(\|\zeta_{t-1}^X\|+\|\zeta_{t}^X\|+\|\zeta_{t-1}^Y\|+\|\zeta_{t}^Y\|)$.

\begin{lemma}\label{lemmausc1}
The iterates of OMDA satisfy:
    \[\eta F\left(z_{t}\right)^{\top}\left(z_{t}-z\right) 
 \leq D_{\psi}\left(z, \widehat{z}_{t}\right)-D_{\psi}\left(z, \widehat{z}_{t+1}\right)-D_{\psi}\left(\widehat{z}_{t+1}, z_{t}\right)-\frac{15}{16} D_{\psi}\left(z_{t}, \widehat{z}_{t}\right)+\frac{1}{16} D_{\psi}\left(\widehat{z}_{t}, z_{t-1}\right)+\delta_t\]\\
where $\delta_t=3\eta(D_X+D_Y)\left(\|\zeta_{t-1}^X\|_q+\|\zeta_{t}^X\|_q+\|\zeta_{t-1}^Y\|_q+\|\zeta_{t}^Y\|_q\right)$
\end{lemma}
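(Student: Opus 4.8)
The plan is to reproduce the exact-gradient proof of Lemma~1 in \citep{wei2021linear} verbatim, but carried out with the noisy operator $\tilde{F}$ in the place of $F$, and then to peel off the extra inner products that the noise introduces and absorb them into $\delta_t$. First I would write the first-order optimality conditions for the two OMDA updates. Since $z_t=\argmin_{z\in\mathcal{Z}}\{\eta\langle z,\tilde{F}(z_{t-1})\rangle+D_{\psi}(z,\widehat{z}_t)\}$ and $\widehat{z}_{t+1}=\argmin_{z\in\mathcal{Z}}\{\eta\langle z,\tilde{F}(z_t)\rangle+D_{\psi}(z,\widehat{z}_t)\}$, the variational inequalities together with the three-point identity $\langle\nabla\psi(b)-\nabla\psi(c),a-b\rangle=D_{\psi}(a,c)-D_{\psi}(a,b)-D_{\psi}(b,c)$ give, for every $z\in\mathcal{Z}$,
\[
\eta\langle\tilde{F}(z_t),\widehat{z}_{t+1}-z\rangle\le D_{\psi}(z,\widehat{z}_t)-D_{\psi}(z,\widehat{z}_{t+1})-D_{\psi}(\widehat{z}_{t+1},\widehat{z}_t),
\]
and, instantiating the $z_t$-condition at $z=\widehat{z}_{t+1}$,
\[
\eta\langle\tilde{F}(z_{t-1}),z_t-\widehat{z}_{t+1}\rangle\le D_{\psi}(\widehat{z}_{t+1},\widehat{z}_t)-D_{\psi}(\widehat{z}_{t+1},z_t)-D_{\psi}(z_t,\widehat{z}_t).
\]
Adding these (the $D_{\psi}(\widehat{z}_{t+1},\widehat{z}_t)$ terms cancel) and inserting the telescoping split $\langle\tilde{F}(z_t),z_t-z\rangle=\langle\tilde{F}(z_t),\widehat{z}_{t+1}-z\rangle+\langle\tilde{F}(z_t)-\tilde{F}(z_{t-1}),z_t-\widehat{z}_{t+1}\rangle+\langle\tilde{F}(z_{t-1}),z_t-\widehat{z}_{t+1}\rangle$ produces the standard optimistic bound
\begin{align*}
\eta\langle\tilde{F}(z_t),z_t-z\rangle\le{}&D_{\psi}(z,\widehat{z}_t)-D_{\psi}(z,\widehat{z}_{t+1})-D_{\psi}(\widehat{z}_{t+1},z_t)-D_{\psi}(z_t,\widehat{z}_t)\\
&+\eta\langle\tilde{F}(z_t)-\tilde{F}(z_{t-1}),z_t-\widehat{z}_{t+1}\rangle.
\end{align*}

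Next I would substitute $\tilde{F}(z_s)=F(z_s)+\zeta_s$ with $\zeta_s=(\zeta_s^X,-\zeta_s^Y)$. On the left this replaces $\eta\langle\tilde{F}(z_t),z_t-z\rangle$ by $\eta\langle F(z_t),z_t-z\rangle+\eta\langle\zeta_t,z_t-z\rangle$, and in the cross term it replaces $\tilde{F}(z_t)-\tilde{F}(z_{t-1})$ by $F(z_t)-F(z_{t-1})+(\zeta_t-\zeta_{t-1})$. Moving the two noisy pieces to the right-hand side, the remaining exact-gradient cross term $\eta\langle F(z_t)-F(z_{t-1}),z_t-\widehat{z}_{t+1}\rangle$ is handled exactly as in \citep{wei2021linear}: Young's inequality, the Lipschitz bound $\|F(z)-F(z')\|_q\le L\|z-z'\|_p$, the $1$-strong convexity of $\psi$ (so that $D_{\psi}(a,b)\ge\tfrac{1}{2}\|a-b\|^2$), and the learning-rate constraint $\eta<1/(8L)$ together convert the single term $-D_{\psi}(z_t,\widehat{z}_t)$ into $-\tfrac{15}{16}D_{\psi}(z_t,\widehat{z}_t)+\tfrac{1}{16}D_{\psi}(\widehat{z}_t,z_{t-1})$, which is precisely the exact-gradient statement of the lemma.

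Finally I would bound the residual noise contributions $-\eta\langle\zeta_t,z_t-z\rangle$ and $\eta\langle\zeta_t-\zeta_{t-1},z_t-\widehat{z}_{t+1}\rangle$ by Hölder's inequality together with the diameter bounds $\|z_t-z\|_p\le D_X+D_Y$ and $\|z_t-\widehat{z}_{t+1}\|_p\le D_X+D_Y$, which hold since all iterates and comparators lie in $\mathcal{Z}=\mathcal{X}\times\mathcal{Y}$. Using the block bound $\|\zeta_s\|_q\le\|\zeta_s^X\|_q+\|\zeta_s^Y\|_q$, the left-hand contribution supplies $\|\zeta_t^X\|_q+\|\zeta_t^Y\|_q$ and the cross-term contribution supplies $\|\zeta_{t-1}^X\|_q+\|\zeta_t^X\|_q+\|\zeta_{t-1}^Y\|_q+\|\zeta_t^Y\|_q$; upper bounding both uniformly by the common factor $3$ yields exactly $\delta_t=3\eta(D_X+D_Y)(\|\zeta_{t-1}^X\|_q+\|\zeta_t^X\|_q+\|\zeta_{t-1}^Y\|_q+\|\zeta_t^Y\|_q)$.

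The genuinely new content lives only in this last step; the first two steps are a faithful copy of the Wei et al.\ argument. The main obstacle I anticipate is purely one of careful bookkeeping: I must ensure that after the substitution $\tilde{F}=F+\zeta$ the noise enters only additively and is extracted \emph{before} the smoothness/Young's-inequality manipulation, so that the $\tfrac{15}{16}/\tfrac{1}{16}$ coefficient split is untouched by the corruption, and that the Hölder pairing consistently puts $\|\cdot\|_q$ on the noise vectors and $\|\cdot\|_p$ on the (domain-bounded) displacement vectors so that the stated constant $3$ and the exponents in $\delta_t$ come out correctly.
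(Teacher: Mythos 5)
Your overall strategy coincides with the paper's: perturb the exact-gradient proof of Lemma~1 in \citet{wei2021linear}, carry the noisy operator through the two variational inequalities, and collect the extra inner products into $\delta_t$. However, as written the plan does not deliver the stated constant $3$, for two concrete reasons. First, your diameter bound is wrong: $D_X$ and $D_Y$ bound the \emph{norms} of points ($\max_{x\in\mathcal X}\|x\|\le D_X$), so a difference of two feasible points satisfies only $\|x_t-x\|_p\le 2D_X$, not $D_X$; the paper's own proof accordingly writes $2\eta\|\zeta_t^X\|_q D_X$ for each such pairing. Second, your grouping of the noise over-counts $\zeta_t$: you pair $\zeta_t$ with $z_t-z$ (from the left-hand side) \emph{and} $\zeta_t-\zeta_{t-1}$ with $z_t-\widehat z_{t+1}$ (from the cross term), which after H\"older and the correct diameter gives $2\eta D_X(2\|\zeta_t^X\|_q+\|\zeta_{t-1}^X\|_q)+2\eta D_Y(2\|\zeta_t^Y\|_q+\|\zeta_{t-1}^Y\|_q)$, i.e.\ a worst-case constant of $4\eta(D_X+D_Y)$, which exceeds the claimed $\delta_t$. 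The paper instead extracts the noise \emph{inside each of the two variational inequalities}, pairing $\zeta_t$ with $\widehat z_{t+1}-z$ and $\zeta_{t-1}$ with $z_t-\widehat z_{t+1}$ (the natural arguments of the two prox steps), so the direct contribution is only $2\eta(D_X+D_Y)(\|\zeta_{t-1}^X\|_q+\|\zeta_{t}^X\|_q+\|\zeta_{t-1}^Y\|_q+\|\zeta_{t}^Y\|_q)$; the remaining noise enters only through the non-expansiveness bound $\|z_t-\widehat z_{t+1}\|_p\le\eta\|\tilde F(z_{t-1})-\tilde F(z_t)\|_q$ and is therefore damped by $2\eta^2 L\le\eta/4$, yielding $2.25\,\eta(\cdot)\le 3\eta(\cdot)$. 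Your constant $3$ only ``comes out'' because the missing factor of $2$ in the diameter happens to cancel against the over-counting.

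A related soft spot is your appeal to Young's inequality for the exact cross term $\eta\langle F(z_t)-F(z_{t-1}),z_t-\widehat z_{t+1}\rangle$. The paper (following Wei et al.) does not use Young here; it uses the prox non-expansiveness lemma to bound $\|z_t-\widehat z_{t+1}\|_p$ by $\eta\|\tilde F(z_{t-1})-\tilde F(z_t)\|_q$, so the cross term becomes $\eta^2L^2\|z_t-z_{t-1}\|_p^2\le\frac{1}{64}\|z_t-z_{t-1}\|_p^2$ plus small noise, and no $\|z_t-\widehat z_{t+1}\|^2$ piece is produced. A plain Young step would have to absorb $\tfrac{\alpha}{2}\|z_t-\widehat z_{t+1}\|^2$ into $-D_\psi(\widehat z_{t+1},z_t)$, weakening or destroying a term that the stated lemma retains in full and that is needed downstream to form $\alpha_t$ in the contraction argument. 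Replacing your two steps with the paper's noise grouping and the non-expansiveness bound recovers the lemma exactly as stated.
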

\begin{proof}
To prove the above lemma we use the following two lemmas.

\begin{lemma}\label{lemmausc10}
    Let $\mathcal{A}$ be a convex set and $u^{\prime}=\operatorname{argmin}_{u^{\prime} \in \mathcal{A}}\left\{\left\langle u^{\prime}, g\right\rangle+D_{\psi}\left(u^{\prime}, u\right)\right\}$. Then for any $u^{*} \in \mathcal{A}$

$$
\left\langle u^{\prime}-u^{*}, g\right\rangle \leq D_{\psi}\left(u^{*}, u\right)-D_{\psi}\left(u^{*}, u^{\prime}\right)-D_{\psi}\left(u^{\prime}, u\right) .
$$
\end{lemma}






\begin{lemma}\label{lemmausc11}
    Suppose that $\psi$ satisfies $D_{\psi}\left(x, x^{\prime}\right) \geq \frac{1}{2}\left\|x-x^{\prime}\right\|_{p}^{2}$ for some $p \geq 1$, and let $u, u_{1}, u_{2} \in$ $\mathcal{A}$ ( a convex set) be related by the following:

$$
\begin{aligned}
& u_{1}=\underset{u^{\prime} \in \mathcal{A}}{\operatorname{argmin}}\left\{\left\langle u^{\prime}, g_{1}\right\rangle+D_{\psi}\left(u^{\prime}, u\right)\right\}, \\
& u_{2}=\underset{u^{\prime} \in \mathcal{A}}{\operatorname{argmin}}\left\{\left\langle u^{\prime}, g_{2}\right\rangle+D_{\psi}\left(u^{\prime}, u\right)\right\} .
\end{aligned}
$$

Then we have

$$
\left\|u_{1}-u_{2}\right\|_{p} \leq\left\|g_{1}-g_{2}\right\|_{q}
$$

where $q \geq 1$ and $\frac{1}{p}+\frac{1}{q}=1$

\end{lemma}





\noindent The previous two lemmas are Lemma 10 and 11 in \cite{wei2021linear} and their proof can be found there.\\ \\
We procceed in the proof of Lemma \ref{lemmausc1}. \\ \\Considering the formula of OMDA and using Lemma \ref{lemmausc10} with $u=\widehat{z}_{t}, u^{\prime}=\widehat{z}_{t+1}, u^{*}=z$, and $g=\eta \tilde{F}\left(z_{t}\right)$, we get
\begin{align*}
 &\eta\tilde{F}\left(z_{t}\right)^{\top}\left(\widehat{z}_{t+1}-z\right) \leq D_{\psi}\left(z, \widehat{z}_{t}\right)-D_{\psi}\left(z, \widehat{z}_{t+1}\right)-D_{\psi}\left(\widehat{z}_{t+1}, \widehat{z}_{t}\right) \Leftrightarrow \\
 &\eta{F}\left(z_{t}\right)^{\top}\left(\widehat{z}_{t+1}-z\right)+\eta\left(\zeta_{t}^X,-\zeta_{t}^Y\right)^{\top}\left((\widehat{x}_{t+1},\widehat{y}_{t+1})-(x,y)\right) \leq D_{\psi}\left(z, \widehat{z}_{t}\right)-D_{\psi}\left(z, \widehat{z}_{t+1}\right)-D_{\psi}\left(\widehat{z}_{t+1}, \widehat{z}_{t}\right) \Leftrightarrow \\
 &\eta{F}\left(z_{t}\right)^{\top}\left(\widehat{z}_{t+1}-z\right)\leq D_{\psi}\left(z, \widehat{z}_{t}\right)-D_{\psi}\left(z, \widehat{z}_{t+1}\right)-D_{\psi}\left(\widehat{z}_{t+1}, \widehat{z}_{t}\right) - \eta\zeta_{t}^X\cdot(\widehat{x}_{t+1}-x)+\eta\zeta_{t}^Y\cdot(\widehat{y}_{t+1}-y) \Rightarrow \\
&\eta{F}\left(z_{t}\right)^{\top}\left(\widehat{z}_{t+1}-z\right)\leq D_{\psi}\left(z, \widehat{z}_{t}\right)-D_{\psi}\left(z, \widehat{z}_{t+1}\right)-D_{\psi}\left(\widehat{z}_{t+1}, \widehat{z}_{t}\right) + 2\eta\|\zeta_{t}^X\|_qD_X+2\eta\|\zeta_{t}^Y\|_qD_Y
\end{align*}
Considering the formula of OMDA and using Lemma \ref{lemmausc10} with $u=\widehat{z}_{t}, u^{\prime}=z_{t}, u^{*}=\widehat{z}_{t+1}$, and $g=\eta \tilde{F} \left(z_{t-1}\right)$, we get
\begin{align*}
&\eta \tilde{F}\left(z_{t-1}\right)^{\top}\left(z_{t}-\widehat{z}_{t+1}\right) \leq D_{\psi}\left(\widehat{z}_{t+1}, \widehat{z}_{t}\right)-D_{\psi}\left(\widehat{z}_{t+1}, z_{t}\right)-D_{\psi}\left(z_{t}, \widehat{z}_{t}\right)\Leftrightarrow\\
&\eta F\left(z_{t-1}\right)^{\top}\left(z_{t}-\widehat{z}_{t+1}\right) +\eta\left(\zeta_{t-1}^X,-\zeta_{t-1}^Y\right)^{\top}\left((x_t,y_t)-(\widehat{x}_{t+1},\widehat{y}_{t+1})\right)\\
&\leq D_{\psi}\left(\widehat{z}_{t+1}, \widehat{z}_{t}\right)-D_{\psi}\left(\widehat{z}_{t+1}, z_{t}\right)-D_{\psi}\left(z_{t}, \widehat{z}_{t}\right)\Leftrightarrow\\
&\eta F\left(z_{t-1}\right)^{\top}\left(z_{t}-\widehat{z}_{t+1}\right) \\
&\leq D_{\psi}\left(\widehat{z}_{t+1}, \widehat{z}_{t}\right)-D_{\psi}\left(\widehat{z}_{t+1}, z_{t}\right)-D_{\psi}\left(z_{t}, \widehat{z}_{t}\right)-\eta\zeta_{t-1}^X\cdot(x_t-\widehat{x}_{t+1})+\eta\zeta_{t-1}^Y\cdot(y_t-\widehat{y}_{t+1})\Rightarrow\\
&\eta F\left(z_{t-1}\right)^{\top}\left(z_{t}-\widehat{z}_{t+1}\right) \leq  D_{\psi}\left(\widehat{z}_{t+1}, \widehat{z}_{t}\right)-D_{\psi}\left(\widehat{z}_{t+1}, z_{t}\right)-D_{\psi}\left(z_{t}, \widehat{z}_{t}\right)+ 2\eta\|\zeta_{t-1}^X\|_qD_X+2\eta\|\zeta_{t-1}^Y\|_qD_Y
\end{align*}
Summing up the two inequalities above, and adding $\eta\left(F\left(z_{t}\right)-F\left(z_{t-1}\right)\right)^{\top}\left(z_{t}-\widehat{z}_{t+1}\right)$ to both sides, we get \\
\begin{align}
    \notag \eta F\left(z_{t}\right)^{\top}\left(z_{t}-z\right)\leq &D_{\psi}\left(z, \widehat{z}_{t}\right)-D_{\psi}\left(z, \widehat{z}_{t+1}\right)-D_{\psi}\left(\widehat{z}_{t+1}, z_{t}\right)-D_{\psi}\left(z_{t}, \widehat{z}_{t}\right)+\eta\left(F\left(z_{t}\right)-F\left(z_{t-1}\right)\right)^{\top}\left(z_{t}-\widehat{z}_{t+1}\right)\\
    & \label{eq:equsc1} +2\eta\|\zeta_{t-1}^X\|_qD_X+2\eta\|\zeta_{t-1}^Y\|_qD_Y+ 2\eta\|\zeta_{t}^X\|_qD_X+2\eta\|\zeta_{t}^Y\|_qD_Y
\end{align}
Using Lemma \ref{lemmausc11} with $u=\widehat{x}_{t}, u_{1}=x_{t}, u_{2}=\widehat{x}_{t+1}, g_{1}=\eta \left(\nabla_{x} f(z_{t-1})+\zeta_{t-1}^X\right)$ and $g_{2}=\eta \left(\nabla_{x} f(z_{t})+\zeta_{t}^X\right)$, we get 
\begin{align*}
    &\left\|x_{t}-\widehat{x}_{t+1}\right\|_{p} \leq \eta\left\|\nabla_{x} f(z_{t-1})+\zeta_{t-1}^X-\nabla_{x} f(z_{t})-\zeta_{t}^X\right\|_{q}
    \Rightarrow\\
    &\left\|x_{t}-\widehat{x}_{t+1}\right\|_{p}\leq\eta\left\|\nabla_{x} f(z_{t-1})-\nabla_{x} f(z_{t})\right\|_{q}+\eta\|\zeta_{t-1}^X\|_q+\eta\|\zeta_{t}^X\|_q
\end{align*}
 Similarly, we have $$\left\|y_{t}-\widehat{y}_{t+1}\right\|_{p} \leq \eta \| \nabla_{y} f\left(z_{t}\right)-\nabla_{y} f\left(z_{t-1}\right) \|_{q}+\eta\|\zeta_{t-1}^Y\|_q+\eta\|\zeta_{t}^Y\|_q$$ Therefore, by Hölder's inequality, we have

$$
\begin{aligned}
& \eta\left(F\left(z_{t}\right)-F\left(z_{t-1}\right)\right)^{\top}\left(z_{t}-\widehat{z}_{t+1}\right) \\
& \leq \eta\left\|x_{t}-\widehat{x}_{t+1}\right\|_{p}\left\|\nabla_{x} f\left(z_{t-1}\right)-\nabla_{x} f\left(z_{t}\right)\right\|_{q}+\eta\left\|y_{t}-\widehat{y}_{t+1}\right\|_{p}\left\|\nabla_{y} f\left(z_{t-1}\right)-\nabla_{y} f\left(z_{t}\right)\right\|_{q} \\
& \leq \eta^{2}\left\|\nabla_{x} f\left(z_{t-1}\right)-\nabla_{x} f\left(z_{t}\right)\right\|_{q}^{2}+\eta^{2}\left\|\nabla_{y} f\left(z_{t-1}\right)-\nabla_{y} f\left(z_{t}\right)\right\|_{q}^{2}+\\
&\eta^{2}\left\|\nabla_{x} f\left(z_{t-1}\right)-\nabla_{x} f\left(z_{t}\right)\right\|_{q}\left(\|\zeta_{t-1}^X\|_q+\|\zeta_{t}^X\|_q\right)+\eta^{2}\left\|\nabla_{y} f\left(z_{t-1}\right)-\nabla_{y} f\left(z_{t}\right)\right\|_{q}\left(\|\zeta_{t-1}^Y\|_q+\|\zeta_{t}^Y\|_q\right)\\
& \leq\eta^{2} \operatorname{dist}_{q}^{2}\left(F\left(z_{t}\right), F\left(z_{t-1}\right)\right) +\eta^{2}L\left\|z_{t-1}-z_{t}\right\|_{q}\left(\|\zeta_{t-1}^X\|_q+\|\zeta_{t}^X\|_q\right)+\eta^{2}L\left\|z_{t-1}-z_{t}\right\|_{q}\left(\|\zeta_{t-1}^Y\|_q+\|\zeta_{t}^Y\|_q\right)\\
& \leq \eta^{2} L^{2} \operatorname{dist}_{p}^{2}\left(z_{t}, z_{t-1}\right) +2\eta^{2}L(D_X+D_Y)\left(\|\zeta_{t-1}^X\|_q+\|\zeta_{t}^X\|_q+\|\zeta_{t-1}^Y\|_q+\|\zeta_{t}^Y\|_q\right)\\
& \leq \frac{1}{64} \operatorname{dist}_{p}^{2}\left(z_{t}, z_{t-1}\right)+2\eta^{2}L(D_X+D_Y)\left(\|\zeta_{t-1}^X\|_q+\|\zeta_{t}^X\|_q+\|\zeta_{t-1}^Y\|_q+\|\zeta_{t}^Y\|_q\right)\
\end{aligned}
$$

Continuing from Eq. \eqref{eq:equsc1}, we then have

$$
\begin{aligned}
& \eta F\left(z_{t}\right)^{\top}\left(z_{t}-z\right) \\
& \leq D_{\psi}\left(z, \widehat{z}_{t}\right)-D_{\psi}\left(z, \widehat{z}_{t+1}\right)-D_{\psi}\left(\widehat{z}_{t+1}, z_{t}\right)-D_{\psi}\left(z_{t}, \widehat{z}_{t}\right)\\
&\quad +2\eta\|\zeta_{t-1}^X\|_qD_X+2\eta\|\zeta_{t-1}^Y\|_qD_Y+ 2\eta\|\zeta_{t}^X\|_qD_X+2\eta\|\zeta_{t}^Y\|_qD_Y\\
&\quad+\frac{1}{64} \operatorname{dist}_{p}^{2}\left(z_{t}, z_{t-1}\right)+2\eta^{2}L(D_X+D_Y)\left(\|\zeta_{t-1}^X\|_q+\|\zeta_{t}^X\|_q+\|\zeta_{t-1}^Y\|_q+\|\zeta_{t}^Y\|_q\right)\\\
& \leq D_{\psi}\left(z, \widehat{z}_{t}\right)-D_{\psi}\left(z, \widehat{z}_{t+1}\right)-D_{\psi}\left(\widehat{z}_{t+1}, z_{t}\right)-D_{\psi}\left(z_{t}, \widehat{z}_{t}\right)+\frac{1}{32} \operatorname{dist}_{p}^{2}\left(z_{t}, \widehat{z}_{t}\right)+\frac{1}{32} \operatorname{dist}_{p}^{2}\left(\widehat{z}_{t}, z_{t-1}\right) \\
& \quad\left(\|u+v\|_{p}^{2} \leq\left(\|u\|_{p}+\|v\|_{p}\right)^{2} \leq 2\|u\|_{p}^{2}+2\|v\|_{p}^{2}\right) \\
&\quad+3\eta(D_X+D_Y)\left(\|\zeta_{t-1}^X\|_q+\|\zeta_{t}^X\|_q+\|\zeta_{t-1}^Y\|_q+\|\zeta_{t}^Y\|_q\right)\\
& \leq D_{\psi}\left(z, \widehat{z}_{t}\right)-D_{\psi}\left(z, \widehat{z}_{t+1}\right)-D_{\psi}\left(\widehat{z}_{t+1}, z_{t}\right)-D_{\psi}\left(z_{t}, \widehat{z}_{t}\right)+\frac{1}{16} D_{\psi}\left(z_{t}, \widehat{z}_{t}\right)+\frac{1}{16} D_{\psi}\left(\widehat{z}_{t}, z_{t-1}\right)\\
&\quad+3\eta(D_X+D_Y)\left(\|\zeta_{t-1}^X\|_q+\|\zeta_{t}^X\|_q+\|\zeta_{t-1}^Y\|_q+\|\zeta_{t}^Y\|_q\right)\\
&=D_{\psi}\left(z, \widehat{z}_{t}\right)-D_{\psi}\left(z, \widehat{z}_{t+1}\right)-D_{\psi}\left(\widehat{z}_{t+1}, z_{t}\right)-\frac{15}{16} D_{\psi}\left(z_{t}, \widehat{z}_{t}\right)+\frac{1}{16} D_{\psi}\left(\widehat{z}_{t}, z_{t-1}\right)\\
&\quad +3\eta(D_X+D_Y)\left(\|\zeta_{t-1}^X\|_q+\|\zeta_{t}^X\|_q+\|\zeta_{t-1}^Y\|_q+\|\zeta_{t}^Y\|_q\right)
\end{aligned}
$$
This completes the proof.
\end{proof}






\begin{lemma}\label{lemmausc4}
For any $z'\in\mathcal{Z}$ with $z'\neq \widehat{z}_{t+1}$ it holds that:
$$
4\|z_t-\widehat{z}_{t}\|^2+4\left\|z_{t}-\widehat{z}_{t+1}\right\|^2+3\eta^2\left(\|\zeta_{t}^X\|^2+\|\zeta_{t}^Y\|^2\right) \geq \frac{\eta^{2}\left[F\left(\widehat{z}_{t+1}\right)^{\top}\left(\widehat{z}_{t+1}-z^{\prime}\right)\right]_{+}^{2}}{\left\|\widehat{z}_{t+1}-z^{\prime}\right\|^{2}} $$
where $[a]_{+}:=\max\{a,0\}$.
\end{lemma}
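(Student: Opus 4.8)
The plan is to start from the first-order optimality (variational inequality) characterizing the update that produces $\widehat{z}_{t+1}$. Since $\psi(u)=\tfrac12\|u\|^2$ gives $D_\psi(z,z')=\tfrac12\|z-z'\|^2$, and $\widehat{z}_{t+1}=\argmin_{z\in\mathcal{Z}}\{\eta\langle z,\tilde{F}(z_t)\rangle+D_\psi(z,\widehat{z}_t)\}$, the minimizer satisfies $\langle \eta\tilde{F}(z_t)+\widehat{z}_{t+1}-\widehat{z}_t,\, z'-\widehat{z}_{t+1}\rangle\ge 0$ for every $z'\in\mathcal{Z}$, which rearranges to $\eta\tilde{F}(z_t)^\top(\widehat{z}_{t+1}-z')\le \langle \widehat{z}_{t+1}-\widehat{z}_t,\, z'-\widehat{z}_{t+1}\rangle$. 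If $F(\widehat{z}_{t+1})^\top(\widehat{z}_{t+1}-z')\le 0$, the numerator on the right-hand side vanishes because of $[\cdot]_+$ and the bound is immediate, so I would treat only the positive case.

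In that case I would expand the target quantity around $z_t$ and peel off the noise. Using $\tilde{F}(z_t)=F(z_t)+(\zeta_t^X,-\zeta_t^Y)$, write
\begin{align*}
\eta F(\widehat{z}_{t+1})^\top(\widehat{z}_{t+1}-z') &= \eta\bigl[F(\widehat{z}_{t+1})-F(z_t)\bigr]^\top(\widehat{z}_{t+1}-z') \\
&\quad+ \eta\tilde{F}(z_t)^\top(\widehat{z}_{t+1}-z') - \eta(\zeta_t^X,-\zeta_t^Y)^\top(\widehat{z}_{t+1}-z').
\end{align*}
The middle term is bounded by the optimality inequality above, the first by the $L$-Lipschitzness of $F$ (here $p=q=2$), and the last by Cauchy--Schwarz, noting $\|(\zeta_t^X,-\zeta_t^Y)\|^2=\|\zeta_t^X\|^2+\|\zeta_t^Y\|^2$. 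Factoring out $\|\widehat{z}_{t+1}-z'\|$ and applying the triangle inequality $\|\widehat{z}_{t+1}-\widehat{z}_t\|\le\|z_t-\widehat{z}_{t+1}\|+\|z_t-\widehat{z}_t\|$ to the optimality term yields
\begin{align*}
\eta F(\widehat{z}_{t+1})^\top(\widehat{z}_{t+1}-z') \le \|\widehat{z}_{t+1}-z'\|\Bigl[(1+\eta L)\|z_t-\widehat{z}_{t+1}\| + \|z_t-\widehat{z}_t\| + \eta\|(\zeta_t^X,-\zeta_t^Y)\|\Bigr].
\end{align*}

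To finish, I would bound the bracketed sum by $\sqrt{4\|z_t-\widehat{z}_t\|^2+4\|z_t-\widehat{z}_{t+1}\|^2+3\eta^2(\|\zeta_t^X\|^2+\|\zeta_t^Y\|^2)}$ through a weighted Cauchy--Schwarz: writing $a=\|z_t-\widehat{z}_{t+1}\|$, $b=\|z_t-\widehat{z}_t\|$, $c=\eta\|(\zeta_t^X,-\zeta_t^Y)\|$, the inequality $(1+\eta L)a+b+c\le\sqrt{4a^2+4b^2+3c^2}$ reduces to checking $\tfrac{(1+\eta L)^2}{4}+\tfrac14+\tfrac13\le 1$, which holds because $\eta L<\tfrac18$ gives $\tfrac{(9/8)^2}{4}+\tfrac14+\tfrac13=\tfrac{691}{768}<1$. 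Squaring both sides (valid in the positive case) and dividing by $\|\widehat{z}_{t+1}-z'\|^2$ gives exactly the stated bound. The main obstacle I anticipate is purely bookkeeping: keeping the three error sources (the smoothness displacement, the optimality slack, and the gradient noise) separate while rewriting everything in terms of the two $z_t$-anchored distances, and ensuring the constants $4,4,3$ are large enough for the final weighted Cauchy--Schwarz---this is precisely where the factor $3$ on the noise term and the learning-rate restriction $\eta<1/(8L)$ enter.
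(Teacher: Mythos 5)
Your proof is correct and follows essentially the same route as the paper's: start from the first-order optimality condition of the $\widehat{z}_{t+1}$ update, split $F(\widehat{z}_{t+1})$ into the smoothness displacement, the optimality slack, and the noise, invoke $\eta L\le 1/8$ and the triangle inequality, and then convert the resulting weighted sum into the quadratic form $4a^2+4b^2+3c^2$. The only (cosmetic) difference is that you close with a single weighted Cauchy--Schwarz step, whereas the paper inflates the coefficients to $9/8$ and applies $(a+b+c)^2\le 3(a^2+b^2+c^2)$; both yield the same constants.
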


\begin{proof}
Below we consider any $z^{\prime} \neq \widehat{z}_{t+1} \in \mathcal{Z}$. Considering OMDA with $D_{\psi}(u, \boldsymbol{v})=$ $\frac{1}{2}\|u-\boldsymbol{v}\|^{2}$, and using the first-order optimality condition of $\widehat{z}_{t+1}$ and $z_{t+1}$, we have

$$
\begin{aligned}
\left(\widehat{z}_{t+1}-\widehat{z}_{t}+\eta \tilde{F}\left(z_{t}\right)\right)^{\top}\left(z^{\prime}-\widehat{z}_{t+1}\right) & \geq 0, \\
\left(z_{t+1}-\widehat{z}_{t+1}+\eta \tilde{F}\left(z_{t}\right)\right)^{\top}\left(z^{\prime}-z_{t+1}\right) & \geq 0 .
\end{aligned}
$$
Rearranging the terms and we get
$$
\begin{aligned}
\left(\widehat{z}_{t+1}-\widehat{z}_{t}\right)^{\top}\left(z^{\prime}-\widehat{z}_{t+1}\right) & \geq \eta \tilde{F}\left(z_{t}\right)^{\top}\left(\widehat{z}_{t+1}-z^{\prime}\right) \\
&=\eta F\left(z_{t}\right)^{\top}\left(\widehat{z}_{t+1}-z^{\prime}\right) + \eta\left(\zeta_{t}^X,-\zeta_{t}^Y\right)\cdot(\widehat{z}_{t+1}-z') \\
& =\eta F\left(\widehat{z}_{t+1}\right)^{\top}\left(\widehat{z}_{t+1}-z^{\prime}\right)+\eta\left(F\left(z_{t}\right)-F\left(\widehat{z}_{t+1}\right)\right)^{\top}\left(\widehat{z}_{t+1}-z^{\prime}\right) \\
&\quad + \eta\left(\zeta_{t}^X,-\zeta_{t}^Y\right)\cdot(\widehat{z}_{t+1}-z') \\
& \geq \eta F\left(\widehat{z}_{t+1}\right)^{\top}\left(\widehat{z}_{t+1}-z^{\prime}\right)-\eta L\left\|z_{t}-\widehat{z}_{t+1}\right\|\left\|\widehat{z}_{t+1}-z^{\prime}\right\| + \eta\left(\zeta_{t}^X,-\zeta_{t}^Y\right)\cdot(\widehat{z}_{t+1}-z') \\
& \geq \eta F\left(\widehat{z}_{t+1}\right)^{\top}\left(\widehat{z}_{t+1}-z^{\prime}\right)-\frac{1}{8}\left\|z_{t}-\widehat{z}_{t+1}\right\|\left\|\widehat{z}_{t+1}-z^{\prime}\right\| + \eta\left(\zeta_{t}^X,-\zeta_{t}^Y\right)\cdot(\widehat{z}_{t+1}-z')
\end{aligned}
$$
and
$$
\begin{aligned}
\left(z_{t+1}-\widehat{z}_{t+1}\right)^{\top}\left(z^{\prime}-z_{t+1}\right) & \geq \eta \tilde{F}\left(z_{t}\right)^{\top}\left(z_{t+1}-z^{\prime}\right) \\
& = \eta F\left(z_{t}\right)^{\top}\left(z_{t+1}-z^{\prime}\right)  + \eta\left(\zeta_{t}^X,-\zeta_{t}^Y\right)\cdot(z_{t+1}-z') \\
& =\eta F\left(z_{t+1}\right)^{\top}\left(z_{t+1}-z^{\prime}\right)+\eta\left(F\left(z_{t}\right)-F\left(z_{t+1}\right)\right)^{\top}\left(z_{t+1}-z^{\prime}\right) \\
&\quad  + \eta\left(\zeta_{t}^X,-\zeta_{t}^Y\right)\cdot(z_{t+1}-z') \\
& \geq \eta F\left(z_{t+1}\right)^{\top}\left(z_{t+1}-z^{\prime}\right)-\eta L\left\|z_{t}-z_{t+1}\right\|\left\|z_{t+1}-z^{\prime}\right\| \\
&\quad  + \eta\left(\zeta_{t}^X,-\zeta_{t}^Y\right)\cdot(z_{t+1}-z')  \\
& \geq \eta F\left(z_{t+1}\right)^{\top}\left(z_{t+1}-z^{\prime}\right)-\frac{1}{8}\left\|z_{t}-z_{t+1}\right\|\left\|z_{t+1}-z^{\prime}\right\|   + \eta\left(\zeta_{t}^X,-\zeta_{t}^Y\right)\cdot(z_{t+1}-z') 
\end{aligned}
$$
Here, for both block, the third step uses Hölder's inequality and the smoothness assumption and the last step uses the condition $\eta \leq 1 /(8 L)$. Upper bounding the left-hand side of the two inequalities by $\left\|\widehat{z}_{t+1}-\widehat{z}_{t}\right\|\left\|\widehat{z}_{t+1}-z^{\prime}\right\|$ and $\left\|z_{t+1}-\widehat{z}_{t+1}\right\|\left\|z_{t+1}-z^{\prime}\right\|$ respectively and then rearranging, we get
$$
\begin{aligned}
& \left\|\widehat{z}_{t+1}-z^{\prime}\right\|\left(\left\|\widehat{z}_{t+1}-\widehat{z}_{t}\right\|+\frac{1}{8}\left\|z_{t}-\widehat{z}_{t+1}\right\|\right) \geq \eta F\left(\widehat{z}_{t+1}\right)^{\top}\left(\widehat{z}_{t+1}-z^{\prime}\right) + \eta\left(\zeta_{t}^X,-\zeta_{t}^Y\right)\cdot(\widehat{z}_{t+1}-z')\\
& \left\|z_{t+1}-z^{\prime}\right\|\left(\left\|z_{t+1}-\widehat{z}_{t+1}\right\|+\frac{1}{8}\left\|z_{t}-z_{t+1}\right\|\right) \geq \eta F\left(z_{t+1}\right)^{\top}\left(z_{t+1}-z^{\prime}\right)+ \eta\left(\zeta_{t}^X,-\zeta_{t}^Y\right)\cdot(z_{t+1}-z') 
\end{aligned}
$$
$$
\begin{aligned}
& \left(\left\|\widehat{z}_{t+1}-\widehat{z}_{t}\right\|+\frac{1}{8}\left\|z_{t}-\widehat{z}_{t+1}\right\|+ \frac{\eta\left(-\zeta_{t}^X,\zeta_{t}^Y\right)\cdot(\widehat{z}_{t+1}-z')}{\|\widehat{z}_{t+1}-z^{\prime}\|}\right) \geq \frac{\eta F\left(\widehat{z}_{t+1}\right)^{\top}\left(\widehat{z}_{t+1}-z^{\prime}\right)}{\|\widehat{z}_{t+1}-z^{\prime}\|} \\
& \left(\left\|z_{t+1}-\widehat{z}_{t+1}\right\|+\frac{1}{8}\left\|z_{t}-z_{t+1}\right\|+ \frac{\eta\left(-\zeta_{t}^X,\zeta_{t}^Y\right)\cdot(z_{t+1}-z')}{\|z_{t+1}-z^{\prime}\|} \right) \geq \frac{\eta F\left(z_{t+1}\right)^{\top}\left(z_{t+1}-z^{\prime}\right)}{\|z_{t+1}-z^{\prime}\|}
\end{aligned}
$$
Therefore, we have
$$
\begin{aligned}
& \left(\left\|\widehat{z}_{t+1}-\widehat{z}_{t}\right\|+\frac{1}{8}\left\|z_{t}-\widehat{z}_{t+1}\right\|+ \frac{\eta\left(-\zeta_{t}^X,\zeta_{t}^Y\right)\cdot(\widehat{z}_{t+1}-z')}{\|\widehat{z}_{t+1}-z^{\prime}\|}\right)^{2} \geq \frac{\eta^{2}\left[F\left(\widehat{z}_{t+1}\right)^{\top}\left(\widehat{z}_{t+1}-z^{\prime}\right)\right]_{+}^{2}}{\left\|\widehat{z}_{t+1}-z^{\prime}\right\|^{2}} \\
& \left(\left\|z_{t+1}-\widehat{z}_{t+1}\right\|+\frac{1}{8}\left\|z_{t}-z_{t+1}\right\|+ \frac{\eta\left(-\zeta_{t}^X,\zeta_{t}^Y\right)\cdot(z_{t+1}-z')}{\|z_{t+1}-z^{\prime}\|} \right)^{2} \geq \frac{\eta^{2}\left[F\left(z_{t+1}\right)^{\top}\left(z_{t+1}-z^{\prime}\right)\right]_{+}^{2}}{\left\|z_{t+1}-z^{\prime}\right\|^{2}}
\end{aligned}
$$
Finally, by the triangle inequality and the fact $(a+b+c)^{2} \leq 3 (a^2+ b^2+c^2)$, we have:
$$
\begin{aligned}
 \left(\left\|\widehat{z}_{t+1}-\widehat{z}_{t}\right\|+\frac{1}{8}\left\|z_{t}-\widehat{z}_{t+1}\right\|+ c \right)^{2} &\leq \left(\left\|z_t-\widehat{z}_{t}\right\|+\frac{9}{8}\left\|z_{t}-\widehat{z}_{t+1}\right\|+ c \right)^{2} \\
 &\leq\left(\frac{9}{8}\left\|z_t-\widehat{z}_{t}\right\|+\frac{9}{8}\left\|z_{t}-\widehat{z}_{t+1}\right\|+ c \right)^{2} \\
 &\leq \frac{243}{64}\|z_t-\widehat{z}_{t}\|^2+\frac{243}{64}\left\|z_{t}-\widehat{z}_{t+1}\right\|^2+3c^2\\
 &\leq 4\|z_t-\widehat{z}_{t}\|^2+4\left\|z_{t}-\widehat{z}_{t+1}\right\|^2+3c^2\\
 \end{aligned}
$$
Moreover we have $c=\frac{\left(-\zeta_{t}^X,\zeta_{t}^Y\right)\cdot(z_{t+1}-z')}{\|z_{t+1}-z^{\prime}\|}\leq \frac{\left\|\left(-\zeta_{t}^X,\zeta_{t}^Y\right)\right\|\cdot\|z_{t+1}-z'\|}{\|z_{t+1}-z^{\prime}\|}=\left\|\left(-\zeta_{t}^X,\zeta_{t}^Y\right)\right\|$. Thus $\left(\frac{\left(-\zeta_{t}^X,\zeta_{t}^Y\right)\cdot(z_{t+1}-z')}{\|z_{t+1}-z^{\prime}\|} \right)^{2} \leq \left\|\left(-\zeta_{t}^X,\zeta_{t}^Y\right)\right\|^2 = \|\zeta_{t}^X\|^2+\|\zeta_{t}^Y\|^2$.\\ \\
Combining the above we get the result we wanted to prove.

\end{proof}
\noindent\textbf{Proof of main theorem}\\\\
We consider the following quantities: $\Theta_{t}=\left\|\widehat{z}_{t}-\Pi_{\mathcal{Z}^{*}}\left(\widehat{z}_{t}\right)\right\|^{2}+\frac{1}{16}\left\|\widehat{z}_{t}-z_{t-1}\right\|^{2}$, $\alpha_{t}=\left\|\widehat{z}_{t+1}-z_{t}\right\|^{2}+\left\|z_{t}-\widehat{z}_{t}\right\|^{2}$.
From lemma \ref{lemmausc1} we have:
\begin{align}
    \Theta_{t+1} \leq \Theta_{t}-\frac{15}{16} \alpha_{t}+\delta_t \label{eq21}
\end{align}
where $\delta_t=3\eta(D_X+D_Y)\left(\|\zeta_{t-1}^X\|+\|\zeta_{t}^X\|+\|\zeta_{t-1}^Y\|+\|\zeta_{t}^Y\|\right)$.\\ \\




\noindent We will relate $\alpha_{t}$ to $\Theta_{t+1}$ using the SP-MS condition and this will give an exponential convergence rate for $\Theta_{t}$.
\begin{align}
\alpha_{t} & \geq \frac{1}{2}\left\|\widehat{z}_{t+1}-z_{t}\right\|^{2}+\frac{1}{2}\left(\left\|\widehat{z}_{t+1}-z_{t}\right\|^{2}+\left\|z_{t}-\widehat{z}_{t}\right\|^{2}\right) \nonumber\\
& \geq \frac{1}{2}\left\|\widehat{z}_{t+1}-z_{t}\right\|^{2}+\frac{ \eta^{2}}{8} \sup _{z^{\prime} \in \mathcal{Z}} \frac{\left[F\left(\widehat{z}_{t+1}\right)^{\top}\left(\widehat{z}_{t+1}-z^{\prime}\right)\right]_{+}^{2}}{\left\|\widehat{z}_{t+1}-z^{\prime}\right\|^{2}}-\epsilon_t \quad \text { (Lemma \ref{lemmausc4}, $\epsilon_t=\frac{3}{8}\eta^2\left(\|\zeta_{t}^X\|^2+\|\zeta_{t}^Y\|^2\right)$) } \nonumber\\
& \geq \frac{1}{2}\left\|\widehat{z}_{t+1}-z_{t}\right\|^{2}+\frac{ \eta^{2} C^{2}}{8}\left\|\widehat{z}_{t+1}-\Pi_{\mathcal{Z}^{*}}\left(\widehat{z}_{t+1}\right)\right\|^{2} -\epsilon_t\quad \text { (SP-MS condition) } \nonumber\\
& \geq \min \left\{\frac{\eta^{2} C^{2}}{8}, \frac{1}{2}\right\}\left(\left\|\widehat{z}_{t+1}-z_{t}\right\|^{2}+\left\|\widehat{z}_{t+1}-\Pi_{\mathcal{Z}^{*}}\left(\widehat{z}_{t+1}\right)\right\|^{2}\right)  -\epsilon_t\nonumber\\
& =\min \left\{\frac{\eta^{2} C^{2}}{8}, \frac{1}{2}\right\}\Theta_{t+1}  -\epsilon_t\nonumber
\end{align}

\noindent Combining this with Eq. \eqref{eq21}, we get:
\begin{align*}
   &\Theta_{t+1} \leq \Theta_{t}-C^{\prime} \Theta_{t+1}+\epsilon_t+\delta_t \Leftrightarrow\\
   &(1+C^{\prime})\Theta_{t+1} \leq \Theta_{t} +\epsilon_t+\delta_t \Leftrightarrow\\
   &\Theta_{t+1} \leq \frac{1}{1+C^{\prime}}\Theta_{t} +\frac{1}{1+C^{\prime}}(\epsilon_t+\delta_t) \Rightarrow\\
   &\Theta_t \leq \Theta_1\left(\frac{1}{1+C^{\prime}}\right)^{t-1} + \sum\limits_{i=1}^{t-1} \left(\frac{1}{1+C^{\prime}}\right)^{t-i}(\epsilon_i+\delta_i)
\end{align*}
where $C'=\min \left\{\frac{\eta^{2} C^{2}}{8}, \frac{1}{2}\right\}$

\subsubsection{Proof of Corollary \ref{cor1}\\}
\textbf{Statement.} At the $t$-th iteration of Robust OMDA we have the following guarantee:
\begin{align*}
\operatorname{dist}^{2}\left(\widehat{z}_{t}, \mathcal{Z}^{*}\right) \leq  (\operatorname{dist}^{2}\left(\widehat{z}_{1}, \mathcal{Z}^{*}\right) -d_{\infty})\left(\frac{1}{1+C^{\prime}}\right)^{t-1} + d_{\infty}
\end{align*}
where $d_{\infty}=\max\left\{\frac{8}{\eta^2C^2}, 2 \right\} (Z\eta+Z^2\eta^2)$, 
$C'=\min \left\{\frac{\eta^{2} C^{2}}{8}, \frac{1}{2}\right\}$ and $Z=3(\zeta_X+\zeta_Y)\max\{D_X+D_Y,1\}$.\\
\begin{proof}
We set $Z=3\eta(\zeta_X+\zeta_Y)\max\{D_X+D_Y,1\}$. Then we have:
\[\epsilon_t\leq Z^2\eta^2, \quad \delta_t\leq  Z\eta \quad t\in[T]\]
\begin{align*}
\operatorname{dist}^{2}\left(\widehat{z}_{t}, \mathcal{Z}^{*}\right) &\leq \operatorname{dist}^{2}\left(\widehat{z}_{1}, \mathcal{Z}^{*}\right) \left(\frac{1}{1+C^{\prime}}\right)^{t-1} + (Z\eta+Z^2\eta^2)\sum\limits_{i=1}^{t-1} \left(\frac{1}{1+C^{\prime}}\right)^{t-i}
\\
&= \operatorname{dist}^{2}\left(\widehat{z}_{1}, \mathcal{Z}^{*}\right) \left(\frac{1}{1+C^{\prime}}\right)^{t-1} + (Z\eta+Z^2\eta^2)\sum\limits_{i=1}^{t-1} \left(\frac{1}{1+C^{\prime}}\right)^{i}
\\
&= \operatorname{dist}^{2}\left(\widehat{z}_{1}, \mathcal{Z}^{*}\right) \left(\frac{1}{1+C^{\prime}}\right)^{t-1} + (Z\eta+Z^2\eta^2) \left(1/C' + (-1 - 1/C') \frac{1}{(1 + C')^t}\right)
\\
&= (\operatorname{dist}^{2}\left(\widehat{z}_{1}, \mathcal{Z}^{*}\right) -(Z\eta+Z^2\eta^2)/C')\left(\frac{1}{1+C^{\prime}}\right)^{t-1} + (Z\eta+Z^2\eta^2)/C'
\\
&= (\operatorname{dist}^{2}\left(\widehat{z}_{1}, \mathcal{Z}^{*}\right) -d_{\infty})\left(\frac{1}{1+C^{\prime}}\right)^{t-1} + d_{\infty}
\end{align*}
where $d_{\infty}=(Z\eta+Z^2\eta^2)/C'=\max\left\{\frac{8}{\eta^2C^2}, 2 \right\} (Z\eta+Z^2\eta^2)$.
\end{proof}

\subsection{Proof of Theorem \ref{thr:lb}}
We will prove the following extension of Theorem \ref{thr:lb}. The extension was made in order to match the results in Corollary \ref{cor1}, which gave an upper bound linear in $Z_X+Z_Y$ for the quantity $\operatorname{dist}\left(\widehat{z}_{t}, \mathcal{Z}^{*}\right)$  for strongly convex strongly concave functions and bilinear functions. 
\begin{theorem}\label{thr:lbext} Consider a deterministic algorithm $\mathcal{A}$ that estimates saddle points of convex concave functions $f(x,y)$ over the domain $\|x\|\leq D_X$, $\|y\|\leq D_Y$, where $x$ and $y$ are $d$-dimensional vectors, using $T$ adaptive queries on noisy gradient oracles  with $\|\zeta^X_t\|\leq Z_X$ and  $\|\zeta^Y_t\| \leq Z_Y$  for all $t\in[T]$ and $Z_Y\leq D/2$, $Z_X\leq D/2$, where $D=min\{D_X,D_Y\}$. For any such $\mathcal{A}$:
      \begin{itemize}
        \item There exists a convex concave (bilinear) function $f(x,y)$ and a noise sequence realisation, such that $\mathcal{A}$ returns a point that has distance at least $\frac{Z_X+Z_Y}{\sqrt{2}}$ from any saddle point of $f$.
        \item There exists a convex concave (bilinear) function $f(x,y)$ and a noise sequence realisation such that $\mathcal{A}$ returns a point $(x_0,y_0)$ with duality gap $f(x_0,y)-f(x,y_0)\geq\frac{1}{4}Z_Y D_Y+\frac{1}{4}Z_X D_X$ for some pair $(x,y)$ inside the domain $\|x\|\leq D_X$, $\|y\|\leq D_Y$.
        \item There exists a smooth strongly convex - strongly concave function $f(x,y)$ and a noise sequence realisation, such that $\mathcal{A}$ returns a point that has distance at least $\frac{Z_X+Z_Y}{\sqrt{2}}$ from any saddle point of $f$.
    \end{itemize} 
\end{theorem}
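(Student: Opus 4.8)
The natural approach is the \textbf{two-function (indistinguishability) method}: for each case I would exhibit a pair of convex-concave functions $f_1,f_2$ that a deterministic algorithm cannot tell apart through the noisy oracle, yet whose saddle points (resp.\ optimal values) are far apart. The enabling structural fact is that in all three constructions the gradient gap $\nabla f_1-\nabla f_2$ is \emph{constant} in the query point, so an oracle that always returns the coordinate-wise averages $\tfrac12(\nabla_x f_1+\nabla_x f_2)$ and $\tfrac12(\nabla_y f_1+\nabla_y f_2)$ simultaneously realizes $f_1$ (with noise $\tfrac12(\nabla f_2-\nabla f_1)$) and $f_2$ (with noise $\tfrac12(\nabla f_1-\nabla f_2)$), provided $\|\nabla_x f_1-\nabla_x f_2\|\le 2Z_X$ and $\|\nabla_y f_1-\nabla_y f_2\|\le 2Z_Y$ everywhere. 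Since $\mathcal{A}$ is deterministic and sees identical responses along its (adaptive) trajectory, it returns the same point $(x_0,y_0)$ in both worlds; I then ``blame'' whichever of $f_1,f_2$ is worse, the corresponding bounded noise being the realisation claimed by the theorem.

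For the distance bounds (bullets 1 and 3) I would take translated instances whose gradient difference equals the saddle separation. In the strongly convex--strongly concave case set $f_i(x,y)=\tfrac12\|x-x_i^*\|^2-\tfrac12\|y-y_i^*\|^2$, so that $\nabla_x f_1-\nabla_x f_2=x_2^*-x_1^*$ and $\nabla_y f_1-\nabla_y f_2=y_1^*-y_2^*$ are constant; choosing $\|x_1^*-x_2^*\|=2Z_X$ and $\|y_1^*-y_2^*\|=2Z_Y$ makes the two worlds indistinguishable and places the saddles at distance $2\sqrt{Z_X^2+Z_Y^2}$. By the triangle inequality the common output lies at least $\sqrt{Z_X^2+Z_Y^2}$ from one of them, and $\sqrt{Z_X^2+Z_Y^2}\ge \tfrac{Z_X+Z_Y}{\sqrt2}$ by the power-mean inequality. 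The bilinear case is identical with $f_i(x,y)=(x-x_i^*)^\top(y-y_i^*)$, whose gradients $\nabla_x f_i=y-y_i^*$ and $\nabla_y f_i=x-x_i^*$ again yield constant differences (now requiring $\|y_1^*-y_2^*\|\le 2Z_X$ and $\|x_1^*-x_2^*\|\le 2Z_Y$). In both constructions I would place the saddles symmetrically about the origin (e.g.\ $x_i^*=\pm Z_X e$ in the strongly convex case), and the hypothesis $Z_X,Z_Y\le D/2$ is precisely what guarantees they lie inside $\{\|x\|\le D_X\}\times\{\|y\|\le D_Y\}$.

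For the duality-gap bound (bullet 2) I would reuse the bilinear family and, after obtaining the common output $(x_0,y_0)$, lower-bound $\max_i\big(f_i(x_0,y)-f_i(x,y_0)\big)$ for a suitably chosen witness pair $(x,y)$ taken on the boundary and aligned with the signs of $x_0$ and $y_0$. Since linear maximisation/minimisation over the ball pushes the witness to radius $D_X$, $D_Y$, the $x$- and $y$-contributions to the gap decouple; averaging the gaps of $f_1$ and $f_2$ and using $|a-b|+|a+b|\ge 2|b|$ then bounds the larger of the two gaps from below, and carrying the constants through the (deliberately non-tight) witness choice gives $\tfrac14 Z_X D_X+\tfrac14 Z_Y D_Y$.

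The main obstacles and care points are: (i) verifying that the noise stays within the $\ell_2$ budgets at every \emph{adaptive} query, which reduces to the constant-gradient-difference property and is exactly why translated quadratic/bilinear instances are the right choice rather than generic smooth functions; (ii) feasibility, i.e.\ keeping both saddle points inside the prescribed domains, which is where the assumptions $Z_X,Z_Y\le D/2$ are used; and (iii) for the gap, selecting the witness and the $f_1$-vs-$f_2$ assignment so that the two coordinate contributions add rather than cancel. Determinism of $\mathcal{A}$ and adaptivity of the queries are handled uniformly by the coupling oracle, whose response at $z_t$ depends only on $z_t$ and is shared by both hypotheses.
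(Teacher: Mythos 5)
Your proposal is correct and follows essentially the same route as the paper: indistinguishable pairs of oppositely translated bilinear (resp.\ quadratic) instances whose constant gradient difference is absorbed into the noise budget, the triangle inequality between the two separated saddle points for the distance bounds, and boundary witnesses aligned with the shifted centers for the duality gap. The only cosmetic difference is that the paper's gap argument plays the perturbed instance against a third, unperturbed bilinear function and does an explicit case analysis on $\|x_0\|,\|y_0\|$, whereas you use the symmetric pair with the $\|a-b\|+\|a+b\|\ge 2\|b\|$ inequality; both yield the stated constants.
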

\noindent For all three lower bounds we will consider the following quantities: Let $\zeta_X,\zeta_Y$ be $d$-dimensional vectors, such that $\|\zeta_X\|=Z_X$ and $\|\zeta_Y\|=Z_Y$.
\subsubsection{Lower Bound for Bilinear Functions}
 We consider three scenarios for the original function $f$ and the noise. In the first scenario $f(x,y)=f_1(x,y):=xy^T+ x \zeta_X^T+ y \zeta_Y^T$ and the noise is 
$\zeta^X_t=-\zeta_X$  and $\zeta^Y_t=-\zeta_Y$ for all $t\in[T]$. In the second scenario $f(x,y)=f_2(x,y):=xy^T- x \zeta_X^T- y \zeta_Y^T$ and the noise is 
$\zeta^X_t=\zeta_X$  and $\zeta^Y_t=\zeta_Y$ for all $t\in[T]$. In the third scenario $f(x,y)=f_3(x,y):=xy^T$ and the noise is zero for all $t\in[T]$. In all scenarios the domain is $\|x\|\leq D_X, \|y\|\leq D_Y$. From the hypothesis we have that $\|\zeta_Y\|=Z_Y\leq D_X/2$ and $\|\zeta_X\|=Z_X\leq D_Y/2$. It is easy to see that in all three cases the noisy gradient oracle is $g_x(x,y)=y $, $g_y(x,y)=x $ and in all three scenarios $\mathcal{A}$ will return the same point $(x_0,y_0)$, as $\mathcal{A}$ is assumed to be deterministic.\\
In the first scenario the true saddle point is $(x_1,y_1)=(-\zeta_Y,-\zeta_X)$ and in the second scenario the true saddle point is $(x_2,y_2)=(\zeta_Y,\zeta_X)$. There is no point $(x_0,y_0)$ that has distance less than 
$\sqrt{\|\zeta_X\|^2+\|\zeta_Y\|^2}$ from both $(x_1,y_1)$ and $(x_2,y_2)$.
Thus, either in the first or in the second scenario $\mathcal{A}$ returns a saddle point that has distance at least
$\sqrt{\|\zeta_X\|^2+\|\zeta_Y\|^2}$ from the true saddle point.\\ Note that $\sqrt{\|\zeta_X\|^2+\|\zeta_Y\|^2}\geq \frac{\|\zeta_X\|+\|\zeta_Y\|}{\sqrt{2}}=\frac{Z_X+Z_Y}{\sqrt{2}}$. \\ \\
Next we will prove the lower bound for the duality gap. First we analyse the duality gap of function $f_1$ at point $(x_0,y_0)$. Let $(x^*,y^*)$ be the true saddle point of $f_1$.
\begin{align*} 
  f_1(x_0,y)-f_1(x,y_0)&=x_0\cdot y^T+x_0\cdot {\zeta_X}^T + y\cdot {\zeta_Y}^T-x\cdot y_0 ^ T-x\cdot {\zeta_X}^T - y_0\cdot {\zeta_Y}^T \\
  &= (x_0-x^*)\cdot y^T + (x_0-x^*)\cdot {\zeta_X}^T + y\cdot {\zeta_Y}^T-x\cdot (y_0-y^*) ^ T-x\cdot {\zeta_X}^T  \\
  &\quad - (y_0-y^*)\cdot {\zeta_Y}^T+ x^*\cdot y^T + x^*\cdot {\zeta_X}^T + y\cdot {\zeta_Y}^T-x\cdot {y^*}^T-x\cdot {\zeta_X}^T - y^*\cdot {\zeta_Y}^T \\
    & = (x_0-x^*)\cdot y^T + (x_0-x^*)\cdot {\zeta_X}^T + y\cdot {\zeta_Y}^T\\
    &\quad -x\cdot (y_0-y^*) ^ T-x\cdot {\zeta_X}^T - (y_0-y^*)\cdot {\zeta_Y}^T \\
  &=  (x_0-x^*)\cdot ({\zeta_X}^T+y^T)+(y^*-y_0)\cdot ({\zeta_Y}^T+x^T) + y\cdot {\zeta_Y}^T-x\cdot {\zeta_X}^T
\end{align*}
We set $(x',y')=(x+\zeta_Y,y+\zeta_X)$ and we have:
\begin{align*}
    f_1(x_0,y)-f_1(x,y_0)&=  (x_0-x^*)\cdot (y'^T)+(y^*-y_0)\cdot (x'^T) + y'\cdot {\zeta_Y}^T-x'\cdot {\zeta_X}^T\\
    &=  (x_0-x^*+\zeta_Y)\cdot (y'^T)-(y_0-y^*+\zeta_X)\cdot (x'^T) \\
    &=  (x_0+2\zeta_Y)\cdot (y'^T)-(y_0+2\zeta_X)\cdot (x'^T) 
\end{align*}
For $x'=-\frac{D_X}{2\|y_0+2\zeta_X\|}(y_0+2\zeta_X)\Leftrightarrow x=-\frac{D_X}{2\|y_0+2\zeta_X\|}(y_0+2\zeta_X)-\zeta_Y$ and \\$y'=\frac{D_Y}
{2\|x_0+2\zeta_Y\|}(x_0+2\zeta_Y)\Leftrightarrow y=\frac{D_Y}
{2\|x_0+2\zeta_Y\|}(x_0+2\zeta_Y) -\zeta_X$ we have
\begin{align}
    &f_1(x_0,y)-f_1(x,y_0)=  \frac{1}{2}\|x_0+2\zeta_Y\|D_Y+\frac{1}{2}\|y_0+2\zeta_X\|D_X \tag{1}
\end{align}
Note that $(x,y)$ we use in the above equation is inside the domain $\|x\|\leq D_X$, $\|y\|\leq D_Y$ that we initially assumed. \\
Then we analyse the duality gap of function $f_3$ at point $(x_0,y_0)$.  
\begin{align*} 
  &f_3(x_0,y)-f_3(x,y_0)=x_0\cdot y^T-x\cdot y_0 ^ T=\|x_0\|D_Y+\|y_0\|D_X \tag{2}
\end{align*}
for $x=-\frac{D_X}{\|y_0\|}y_0$ and $y=\frac{D_Y}{\|x_0\|}x_0$. Pair $(x,y)$ is inside the domain $\|x\|\leq D_X$, $\|y\|\leq D_Y$.\\ \\ 
Now we need to consider the following cases:
\begin{itemize}
\item If $\|x_0\| \leq \|\zeta_Y\| $ and $\|y_0\| \leq \|\zeta_X\| $ then \\
$\frac{1}{2}\|x_0+2\zeta_Y\|D_Y+\frac{1}{2}\|y_0+2\zeta_X\|D_X \geq \frac{1}{2}(2\|\zeta_Y\|-\|x_0\|)D_Y+\frac{1}{2}(2\|\zeta_X\|-\|y_0\|)\|D_X \geq \\ \frac{1}{2}\|\zeta_Y\|D_Y+\frac{1}{2}\|\zeta_X\|D_X$\\
Thus from equation (1) there exists $(x,y)$ such that $f_1(x_0,y)-f_1(x,y_0)\geq\frac{1}{2}\|\zeta_Y\|D_Y+\frac{1}{2}\|\zeta_X\|D_X$
\item If $\|x_0\| \geq \|\zeta_Y\| $ and $\|y_0\| \leq \|\zeta_X\| $ then:
    \begin{itemize}
        \item[*] If $\|\zeta_Y\|D_Y \geq \|\zeta_X\|D_X$ then: \\
        $\|x_0\|D_Y+\|y_0\|D_X\geq\|x_0\|D_Y\geq\|\zeta_Y\|D_Y \geq \frac{1}{2}\|\zeta_Y\|D_Y+\frac{1}{2}\|\zeta_X\|D_X$ \\
        Thus from eq. (2) there exists $(x,y)$ such that $f_3(x_0,y)-f_3(x,y_0)\geq\frac{1}{2}\|\zeta_Y\|D_Y+\frac{1}{2}\|\zeta_X\|D_X$

        \item[*] If $\|\zeta_Y\|D_Y \leq \|\zeta_X\|D_X$ then: \\
        $\frac{1}{2}\|x_0+2\zeta_Y\|D_Y+\frac{1}{2}\|y_0+2\zeta_X\|D_X \geq \frac{1}{2}(2\|\zeta_X\|-\|y_0\|)\|D_X \geq \\ \frac{1}{2}\|\zeta_X\|D_X \geq \frac{1}{4}\|\zeta_Y\|D_Y+\frac{1}{4}\|\zeta_X\|D_X$\\
        Thus from eq. (1) there exists $(x,y)$ such that $f_1(x_0,y)-f_1(x,y_0)\geq\frac{1}{4}\|\zeta_Y\|D_Y+\frac{1}{4}\|\zeta_X\|D_X$
    \end{itemize}
\item If $\|x_0\| \leq \|\zeta_Y\| $ and $\|y_0\| \geq \|\zeta_X\| $ then:
    \begin{itemize}
         \item[*] If $\|\zeta_Y\|D_Y \leq \|\zeta_X\|D_X$ then: \\
        $\|x_0\|D_Y+\|y_0\|D_X\geq\|y_0\|D_X\geq\|\zeta_X\|D_X \geq \frac{1}{2}\|\zeta_Y\|D_Y+\frac{1}{2}\|\zeta_X\|D_X$ \\
        Thus from eq. (2) there exists $(x,y)$ such that $f_3(x_0,y)-f_3(x,y_0)\geq\frac{1}{2}\|\zeta_Y\|D_Y+\frac{1}{2}\|\zeta_X\|D_X$

        \item[*] If $\|\zeta_Y\|D_Y \geq \|\zeta_X\|D_X$ then: \\
        $\frac{1}{2}\|x_0+2\zeta_Y\|D_Y+\frac{1}{2}\|y_0+2\zeta_X\|D_X \geq \frac{1}{2}(2\|\zeta_Y\|-\|x_0\|)\|D_Y \geq \\ \frac{1}{2}\|\zeta_Y\|D_Y \geq \frac{1}{4}\|\zeta_Y\|D_Y+\frac{1}{4}\|\zeta_X\|D_X$\\
        Thus from eq. (1) there exists $(x,y)$ such that $f_1(x_0,y)-f_1(x,y_0)\geq\frac{1}{4}\|\zeta_Y\|D_Y+\frac{1}{4}\|\zeta_X\|D_X$
    \end{itemize}
\end{itemize}

\noindent In all cases there exists a function $f$ over domain $\|x\|\leq D_X$, $\|y\|\leq D_Y$ and a pair $(x,y)$ inside the domain such that $f(x_0,y)-f(x,y_0)\geq\frac{1}{4}\|\zeta_Y\|D_Y+\frac{1}{4}\|\zeta_X\|D_X=\frac{1}{4}Z_Y D_Y+\frac{1}{4}Z_X D_X$.
\subsubsection{Lower Bound for Strongly Convex - Strongly Concave Functions}
 We consider two scenarios for the original function $f$ and the noise. In the first scenario $f(x,y)=f_1(x,y):=\frac{1}{2}\|x\|^2-\frac{1}{2}\|y\|^2+ x \zeta_X^T+ y \zeta_Y^T$ and the noise is 
$\zeta^X_t=-\zeta_X$  and $\zeta^Y_t=-\zeta_Y$ for all $t\in[T]$. In the second scenario $f(x,y)=f_2(x,y):=\frac{1}{2}\|x\|^2-\frac{1}{2}\|y\|^2- x \zeta_X^T- y \zeta_Y^T$ and the noise is 
$\zeta^X_t=\zeta_X$  and $\zeta^Y_t=\zeta_Y$ for all $t\in[T]$. In both scenarios the domain is $\|x\|\leq D_X, \|y\|\leq D_Y$. We also assume that $\|\zeta_Y\|\leq D_X/2$ and $\|\zeta_X\|\leq D_Y/2$. It is easy to see that in both scenarios the noisy gradient oracle is $g_x(x,y)=x $, $g_y(x,y)=-y $ and in both scenarios $\mathcal{A}$ will return the same point $(x_0,y_0)$, as $\mathcal{A}$ is assumed to be deterministic.\\
In the first scenario the true saddle point is $(x_1,y_1)=(-\zeta_X,\zeta_Y)$ and in the second scenario the true saddle point is $(x_2,y_2)=(\zeta_X,-\zeta_Y)$. There is no point $(x_0,y_0)$ that has distance less than 
$\sqrt{\|\zeta_X\|^2+\|\zeta_Y\|^2}$ from both $(x_1,y_1)$ and $(x_2,y_2)$.
Thus, either in the first or in the second scenario $\mathcal{A}$ returns a saddle point that has distance at least
$\sqrt{\|\zeta_X\|^2+\|\zeta_Y\|^2}$ from the true saddle point. \\ Note that $\sqrt{\|\zeta_X\|^2+\|\zeta_Y\|^2}\geq \frac{\|\zeta_X\|+\|\zeta_Y\|}{\sqrt{2}}=\frac{Z_X+Z_Y}{\sqrt{2}}$. \\ \\

\subsection{Unbiased Gradient Estimators for the MDP Lagrangian}
We remind that each sample $\hat  g_d^i$ is constructed by sample $(s_i,s'_i,a_i,r_i)$ according to the formula $\hat  g_d^i(s,a)=\ind{(s_i,a_i)=(s,a)} \cdot  \frac{\gamma h(s_i') -h(s_i)+r_i}{(1-\gamma)\cdot d_{n}(s_i,a_i)}$. Similarly, each sample $\hat 
 g_h^i$ is constructed by sample $(s_i,s'_i,a_i,r_i)$ according to the formula $\hat  g_h^i(s')=d(s_i,a_i) \frac{\gamma \ind{s'=s_i'} -\ind{s'=s_i}}{(1-\gamma)\cdot d_{n}(s_i,a_i)}$.\\\\ 
 \textbf{Statement.} Each clean sample $\hat g_d^i$ is an unbiased estimator of $g_d+\lambda d$ and each clean sample $\hat g_h^i$ is an unbiased estimator of $g_h-\rho$. 
 \begin{proof}
 For clean $\hat g_d^i$  samples we have $\hat  g_d^i(s,a)=\ind{(s_i,a_i)=(s,a)} \cdot  \frac{\gamma h(s_i') -h(s)+r_n(s,a)}{(1-\gamma)\cdot d_{n}(s,a)}$\\
 \[\mathbb{E}[\hat g_d^i (s,a)]=\mathbb{P}[(s_i,a_i)=(s,a)]\cdot  \frac{\gamma \mathbb{E}[h(s_i')|(s_i,a_i)=(s,a)] -h(s)+r_n(s,a)}{(1-\gamma)\cdot d_{n}(s,a)}\] 
 Moreover we have $\mathbb{P}[(s_i,a_i)=(s,a)]=(1-\gamma)\cdot d_{n}(s,a)$ and $\mathbb{E}[h(s_i')|(s_i,a_i)=(s,a)]=\sum_{s'}P_n(s, a, s') h(s')$.\\
 Thus, we obtain $\mathbb{E}[\hat g_d^i (s,a)]=r_n(s, a) - h(s) + \gamma \sum_{s'}P_n(s, a, s') h(s')=g_d(s,a)+\lambda d(s,a)$. \\ \\
 For clean $\hat g_h^i$  samples we have 
 \begin{align*}
   \mathbb{E}[\hat g_h^i (s'')]&=\sum_{s\in\mathcal{S}}\sum_{a\in\mathcal{A}}\sum_{s'\in\mathcal{S}}\mathbb{P}[(s_i,a_i,s'_i)=(s,a,s')]d(s,a) \frac{\gamma \ind{s''=s'} -\ind{s''=s}}{(1-\gamma)\cdot d_{n}(s,a)}\\    &=\sum_{s\in\mathcal{S}}\sum_{a\in\mathcal{A}}\sum_{s'\in\mathcal{S}}d_{n}(s,a)P_n(s, a, s')d(s,a) \frac{\gamma \ind{s''=s'} -\ind{s''=s}}{(1-\gamma)\cdot d_{n}(s,a)}\\ 
   &=\sum_{s\in\mathcal{S}}\sum_{a\in\mathcal{A}}\sum_{s'\in\mathcal{S}}P_n(s, a, s')d(s,a) (\gamma \ind{s''=s'} -\ind{s''=s})\\
   &=\sum_{s\in\mathcal{S}}\sum_{a\in\mathcal{A}}\gamma P_n(s, a, s'')d(s,a)-\sum_{s'\in\mathcal{S}}\sum_{a\in\mathcal{A}} P_n(s'', a, s')d(s'',a)\\
   &=\sum_{s\in\mathcal{S}}\sum_{a\in\mathcal{A}}\gamma P_n(s, a, s'')d(s,a)-\sum_{a\in\mathcal{A}} d(s'',a)\sum_{s'\in\mathcal{S}} P_n(s'', a, s')\\
   &=\sum_{s\in\mathcal{S}}\sum_{a\in\mathcal{A}}\gamma P_n(s, a, s'')d(s,a)-\sum_{a\in\mathcal{A}} d(s'',a)\\
   &=g_h(s'')-\rho(s'')
 \end{align*}
 \end{proof}

\subsection{Proof of Theorem \ref{thr:robust_grad}}
We will first prove an auxiliary lemma (Lemma \ref{aux_mean}) about the naive mean of a set of bounded observations containing a corrupted subset. This lemma will be directly applicable to the dataset of $h-$gradients. Moreover, we will prove that after filtering the dataset of $d-$gradients also satisfies the boundedness condition and so Lemma \ref{aux_mean} will be also applicable to the filtered $d-$gradients dataset. Finally we apply the lemma on the two datasets and derive the result stated in Theorem \ref{thr:robust_grad}.\\\\
\textbf{Statement.} We consider two datasets of gradient samples $D_d:=\{\hat g_d^i\ |\ i \in [\tilde{m}]\}$ and $D_h:=\{\hat g_h^i\ |\ i \in [\tilde{m}]\}$ with corruption level at most ${\epsilon}<0.5$. The uncorrupted samples of each dataset are iid and the uncorrupted samples of the one dataset are independent of the uncorrupted samples of the other dataset. We use $\hat{g}_h =\frac{1}{\tilde{m}}\sum\limits_{i=1}^{\tilde{m}} \hat g_h^i$ to estimate $g_h$ and apply Algorithm \ref{alg:rob_mean} to estimate $g_d$. Then with probability at least $1-\delta$ the estimation error satisfies the following guarantees: \begin{align*}
        |\hat{g}_h-g_h\|_1 &\leq \underbrace{\frac{4}{(1-\gamma)^2B(c)}\left(\frac{\sqrt{S\ \log(4S/\delta)}}{\sqrt{\tilde{m}}}+{\epsilon}\right)}_{E_1(\tilde{m},{\epsilon},\delta)},
        \\
    \|\hat{g}_d-g_d\|_2 &\leq \underbrace{6\sqrt{SA}\frac{2h_{max}+R}{(1-\gamma)B(c)}\left(\frac{\sqrt{2 \log\left(\frac{4SA}{\delta}\right)}}{\sqrt{\tilde{m}}}+2{\epsilon}\right)}_{E_2(\tilde{m},{\epsilon},\delta)}.
    \end{align*}

\begin{lemma}\label{aux_mean}
Let $\{x_1,x_2,...x_m\}$ be a collection of $m$ $d-$ dimensional samples. We consider three constants: $\epsilon$, $b$ and $\bar{x}$, relevant to this collection. We assume that $\epsilon<0.5$. The collection consists of two parts: 
\begin{itemize}
    \item The first part is a set of $(1-\epsilon)\cdot m$ i.i.d samples, all drawn from a common distribution $\mathcal{D}$ with expectation $\bar{x}$. We assume that $\|x\|_2\leq b$ for any $x\sim\mathcal{D}$. We refer to this part as the "good" samples, $G$.
    \item The second part is a set of $\epsilon \cdot m$ arbitrary $d-$ dimensional vectors. We refer to this part as the "bad" samples, $B$. We assume that $\|x\|_2\leq b$ for any $x \in B$. 
\end{itemize}
Then for any $\delta \in(0,1)$ the naive mean $\tilde{x}:=\frac{1}{m}\sum\limits_{i=1}^{m} x_i$ of the collection satisfies with probability at least $1-\delta$:
\[\|\tilde{x}-\bar{x}\|_2 \leq b\left(\frac{2\sqrt{d\ log(2d/\delta)}}{\sqrt{m}}+2\epsilon\right)\] 
\end{lemma}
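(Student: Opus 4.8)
The plan is to split the naive mean into the contribution of the good samples $G$ and the bad samples $B$, bound each separately, and observe that only the good part requires concentration while the bad part is controlled purely by the boundedness assumption. Writing $\mu_G = \frac{1}{|G|}\sum_{i\in G} x_i$ for the empirical mean of the $|G| = (1-\epsilon)m$ good samples, the overall mean decomposes as $\tilde x = (1-\epsilon)\mu_G + \frac{1}{m}\sum_{i\in B} x_i$, whence
\[
\tilde x - \bar x = (1-\epsilon)(\mu_G - \bar x) - \epsilon\bar x + \frac{1}{m}\sum_{i\in B} x_i.
\]
Applying the triangle inequality isolates a statistical term $(1-\epsilon)\|\mu_G-\bar x\|$ and a bias term $\epsilon\|\bar x\| + \frac{1}{m}\|\sum_{i\in B}x_i\|$.

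For the bias term, the boundedness assumptions do all the work: since $\|\bar x\| = \|\mathbb{E}_{x\sim\mathcal D}[x]\| \le \mathbb{E}[\|x\|] \le b$ by Jensen, and since each bad sample obeys $\|x_i\|\le b$ so that $\frac{1}{m}\|\sum_{i\in B}x_i\| \le \frac{|B|}{m}\,b = \epsilon b$, the whole bias term is at most $2\epsilon b$. No probability is involved here, and crucially this bound does not depend on how adversarial the bad points are, only on the shared magnitude bound $b$.

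For the statistical term I would bound $\|\mu_G - \bar x\|$ coordinate-wise. Each coordinate of a good sample lies in $[-b,b]$ (from $\|x\|_2\le b$), so Hoeffding's inequality gives $\Pr{|\mu_G^{(j)} - \bar x^{(j)}| \ge t} \le 2\exp(-|G|\,t^2/(2b^2))$ for each $j\in[d]$; a union bound over the $d$ coordinates at level $\delta/d$ yields, with probability at least $1-\delta$, $|\mu_G^{(j)}-\bar x^{(j)}| \le b\sqrt{2\log(2d/\delta)/|G|}$ simultaneously, and hence $\|\mu_G-\bar x\|_2 \le \sqrt d\, b\sqrt{2\log(2d/\delta)/|G|}$. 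The choice of a coordinate-wise union bound (rather than a vector concentration inequality) is exactly what produces the $\sqrt{d\log(2d/\delta)}$ shape of the target bound. Using $\epsilon<0.5$ to replace $|G| = (1-\epsilon)m$ by the lower bound $m/2$ turns this into $\|\mu_G-\bar x\|\le \frac{2b\sqrt{d\log(2d/\delta)}}{\sqrt m}$.

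Combining and bounding $(1-\epsilon)\le 1$ gives $\|\tilde x - \bar x\| \le \frac{2b\sqrt{d\log(2d/\delta)}}{\sqrt m} + 2\epsilon b$, which is precisely the claimed bound. The only real subtleties---there is no genuine obstacle---are bookkeeping ones: keeping the two normalizations ($1/|G|$ versus $1/m$) consistent through the decomposition so that the $(1-\epsilon)$ factor lands in front of the statistical term and not the bias term, and selecting the coordinate-wise concentration so the constants match the stated $E_1, E_2$ exactly. I would also emphasize that the i.i.d. assumption is used only for the good samples $G$, so the argument is insensitive to any (even adaptive) choice of which indices the adversary corrupts, which is what makes the lemma directly reusable both for $D_h$ and, after the filtering step guarantees boundedness, for $D_d$ in Theorem \ref{thr:robust_grad}.
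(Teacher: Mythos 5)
Your proof is correct and follows essentially the same route as the paper's: split the naive mean into the good and bad parts, control the good part by coordinate-wise Hoeffding plus a union bound over the $d$ coordinates (using $\epsilon<0.5$ to replace $|G|$ by $m/2$), and control the bad part by the $\ell_2$ bound $b$, giving a $2\epsilon b$ bias term. The only cosmetic difference is that you bound the bias via $\epsilon\|\bar x\|+\frac{1}{m}\|\sum_{i\in B}x_i\|$ (invoking Jensen for $\|\bar x\|\le b$) whereas the paper writes it as $\epsilon(\|\mu_B\|+\|\mu_G\|)$; both yield the same $2\epsilon b$.
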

\begin{proof}
\begin{align*}
\tilde{x}&=\frac{1}{m}\sum\limits_{i=1}^{m} x_i\\
&=\frac{|G|}{m}\frac{1}{|G|}\sum\limits_{x\in G} x + \frac{|B|}{m}\frac{1}{|B|}\sum\limits_{x\in B} x \\
&=(1-\epsilon)\frac{1}{|G|}\sum\limits_{i\in G} x + \epsilon\frac{1}{|B|}\sum\limits_{x\in B} x \\
&=\frac{1}{|G|}\sum\limits_{x\in G} x+\epsilon\left(\frac{1}{|B|}\sum\limits_{x\in B} x-\frac{1}{|G|}\sum\limits_{x\in G} x\right)  \\
\end{align*}
From Hoeffding's inequality we have for each $i\in [d]$ and $t>0$:
\[\mathbb{P}\left[\ \left|\frac{1}{|G|}\sum\limits_{x\in G} x[i]-\bar{x}[i] \right|\geq t \right ]\leq 2exp \left(-\frac{2|G|^2t^2}{|G|4b^2} \right)\]
We set 
\begin{align*}
    &\delta/d= 2exp \left(-\frac{|G|t^2}{2b^2} \right)\Leftrightarrow \\ &log(\delta/2d)=-\frac{(1-\epsilon)mt^2}{2b^2} \Leftrightarrow \\
    & t^2=log(2d/\delta)\frac{2b^2}{(1-\epsilon)m} \Leftrightarrow\\
    & t=\frac{b\sqrt{2log(2d/\delta)}}{ \sqrt{(1-\epsilon)m}}
\end{align*}
Taking a union bound over all $i \in [d]$ we obtain
\begin{align*}
    &\mathbb{P}\left[\bigcup_{i \in [d]}\left\{\left|\frac{1}{|G|}\sum\limits_{x\in G} x[i]-\bar{x}[i]\right|\geq \frac{b\sqrt{2log(2d/\delta)}}{ \sqrt{(1-\epsilon)m}} \right \}\right ]\leq \delta \Rightarrow\\
    &\mathbb{P}\left[\left\|\frac{1}{|G|}\sum\limits_{x\in G} x-\bar{x}\right\|_2\geq \frac{b\sqrt{2d\ log(2d/\delta)}}{ \sqrt{(1-\epsilon)m}} \right ]\leq \delta \Rightarrow (\epsilon<0.5)\\
    &\mathbb{P}\left[\left\|\frac{1}{|G|}\sum\limits_{x\in G} x-\bar{x}\right\|_2\geq \frac{2b\sqrt{d\ log(2d/\delta)}}{ \sqrt{m}} \right ]\leq \delta 
\end{align*}
Thus we have 
\begin{align*}
    \|\tilde{x}-\bar{x}\|_2 & =
    \left\|\frac{1}{|G|}\sum\limits_{x\in G} x-\bar{x}+\epsilon\left(\frac{1}{|B|}\sum\limits_{x\in B} x-\frac{1}{|G|}\sum\limits_{x\in G} x\right)\right\|_2 \\
    & \leq \left\|\frac{1}{|G|}\sum\limits_{x\in G} x-\bar{x}\right\|_2+\epsilon\left(\frac{1}{|B|}\sum\limits_{x\in B} \left\|x\right\|_2+\frac{1}{|G|}\sum\limits_{x\in G} \left\|x\right\|_2\right)\\
    & \leq b\left(\frac{2\sqrt{d\ log(2d/\delta)}}{\sqrt{m}}+2\epsilon\right) \text{with probability at least } 1-\delta\\
\end{align*}
This completes the proof of the auxiliary lemma. \end{proof}
\noindent Now we will proceed to the proof of the first claim of Theorem \ref{thr:robust_grad}. Each sample of the gradient wrt $h$ is bounded. In particular we have $\|g_h^i\|_1= \frac{(\gamma +1)d(s_i,a_i)}{(1-\gamma)\cdot d_{n}(s_i,a_i)}$ if $s_i\neq s_i'$ and $\|g_h^i\|_1= \frac{d(s_i,a_i)}{d_{n}(s_i,a_i)}$ if $s_i= s_i'$. In both cases $\|g_h^i\|_2 \leq \frac{2}{B(1-\gamma)^2}$. Note that this bound holds both for corrupted and uncorrupted $h$-gradient samples. Moreover, the uncorrupted samples are unbiased estimates of the true gradient $g_h$. Thus, all conditions of Lemma \ref{aux_mean} are satisfied for the collection $\{g_h^1,... g_h^m\}$. Applying the Lemma we conclude that for any $\delta \in(0,1)$ the naive mean $\hat{g_h}:=\frac{1}{m}\sum\limits_{i=1}^{m} g_h^i$ satisfies with probability at least $1-\delta/2$:
\[\|\hat{g_h}-g_h\|_2 \leq \frac{4}{B(1-\gamma)^2}\left(\frac{\sqrt{S\ log(4S/\delta)}}{\sqrt{m}}+\epsilon\right)\] 
We move on to the proof of the second claim of Theorem \ref{thr:robust_grad}.\\ \\
We remind that for uncorrupted samples it holds that:
\[{g_d}^i(s,a)=\ind{(s_i,a_i)=(s,a)} \cdot  \frac{\gamma h({s_i}^{\prime}) -h(s_i)+r_n(s_i,a_i)}{(1-\gamma)\cdot d_{n}(s_i,a_i)}\]
and for corrupted: 
\[g_d^i(s,a)=\ind{(s_i,a_i)=(s,a)} \cdot  \frac{\gamma h(s_i') -h(s_i)+r_i}{(1-\gamma)\cdot d_{n}(s_i,a_i)}\]
where $r_i$ could differ drastically from $r(s_i,a_i)$.
The uncorrupted samples have their norm bounded as follows:
\[\|g_d^i\|_2\leq\frac{2h_{max}+R}{(1-\gamma)B}\]
Since $\epsilon<0.5$, the median $M_k$ at coordinate $k$ is equal to one uncorrupted sample (or the mean of two uncorrupted samples, if their cardinality is even) and thus the median will also be bounded as :
\[|M_k|\leq\frac{2h_{max}+R}{(1-\gamma)B}\]
\begin{corollary} Let $dif_{cr}=\frac{2h_{max}+R}{(1-\gamma)B}$. In the cleaning step any entry differing more than $2dif_{cr}$ from $M_k$ will be dropped. 
\end{corollary}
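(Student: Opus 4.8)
The plan is to establish the corollary by a direct comparison argument: I will show that every uncorrupted (``good'') entry at coordinate $k$ lies within $2\,dif_{cr}$ of the median $M_k$, so that any entry whose distance from $M_k$ exceeds $2\,dif_{cr}$ is strictly farther from $M_k$ than all good entries and hence cannot survive the selection of the $(1-\epsilon)\tilde m$ closest entries performed in line 5 of Algorithm \ref{alg:rob_mean}.

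First I would invoke the two bounds already derived immediately above the corollary. For an uncorrupted sample $g_d^i$ we have $\|g_d^i\|_2\le dif_{cr}$, and since any single coordinate is dominated by the $\ell_2$ norm, $|g_d^i[k]|\le\|g_d^i\|_2\le dif_{cr}$ for every coordinate $k$. Because the corruption level satisfies $\epsilon<0.5$, the coordinatewise median $M_k$ coincides with the value of an uncorrupted sample (or the average of two such values), so it inherits the same bound, $|M_k|\le dif_{cr}$. Applying the triangle inequality then gives, for every good entry, $|g_d^i[k]-M_k|\le |g_d^i[k]|+|M_k|\le 2\,dif_{cr}$; that is, all good entries at coordinate $k$ lie in the window $[\,M_k-2\,dif_{cr},\,M_k+2\,dif_{cr}\,]$.

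Finally I would run the counting step. Take any entry $x$ with $|x-M_k|>2\,dif_{cr}$. By the previous paragraph it is strictly farther from $M_k$ than every good entry, and since the corruption level is at most $\epsilon$ there are at least $(1-\epsilon)\tilde m$ good entries, each strictly closer to $M_k$ than $x$. Consequently $x$ cannot be among the $(1-\epsilon)\tilde m$ entries closest to $M_k$, i.e.\ it is excluded from the set $clean$ and is dropped, which is exactly the claim. I do not expect a genuine obstacle in this corollary; the only point meriting care is that the argument relies on $\epsilon<0.5$ to guarantee the median bound $|M_k|\le dif_{cr}$, and on the \emph{strict} gap between the good entries ($\le 2\,dif_{cr}$) and $x$ ($>2\,dif_{cr}$) to rule out any tie-breaking at the selection threshold that could otherwise place $x$ among the retained closest points.
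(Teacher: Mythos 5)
Your proof is correct and follows essentially the same route as the paper: the same triangle-inequality bound $|g_d^i[k]-M_k|\le|g_d^i[k]|+|M_k|\le 2\,dif_{cr}$ for uncorrupted entries (using $\epsilon<0.5$ to bound the median), followed by the counting argument that such far entries must all be among the $\epsilon\tilde m$ dropped points. Your phrasing of the counting step via the $(1-\epsilon)\tilde m$ good entries being strictly closer is logically equivalent to the paper's complementary count of corrupted entries, and your explicit remark about strictness at the selection threshold is a minor but welcome extra care.
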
 
\begin{proof}
First, we observe that any entry differing more than $2dif_{cr}$ from $M_k$ will be corrupted, because for uncorrupted entries we have $|g_d^i[k]-M_k|\leq |g_d^i[k]|+|M_k| \leq 2\frac{2h_{max}+R}{(1-\gamma)B}=2dif_{cr}$. Since the number of corrupted entries is $m\cdot\epsilon$, and entries differing more than $2dif_{cr}$ from $M_k$ are a subset of corrupted samples we conclude that there are at most $m\cdot\epsilon$ entries differing more than $2dif_{cr}$ from $M_k$. Algorithm \ref{alg:rob_mean} drops the $m\cdot\epsilon$ most distant entries from $M_k$ at $k$-th coordinate. Thus all entries differing more than $2dif_{cr}$ from $M_k$ will be dropped.\end{proof}

\noindent  All the entries that will be kept will have absolute value at most $|M_k|+2\frac{2h_{max}+R}{(1-\gamma)B}\leq3\frac{2h_{max}+R}{(1-\gamma)B}$.
 Thus for each coordinate we will end up with a dataset of bounded values and a fraction of them is corrupted and we can apply Lemma \ref{aux_mean} for this dataset. The conditions of Lemma \ref{aux_mean} are met as follows: we have a total of $(1-\epsilon)m$ samples and at most $\epsilon m$ of them are corrupted, dimension $d$ is one, norm bound $b$ is 
 $3\frac{2h_{max}+R}{(1-\gamma)B}$, $\hat{g_d}[k]$ is the empirical mean of the dataset, $g_d[k]$ is the latent true mean and we set the error probability tolerance to $\frac{\delta}{2SA}$.
 We obtain that with probability at least $1-\frac{\delta}{2SA}$it holds for $k$-th coordinate that 
\[|\hat{g_d}[k]-g_d[k]| \leq 3\frac{2h_{max}+R}{(1-\gamma)B}\left(\frac{2 \sqrt{log(4SA/\delta)}}{\sqrt{(1-\epsilon)m}}+2\epsilon/(1-\epsilon)\right)\] 
The RHS is at most  $3\frac{2h_{max}+R}{(1-\gamma)B}\left(\frac{2 \sqrt{2log(4SA/\delta)}}{\sqrt{m}}+4\epsilon\right)$ 
 , since $\epsilon<0.5$.\\ \\
 Since the result holds for each particular coordinate $k$ with probability at least $1-\frac{\delta}{2SA}$, we can conclude through a union bound over all $SA$ coordinates that the results holds for all coordinates with probability at least $1-\delta/2$.
 The error between the estimator and the true gradient can be bounded as follows:
 \[\|\hat{g_d}-g_d\|_2 =  \sqrt{ \sum_{k=1}^{SA}(\hat{g_d}[k]-g_d[k])^2} \]
 The RHS is at most $3\sqrt{SA}\frac{2h_{max}+R}{(1-\gamma)B}\left(\frac{2\sqrt{2log(4SA/\delta)}}{\sqrt{m}}+4\epsilon\right)$ with probability at least $1-\delta/2$.\\ \\ 
 The proof of Theorem \ref{thr:robust_grad}
is completed by taking a union bound over the event of error in the estimation of $g_h$ and event of error in the estimation of $g_d$. 
 \paragraph{Corruption of data $(s_i,a_i)$} 
 We have assumed that the $(s_i,a_i)$ data are uncorrupted. The assumption was made to simplify the analysis by avoiding to account for the error of the empirical occupancy measure in the denominator of the gradient estimators. We can drop the assumption and with some extra arguments ensure a $O(\epsilon)$ estimation error . 
 
\noindent In particular, if corruption is also applied to the $(s_i,a_i)$ data, we could use a Monte Carlo estimator $\hat{d_n}$ of $d_n$. Then we would construct the gradient estimators using $\hat{d_n}$. We drop all samples that have $\hat{d_n}$ less than $B(c)/2$ (see Data Generation Process section). We claim that with high probability only an $O(\epsilon)$ fraction of the samples will de dropped in this step. For the remaining samples an $1-\epsilon$ fraction will be clean with gradient estimators $O(\epsilon)-$approximately unbiased and an $\epsilon$ fraction will be corrupted and will be handled similarly to the current version.
\subsection{Proof of Lemma \ref{lm.approx.bound}}
\textbf{Statement.} There exists $T$ such that the output of Robust OFTRL (Algorithm \ref{alg:aoftrl}) run for $T$ iterations on $f = \mathcal L$, with $\mathcal X = \mathcal D$, $\mathcal Y = \mathcal H$, $g_{X, t} = \hat g_h$ (the naive mean estimator) and $g_{Y, t} = \hat g_d$ ( the output of algorithm \ref{alg:rob_mean})  satisfies
\begin{align}\label{eq.gda_duality_gap_bound_appdx}
    \max\limits_{d \in \mathcal{D}}\mathcal{L}(d,\bar{h}, M_n)- \min\limits_{h \in \mathcal{H}}\mathcal{L}(\bar{d},h, M_n) \le 7C(\delta),
\end{align}
with probability at least $1-\delta$, where $C(\delta) := \sqrt{S}\left(E_1\left(\frac{m}{2T},{\epsilon},\frac{\delta}{T}\right) h_{max} +  \frac{\sqrt{A}E_2\left(\frac{m}{2T},{\epsilon},\frac{\delta}{T}\right)}{1-\gamma}\right)$.
\begin{proof}
The MDP Lagrangian is concave-convex, Lipschitz continuous and smooth and the domain of the variables is convex and bounded. Moreover, we have developed robust gradient estimators that have bounded error with high probability. Thus, the conditions of Theorem \ref{thr:aoftrl} are satisfied with probability at least $1-\delta$. Applying robust OFTRL for $T$ iterations we find a solution $(\bar{d},\bar{h})$ that satisfies the guarantee:
\begin{align*}
    \max\limits_{d \in \mathcal{D}}\mathcal{L}(d,\bar{h})- \min\limits_{h \in \mathcal{H}}\mathcal{L}(\bar{d},h)&\leq \frac{C_1( \mathcal{M}_n,h_{max})}{T}+6C_T
\end{align*}
for some constant $C_1( \mathcal{M}_n,h_{max})$ dependent on OFTRL, where $\mathcal{M}_n$ is a regularised MDP with parameters $S,A,P_n,r_n,\rho,\gamma,\lambda$, $C_T:= H 
\frac{1}{T}\sum\limits_{t=1}^T \|\hat{g_{h,t}}-g_{h,t}\|_2 +  D 
\frac{1}{T}\sum\limits_{t=1}^T \|\hat{g_{d,t}}-g_{d,t}\|_2,$ $H=\max\limits_{h\in\mathcal{H}} \|h\|_2 = \sqrt{S}h_{max}$ and $D=\max\limits_{d\in\mathcal{D}} \|d\|_2 = \frac{\sqrt{SA}}{1-\gamma}$. \\ 
At each one of the $T$ iterations  of OFTRL the gradient estimators for $d$ and $h$ are called. We set the error tolerance of the gradient estimators to $\delta/T$ and we feed each estimator with $\tilde{m}=\frac{m}{2T}$ fresh samples. From Theorem \ref{thr:robust_grad} and a union bound over $T$ iterations we have the guarantee that for all $t\in[T]$ $\|\hat{g_{h,t}}-g_{h,t}\|_2\leq E_1\left(\frac{m}{2T},{\epsilon},\frac{\delta}{T}\right)$ and $\|\hat{g_{d,t}}-g_{d,t}\|_2\leq E_2\left(\frac{m}{2T},{\epsilon},\frac{\delta}{T}\right)$ with probability at least $1-\delta$.\\
Thus we have $C_T\leq C(\delta) := \sqrt{S}\left(E_1\left(\frac{m}{2T},{\epsilon},\frac{\delta}{T}\right) h_{max} +  \frac{\sqrt{A}E_2\left(\frac{m}{2T},{\epsilon},\frac{\delta}{T}\right)}{1-\gamma}\right)$ with probability at least $1-\delta$. Thus for the duality gap we have \begin{align*}
    \max\limits_{d \in \mathcal{D}}\mathcal{L}(d,\bar{h})- \min\limits_{h \in \mathcal{H}}\mathcal{L}(\bar{d},h)&\leq \frac{C_1( \mathcal{M}_n,h_{max})}{T}+6C(\delta) 
\end{align*}
with probability at least $1-\delta$.\\
The duality gap upper bound has one term on the left that decreases with $T$ and the term $6C(\delta)$ on the right that increases with $T$. After enough iterations the term on the left will be less than $C(\delta)$ and the duality gap will be less than $7C(\delta)$, with probability at least $1-\delta$.\end{proof}
\subsection{Proof of Theorem \ref{thm.pl_robust_gda}}
\textbf{Statement.} Consider the robust OFTRL from Lemma \ref{lm.approx.bound}, and assume that its number of iterations $T$ is s.t. \eqref{eq.gda_duality_gap_bound_appdx} holds. Then the output of robust OFTRL $\bar d$ satisfies $\|d^*_n-\bar{d}\|_2 \leq \alpha(M_n,\delta)$ with probability at least $1-\delta$, where 
\begin{align*}
        \alpha(M_n,\delta) = \sqrt{\frac{14C(\delta)}{\lambda}}+ C_n' +\sqrt{ 2C_n' \left(\frac{\|r_n\|}{\lambda}+\frac{1}{1-\gamma}\right)}
\end{align*}
and $C_n'=\frac{ \|r_n\|_2\sqrt{S}\sigma_n^{-1/2}}{(1-\gamma) h_{max}}$. \\\\
To prove the above statement we first need to define some quantities.
A function $d$ is a valid occupancy measure in $M_n$ if $d\geq0$ pointwise and for all $s \in \mathcal{S}$ it holds that:
\[\sum\limits_{a}d(s,a) = \rho(s) +\gamma \sum\limits_{s',a} d(s',a) P_n(s',a,s)\]
We define the constraint violation functions $C_d(s)$ as follows:
\begin{align}
    C_d(s)=\rho(s) +\gamma \sum\limits_{s',a} d(s',a) P_n(s',a,s)-\sum\limits_{a}d(s,a)
    \label{constraint}
\end{align}
The Lagrangian can then be written as follows:
\[\mathcal{L}(d,h,M_n)=\sum\limits_{s,a} d(s,a) r_n(s,a)+\sum\limits_{s}h(s)C_d(s)- \frac{\lambda}{2} {\|d\|_2}^2\]
We consider four different quantities.
\begin{itemize}
    \item $(d^*,h^*)$ is an optimal solution to problem \ref{eq.perf_rl.langrangian}
    \item $(d^*_{\mathcal{L}},h^*_{\mathcal{L}})$ is an optimal solution to problem \ref{eq.perf_rl.langrangian} with bounded domain $\mathcal D = \{ d : 0 \le d(s, a) \le \frac{1}{1-\gamma}\}$ and $\mathcal H = \{ d : -h_{max} \le h(s, a) \le h_{max} \}$
    \item $\hat{d}$ is the projection of $d^*_{\mathcal{L}}$ on the set $\{d: C_d(s)\ \forall s \in \mathcal{S}\}$ of occupancy measures satisfying flow constraints, where $C_d(s)$ is defined in \ref{constraint}.
    \item $(\bar{d}$, $\bar{h})$ is the solution obtained by applying robust OFTRL on problem \ref{eq.perf_rl.langrangian} with bounded domain $\mathcal D$ and $\mathcal H$.
\end{itemize}
The first three quantities are theoretical, while the fourth one is the output of the proposed algorithm. We are interested in bounding the distance between the later and $d^*$, which is an optimal solution to the reguralised MDP. To achieve this we will first bound the distance between $\bar{d}$ and $d^*_{\mathcal{L}}$, then the distance between $d^*_{\mathcal{L}}$ and $\hat{d}$ and finally the distance between $\hat{d}$ and 
$d^*$.\\
\subsubsection{Bounding the Distance Between $\bar{d}$ and $d^*_{\mathcal{L}}$}
\noindent As we saw in Lemma \ref{lm.approx.bound}, after running enough iterations of robust OFTRL we have $\max\limits_{d \in \mathcal{D}}\mathcal{L}(d,\bar{h})- \min\limits_{h \in \mathcal{H}}\mathcal{L}(\bar{d},h)<  7C(\delta)$, with probability at least $1-\delta$, where  $C(\delta) := \sqrt{S}\left(E_1\left(\frac{m}{2T},{\epsilon},\frac{\delta}{T}\right) h_{max} +  \frac{\sqrt{A}E_2\left(\frac{m}{2T},{\epsilon},\frac{\delta}{T}\right)}{1-\gamma}\right)$

The objective $L(\cdot, h^*_{\mathcal{L}},M_n)$ is $\lambda$-strongly concave. Therefore, we have
\begin{align*}
    &\mathcal{L}\left(\bar{d}, h^*_{\mathcal{L}},M_n\right) - \mathcal{L}(d^*_{\mathcal{L}}, h^*_{\mathcal{L}},M_n) \leq  \nabla_d \mathcal{L}(d^*_{\mathcal{L}}, h^*_{\mathcal{L}},M_n)(\bar{d}-d^*_{\mathcal{L}}) -\frac{\lambda}{2}\left\|d^*_{\mathcal{L}}-\bar{d}\right\|^2 \Rightarrow \\
    &\mathcal{L}\left(\bar{d}, h^*_{\mathcal{L}},M_n\right) - \mathcal{L}(d^*_{\mathcal{L}}, h^*_{\mathcal{L}},M_n) \leq  -\frac{\lambda}{2}\left\|d^*_{\mathcal{L}}-\bar{d}\right\|^2 \Rightarrow \\
    &\left\|d^*_{\mathcal{L}}-\bar{d}\right\|\leq \sqrt{\frac{2\left(\mathcal{L}\left(d^*_{\mathcal{L}}, h^*_{\mathcal{L}},M_n\right) - \mathcal{L}(\bar{d}, h^*_{\mathcal{L}},M_n)\right)}{\lambda}}
\end{align*}
In the above steps the second inequality is obtain from the first using the first order optimality condition $\nabla_d \mathcal{L}(d^*_{\mathcal{L}}, h^*_{\mathcal{L}},M_n)( \bar{d}-d^*_{\mathcal{L}})\leq 0)$.
For the term inside the square root we have:
\begin{align*}
    &\mathcal{L}\left(d^*_{\mathcal{L}}, h^*_{\mathcal{L}},M_n\right) - \mathcal{L}(\bar{d}, h^*_{\mathcal{L}},M_n)=\mathcal{L}\left(d^*_{\mathcal{L}}, h^*_{\mathcal{L}},M_n\right)-\mathcal{L}(d^*_{\mathcal{L}}, \bar{h},M_n)+
    \mathcal{L}(d^*_{\mathcal{L}}, \bar{h},M_n) - \mathcal{L}(\bar{d}, h^*_{\mathcal{L}},M_n)\leq 7C(\delta)
\end{align*}
because $ \mathcal{L}\left(d^*_{\mathcal{L}},\bar{h} ,M_n \right)-\mathcal{L}\left(\bar{d},h^*_{\mathcal{L}}  ,M_n \right) \leq 7C(\delta)$ \\ 
and $\mathcal{L}\left(d^*_{\mathcal{L}}, h^*_{\mathcal{L}},M_n\right)-\mathcal{L}(d^*_{\mathcal{L}}, \bar{h},M_n)=\min_{h\in\mathcal{H}}\mathcal{L}\left(d^*_{\mathcal{L}}, h,M_n\right)-\mathcal{L}(d^*_{\mathcal{L}}, \bar{h},M_n)\leq0$\\
Thus we have  $\|\bar{d}-d^*_{\mathcal{L}}\|\leq \sqrt{\frac{14C(\delta)}{\lambda}}$, with probability at least $1-\delta$.
\\
\subsubsection{Bounding the Distance Between $d^*_{\mathcal{L}}$ and $\hat{d}$}
\noindent\\ 
First we prove the following Lemma, which bounds the constraint violation and the objective suboptimality of approximate solution $d^*_{\mathcal{L}}$.

\begin{lemma}\label{lemma:two}
    Let $\mathcal{H}=\{h: -h_{max} \leq h(s)\leq h_{max}\ \forall s \in \mathcal{S}\}$. Then, $d^*_{\mathcal{L}}$ satisfies :
    \begin{itemize}
     \item $|C_{d^*_{\mathcal{L}}}(s)|\leq \left(\frac{ \|r_n\|}{1-\gamma}\right)/h_{max}:= c \quad \forall s \in \mathcal{S}$ (approximate constraint satisfaction)
     \item $d^*_{\mathcal{L}}\cdot r_n - \frac{\lambda}{2}\|d^*_{\mathcal{L}}\|_2^2\geq d^*\cdot r_n- \frac{\lambda}{2}\|d^*\|_2^2$ (approximate objective optimality)
   \end{itemize}
    where $d^*$ is the optimal feasible occupancy measure (the solution of problem \ref{eq.perf_rl.langrangian}).
\end{lemma}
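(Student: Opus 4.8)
The plan is to exploit the saddle-point structure of $(d^*_{\mathcal{L}}, h^*_{\mathcal{L}})$ and reduce both claims to a single comparison of regularized-objective values. First I would carry out the inner minimization over $h$ in closed form. Writing $f(d) := \langle d, r_n\rangle - \frac{\lambda}{2}\|d\|_2^2$ and recalling $\mathcal{L}(d,h,M_n) = f(d) + \sum_s h(s)\,C_d(s)$, the coupling term is linear in $h$ and $\mathcal{H}$ is the box $\{h: -h_{max}\le h(s)\le h_{max}\}$, so the minimum is attained coordinatewise (at $h(s)=-h_{max}$ when $C_d(s)>0$ and $h(s)=h_{max}$ when $C_d(s)<0$). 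Hence
\[
g(d) := \min_{h \in \mathcal{H}} \mathcal{L}(d,h,M_n) = f(d) - h_{max}\sum_s |C_d(s)|,
\]
and, by definition of the saddle point, $d^*_{\mathcal{L}}$ is a maximizer of $g$ over $\mathcal{D}$.

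Next I would fix a clean comparison point. Since $d^*$ is a valid occupancy measure it satisfies $d^*(s,a)\ge 0$ and $\sum_{s,a} d^*(s,a) = \tfrac{1}{1-\gamma}$, so each coordinate is at most $\tfrac{1}{1-\gamma}$; thus $d^* \in \mathcal{D}$ and $C_{d^*}(s)=0$ for all $s$, giving $g(d^*)=f(d^*)$. Combining the optimality of $d^*_{\mathcal{L}}$ with the pointwise inequality $g\le f$ yields the chain
\[
f(d^*_{\mathcal{L}}) \ge g(d^*_{\mathcal{L}}) \ge g(d^*) = f(d^*),
\]
which is exactly the approximate objective-optimality claim $d^*_{\mathcal{L}}\cdot r_n - \tfrac{\lambda}{2}\|d^*_{\mathcal{L}}\|_2^2 \ge d^*\cdot r_n - \tfrac{\lambda}{2}\|d^*\|_2^2$. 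Rearranging the \emph{same} chain separates out the constraint-violation term:
\[
h_{max}\sum_s |C_{d^*_{\mathcal{L}}}(s)| \;=\; f(d^*_{\mathcal{L}}) - g(d^*_{\mathcal{L}}) \;\le\; f(d^*_{\mathcal{L}}) - f(d^*).
\]
Because every individual term obeys $|C_{d^*_{\mathcal{L}}}(s)| \le \sum_{s'} |C_{d^*_{\mathcal{L}}}(s')|$, the approximate-feasibility claim follows as soon as the right-hand side is bounded by $\|r_n\|/(1-\gamma)$.

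The remaining and most delicate step is therefore to bound the objective gap $f(d^*_{\mathcal{L}}) - f(d^*)$ by $\|r_n\|/(1-\gamma)$. Here I would first use the trivial bound $f(d^*_{\mathcal{L}}) \le \max_{d \in \mathcal{D}} f(d)$ (since $d^*_{\mathcal{L}}\in\mathcal{D}$), reducing the task to controlling the ``price of feasibility'' $\max_{d\in\mathcal{D}} f(d) - f(d^*)$. Both the objective $f$ and the box $\mathcal{D}=\prod_{s,a}[0,\tfrac{1}{1-\gamma}]$ are separable across coordinates, so $\max_{d\in\mathcal{D}} f(d)$ splits into one-dimensional maximizations of $u\,r_n(s,a)-\tfrac{\lambda}{2}u^2$ over $[0,\tfrac{1}{1-\gamma}]$; using $u\le\tfrac{1}{1-\gamma}$ and discarding the nonnegative regularizer bounds this by a multiple of $\|r_n\|/(1-\gamma)$, while $f(d^*)$ is lower-bounded by evaluating $f$ at a fixed feasible occupancy measure (e.g. the exploratory/uniform one). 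I expect this constant-tracking to be the main obstacle: matching the stated factor $\|r_n\|/(1-\gamma)$ cleanly requires leaning on the $\ell_\infty$/box structure of $\mathcal{D}$ rather than its crude $\ell_2$ diameter, and dropping the $\tfrac{\lambda}{2}\|\cdot\|_2^2$ terms at the right places, so that no spurious $\sqrt{SA}$ or $1/\lambda$ factors survive.
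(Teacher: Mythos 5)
Your treatment of the second bullet is correct and is essentially the paper's own argument in different clothing: the paper likewise evaluates the saddle-point inequality $\mathcal{L}(d,h^*_{\mathcal{L}},M_n)\le\min_{h\in\mathcal{H}}\mathcal{L}(d^*_{\mathcal{L}},h,M_n)$ at the feasible point $d=d^*$ (where the coupling term vanishes) and uses $\min_{h\in\mathcal{H}}\sum_s h(s)C_{d^*_{\mathcal{L}}}(s)\le 0$; your chain $f(d^*_{\mathcal{L}})\ge g(d^*_{\mathcal{L}})\ge g(d^*)=f(d^*)$ is the same computation written through the explicit inner minimum $g(d)=f(d)-h_{max}\sum_s|C_d(s)|$, and your verification that $d^*\in\mathcal{D}$ is fine.

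The gap is in the first bullet, and it sits exactly where you flagged it. Your reduction leaves you needing $f(d^*_{\mathcal{L}})-f(d^*)\le\|r_n\|/(1-\gamma)$, and the route you sketch does not deliver this constant. First, $\max_{d\in\mathcal{D}}f(d)$ computed coordinatewise over the box $\prod_{s,a}[0,\tfrac{1}{1-\gamma}]$ is of order $\|[r_n]^+\|_1/(1-\gamma)$, which can exceed $\|r_n\|_2/(1-\gamma)$ by a factor up to $\sqrt{SA}$ (the paper's $\|\cdot\|$ is the $\ell_2$ norm), so a "spurious $\sqrt{SA}$" does survive. Second, and more structurally, your bound carries the term $-f(d^*)$, which you propose to control by evaluating $f$ at a fixed feasible occupancy measure; that contributes an additional additive error of order $\max_{s,a}|r_n(s,a)|/(1-\gamma)+\lambda/(1-\gamma)^2$ that does not cancel and has no counterpart in the target bound. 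The paper sidesteps both issues by arguing differently: it tests the inner minimization only against dual variables supported on a \emph{single} state $s'$ (so only the individual violation $|C_{d^*_{\mathcal{L}}}(s')|$ appears, never the full sum $\sum_s|C_{d^*_{\mathcal{L}}}(s)|$), and it lower-bounds the saddle value by $0$ via $\max_{d\in\mathcal{D}}\mathcal{L}(d,h^*_{\mathcal{L}},M_n)\ge 0$ rather than by $f(d^*)$, so the only quantitative input is the single upper bound $f(d^*_{\mathcal{L}})\le\|r_n\|/(1-\gamma)$, from which the contradiction with the zero duality gap follows whenever some $|C_{d^*_{\mathcal{L}}}(s')|>c$. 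To complete your version you would need to replace the "price of feasibility" step with these two ingredients (or equivalents); as written, the stated constant in the approximate-constraint-satisfaction claim is not established.
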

\textit{Proof:} 
\begin{itemize}
\item \textit{Approximate constraint satisfaction:}
Since $(d^*_{\mathcal{L}},h^*_{\mathcal{L}})$ is an optimal solution to problem \ref{eq.perf_rl.langrangian} with bounded domain $\mathcal D$ and $\mathcal H$,
we have $\max\limits_{d \in \mathcal{D}}\mathcal{L}(d,h^*_{\mathcal{L}},M_n)- \min\limits_{h \in \mathcal{H}}\mathcal{L}(d^*_{\mathcal{L}},h,M_n)=0$. If for some $s' \in \mathcal{S}\quad |C_{d^*_{\mathcal{L}}}(s')|>c$, then $\min\limits_{h \in \mathcal{H}}\mathcal{L}(d^*_{\mathcal{L}},h,M_n)< 0$. The zero value is achieved by setting $h(s')=h_{max} \cdot sign(C_{d^*_{\mathcal{L}}}(s'))$ and $h(s)=0$ for $s\neq s'$. Then,  $\sum\limits_{s,a} d^*_{\mathcal{L}}(s,a) r_n(s,a)-\frac{\lambda}{2}d^*_{\mathcal{L}}(s,a)^2\leq \frac{ \|r_n\|}{1-\gamma}$ and $\sum\limits_{s}h(s)C_{d^*_{\mathcal{L}}}(s)< -h_{max}\cdot c$, so $\mathcal{L}(d^*_{\mathcal{L}},h)<-h_{max}\cdot c+\frac{ \|r_n\|}{1-\gamma}=0$. \\
Moreover, $\max\limits_{d \in \mathcal{D}}\mathcal{L}(d,h^*_{\mathcal{L}},M_n)\geq 0$, which can be achieved by selecting $d$ such that $C_d(s)=0\quad \forall s \in \mathcal{S}$ that is by selecting some valid occupancy measure.\\ \\
Combining the above we get
$\max\limits_{d \in \mathcal{D}}\mathcal{L}(d,h^*_{\mathcal{L}},M_n)- \min\limits_{h \in \mathcal{H}}\mathcal{L}(d^*_{\mathcal{L}},h,M_n)> 0$ which contradicts the optimality of $(d^*_{\mathcal{L}},h^*_{\mathcal{L}})$. \\ \\ 
\item \textit{Approximate objective optimality:} Since $(d^*_{\mathcal{L}},h^*_{\mathcal{L}})$ is an optimal solution to problem \ref{eq.perf_rl.langrangian} with bounded domain $\mathcal D$ and $\mathcal H$,
we have: \[ \mathcal{L}(d,h^*_{\mathcal{L}},M_n)- \min\limits_{h \in \mathcal{H}}\mathcal{L}(d^*_{\mathcal{L}},h,M_n)\leq 0 \quad \forall d \in \mathcal{D}\]
We set $d=d^*$, where $d^*$ is the optimal feasible occupancy measure and we yield:
\[ \sum\limits_{s,a} d^*(s,a) r_n(s,a)-\frac{\lambda}{2}d^*(s,a)^2+\sum\limits_{s}h^*_{\mathcal{L}}(s)C_{d^*}(s)- \min\limits_{h \in \mathcal{H}}\mathcal{L}(d^*_{\mathcal{L}},h,M_n)\leq 0\]
Since $d^*$ is a valid occupancy measure we have $C_{d^*}(s)=0\ \forall s \in \mathcal{S}$. Thus we obtain:
\begin{align*}
    & d^*\cdot r_n - \frac{\lambda}{2}\|d^*\|_2^2 \leq \min\limits_{h \in \mathcal{H}}\mathcal{L}(d^*_{\mathcal{L}},h,M_n) \Leftrightarrow \\
 & d^*\cdot r_n  - \frac{\lambda}{2}\|d^*\|_2^2 \leq \min\limits_{h \in \mathcal{H}}\left[\sum\limits_{s,a} d^*_{\mathcal{L}}(s,a) r_n(s,a)-\frac{\lambda}{2}d^*_{\mathcal{L}}(s,a)^2+\sum\limits_{s}h(s)C_{d^*_{\mathcal{L}}}(s)\right] \Leftrightarrow \\
 & d^*\cdot r_n - \frac{\lambda}{2}\|d^*\|_2^2 -\min\limits_{h \in \mathcal{H}}\left[\sum\limits_{s}h(s)C_{d^*_{\mathcal{L}}}(s)\right]\leq d^*_{\mathcal{L}}\cdot r_n - \frac{\lambda}{2}\|d^*_{\mathcal{L}}\|_2^2\Rightarrow \\
 & d^*\cdot r_n- \frac{\lambda}{2}\|d^*\|_2^2 \leq d^*_{\mathcal{L}}\cdot r_n - \frac{\lambda}{2}\|d^*_{\mathcal{L}}\|_2^2
\end{align*}
The last step follows from the fact that:
\[\min\limits_{h \in \mathcal{H}}\left[\sum\limits_{s}h(s)C_{d^*_{\mathcal{L}}}(s)\right]\leq\left[\sum\limits_{s}h(s)C_{d^*_{\mathcal{L}}}(s)\right]_{h=0}=0  \Rightarrow -\min\limits_{h \in \mathcal{H}}\left[\sum\limits_{s}h(s)C_{d^*_{\mathcal{L}}}(s)\right] \geq 0\] 
\end{itemize} 
Next we write a lemma that appears in \cite{garber2019logarithmic} and is a rephrased form of a result originally proved in \cite{hoffman2003approximate}.\\
\begin{lemma}[ Hoffman’s bound]\label{lemma:three}
Let $\mathcal{P}:=\left\{x \in R^d \mid A x \leq \mathbf{b}\right\}$ be a compact and convex polytope and let $C \in R^{m \times d}$. Given a vector $c \in R^m$, define the set $\mathcal{P}(C, c):=\{x \in \mathcal{P} \mid C x=c\}$. If $\mathcal{P}(C, c) \neq \emptyset$, then there exists $\sigma>0$ such that $\forall x \in$ $\mathcal{P}: \operatorname{dist}(x, \mathcal{P}(C, c))^2 \leq \sigma^{-1}\|C x-c\|^2$. Moreover, we have the bound $\sigma \geq \min _{\mathbf{Q} \in \mathcal{M}} \lambda_{\min }(\mathbf{Q Q}^{\top})$, where $\mathcal{M}$ is the set of all $d \times d$ matrices whose rows are linearly independent rows of the matrix $\mathbf{M}:=\left(A^{\top}, C^{\top}\right)^{\top}$, and $\lambda_{\min }(\cdot)$ denotes the smallest non-zero eigenvalue.\\
\end{lemma}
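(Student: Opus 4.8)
The plan is to prove Hoffman's bound through the first-order optimality (KKT) conditions of the Euclidean projection onto $\mathcal{P}(C,c)$, which is the standard route that yields the explicit eigenvalue constant. Fix any $x \in \mathcal{P}$; if $Cx = c$ then $\operatorname{dist}(x,\mathcal{P}(C,c)) = 0$ and the claim is trivial, so assume $Cx \neq c$. Since $\mathcal{P}(C,c)$ is nonempty, closed and convex, the projection $\bar{x} := \Pi_{\mathcal{P}(C,c)}(x)$ exists, is unique, and minimizes $\tfrac{1}{2}\|z - x\|^2$ subject to $Az \le b$ and $Cz = c$. First I would write the KKT conditions for this quadratic program: there exist multipliers $\mu \ge 0$ and $\nu$ with $x - \bar{x} = A^{\top}\mu + C^{\top}\nu$ together with complementary slackness $\mu_i(A_i\bar{x} - b_i) = 0$ for every inequality row $i$.

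Stacking $M = (A^{\top}, C^{\top})^{\top}$ and $\lambda = (\mu^{\top}, \nu^{\top})^{\top}$, this reads $x - \bar{x} = M^{\top}\lambda$. The next step is a basic-solution (Carath\'eodory-type) reduction: among all admissible multiplier vectors representing the fixed vector $x - \bar{x}$ while preserving $\mu \ge 0$, I would select one whose support $S$ indexes a \emph{linearly independent} set of rows of $M$, so that $x - \bar{x} = M_S^{\top}\lambda_S$ with $M_S$ of full row rank. Then I would expand $\|x - \bar{x}\|^2 = \lambda_S^{\top}M_S(x - \bar{x})$ and examine it row by row. For each equality row, $C_i\bar{x} = c_i$ produces the contribution $\nu_i(C_ix - c_i)$; for each active inequality row, $A_i\bar{x} = b_i$ combined with $A_ix \le b_i$ (as $x \in \mathcal{P}$) and $\mu_i \ge 0$ makes the contribution $\mu_i(A_ix - b_i) \le 0$. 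Dropping the nonpositive inequality terms and applying Cauchy--Schwarz gives $\|x - \bar{x}\|^2 \le \nu^{\top}(Cx - c) \le \|\lambda_S\|\,\|Cx - c\|$.

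It remains to control $\|\lambda_S\|$. Because the rows of $M_S$ are linearly independent, $M_SM_S^{\top}$ is invertible and $\lambda_S = (M_SM_S^{\top})^{-1}M_S(x - \bar{x})$, whence $\|\lambda_S\| \le \lambda_{\min}(M_SM_S^{\top})^{-1/2}\,\|x - \bar{x}\|$. Substituting and cancelling one factor of $\|x - \bar{x}\|$ yields $\operatorname{dist}(x,\mathcal{P}(C,c))^2 = \|x - \bar{x}\|^2 \le \lambda_{\min}(M_SM_S^{\top})^{-1}\,\|Cx - c\|^2$. Since the compact polytope $\mathcal{P}$ forces $A$, hence $M$, to have full column rank $d$, every such independent set $S$ extends to a $d \times d$ basis $Q \in \mathcal{M}$ drawn from the rows of $M$; by Cauchy interlacing, $M_SM_S^{\top}$ is a principal submatrix of $QQ^{\top}$, so $\lambda_{\min}(QQ^{\top}) \le \lambda_{\min}(M_SM_S^{\top})$. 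Taking $\sigma := \min_{Q \in \mathcal{M}}\lambda_{\min}(QQ^{\top})$ over this finite collection therefore gives a constant depending only on $A$ and $C$ satisfying $\operatorname{dist}(x,\mathcal{P}(C,c))^2 \le \sigma^{-1}\|Cx - c\|^2$ uniformly over $x \in \mathcal{P}$, which establishes both the bound and the stated lower estimate on $\sigma$.

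I expect the main obstacle to be the second paragraph: the passage from the raw KKT multipliers to a representation supported on a linearly independent row set, and then matching the resulting constant to the $d \times d$ minimum in the statement. The first part requires a careful basic-feasible-solution argument that preserves nonnegativity on the inequality block, while the second rests on the interlacing observation above, which is precisely what guarantees that completing a possibly lower-dimensional independent set to a full basis does not invalidate the uniform constant; the uniformity over $x$ then follows because there are only finitely many candidate active-set configurations, each contributing a strictly positive eigenvalue.
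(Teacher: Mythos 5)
The paper does not prove this lemma at all: it is imported verbatim as a known result, cited to \cite{garber2019logarithmic} as a rephrasing of Hoffman's original theorem, and used as a black box in the proof of Theorem 4. Your proposal therefore cannot be compared to a proof in the paper; what it does is reconstruct, correctly, the standard self-contained derivation (essentially the argument given in the appendix of Garber's paper and in classical treatments of Hoffman bounds). The chain of steps is sound: KKT for the projection onto $\mathcal{P}(C,c)$ gives $x-\bar{x}=M^{\top}\lambda$ with $\mu\ge 0$ supported on active rows; the minimal-support representation has linearly independent rows (your flagged worry here is resolved by the usual perturbation argument --- if the support rows were dependent, moving along a null direction of $M_S^{\top}$ either drives a positive $\mu_j$ to zero at a finite $t$ of the correct sign, or, if the null direction lives entirely in the $\nu$-block, zeroes a free coordinate directly, contradicting minimality); complementary slackness and $Ax\le b$ kill the inequality contributions; $\|\lambda_S\|^2\lambda_{\min}(M_SM_S^{\top})\le\|M_S^{\top}\lambda_S\|^2$ gives the multiplier bound; and compactness of $\mathcal{P}$ forces $\operatorname{rank}(M)=d$, so $M_S$ extends to some $Q\in\mathcal{M}$ and Cauchy interlacing yields $\lambda_{\min}(QQ^{\top})\le\lambda_{\min}(M_SM_S^{\top})$. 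What your route buys is that the lemma, and hence the constant $\sigma_n$ appearing in $\alpha(M_n,\delta)$, no longer rests on an external citation; the cost is about a page of argument for a result the authors deliberately treated as off-the-shelf.
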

\noindent In our setting, let $\mathcal{P}=\{d \in \mathcal{D} : C_d(s)=0 \ \forall s \in \mathcal{S} \}$ be the set of valid occupancy measures and $\sigma_n$ the Hoffman constant of $\mathcal{P}$.
Due to Lemma \ref{lemma:three} we obtain for $d^*_{\mathcal{L}}$: $\|d^*_{\mathcal{L}}-\hat{d}\|^2=\operatorname{dist}(d^*_{\mathcal{L}}, \mathcal{P})^2 \leq \sigma_n^{-1}\|C_{d^*_{\mathcal{L}}}\|^2\leq \sigma_n^{-1} S \left(\frac{ \|r_n\|}{(1-\gamma)h_{max}}\right)^2$, because from Lemma \ref{lemma:two} we have that $|C_{d^*_{\mathcal{L}}}(s)|\leq \frac{ \|r_n\|}{(1-\gamma)h_{max}} \quad \forall s \in \mathcal{S}$. \\

\subsubsection{Bounding the Distance Between $\hat{d}$ and $d^*$}
\noindent\\ 
From Lemma \ref{lemma:two} we have: 
\begin{align*}
    & d^*\cdot r_n- \frac{\lambda}{2}\|d^*\|^2 \leq d^*_{\mathcal{L}}\cdot r_n - \frac{\lambda}{2}\|d^*_{\mathcal{L}}\|^2\Leftrightarrow \\
    & d^*\cdot r_n - \frac{\lambda}{2}\|d^*\|^2 \leq \hat{d}\cdot r_n +(d^*_{\mathcal{L}}-\hat{d})\cdot r_n- \frac{\lambda}{2}\|\hat{d}\|^2- \frac{\lambda}{2}\|d^*_{\mathcal{L}}-\hat{d}\|^2-\lambda (d^*_{\mathcal{L}}-\hat{d})\cdot\hat{d} \Rightarrow \\
    & d^*\cdot r_n - \frac{\lambda}{2}\|d^*\|^2-\|d^*_{\mathcal{L}}-\hat{d}\|\cdot (\|r_n\|+\lambda\|\hat{d}\|) \leq \hat{d}\cdot r_n - \frac{\lambda}{2}\|\hat{d}\|^2\Rightarrow \\
    & d^*\cdot r_n- \frac{\lambda}{2}\|d^*\|^2 -\frac{ \|r_n\|\sqrt{S}\sigma_n^{-1/2}}{(1-\gamma)h_{max}} \left(\|r_n\|+\frac{\lambda}{1-\gamma}\right) \leq \hat{d}\cdot r_n- \frac{\lambda}{2}\|\hat{d}\|^2
\end{align*}
Moreover, since $d^*$ is the optimal among all valid occupancy measures and $\hat{d}$ is a valid occupancy measure, we have: 
\[\hat{d}\cdot r_n- \frac{\lambda}{2}\|\hat{d}\|^2\leq d^*\cdot r_n- \frac{\lambda}{2}\|d^*\|^2\]
Thus we have:
\begin{align}
    &\left| \left(d^*\cdot r_n- \frac{\lambda}{2}\|d^*\|^2\right) -\left(\hat{d}\cdot r_n- \frac{\lambda}{2}\|\hat{d}\|^2\right)   \right| \leq \frac{ \|r_n\|\sqrt{S}\sigma_n^{-1/2}}{(1-\gamma)h_{max}} \left(\|r_n\|+\frac{\lambda}{1-\gamma}\right) 
    \label{fdiff}
\end{align}
Function $f(d)=d\cdot r_n- \frac{\lambda}{2}\|d\|^2$ is strongly concave thus we have:
\[f(\hat{d}) - f(d^*) \leq  \nabla f(d^*)(\hat{d}-d^*) -\frac{\lambda}{2}\left\|d^*-\hat{d}\right\|^2 
\]
From the first order optimality condition of $d^*$ we have $\nabla f(d^*)(\hat{d}-d^*)\leq 0$ and thus we obtain:
\begin{align*}
&f(\hat{d}) - f(d^*) \leq  -\frac{\lambda}{2}\left\|d^*-\hat{d}\right\|^2 \Leftrightarrow \\ 
&\frac{\lambda}{2}\left\|d^*-\hat{d}\right\|^2 \leq f(d^*)- f(\hat{d}) \Leftrightarrow \\ 
&\left\|d^*-\hat{d}\right\| \leq \sqrt{\frac{2(f(d^*)- f(\hat{d}))}{\lambda}} \Rightarrow \\
&\left\|d^*-\hat{d}\right\| \leq \sqrt{\frac{ 2\|r_n\|\sqrt{S}\sigma_n^{-1/2}}{(1-\gamma)h_{max}} \left(\frac{\|r_n\|}{\lambda}+\frac{1}{1-\gamma}\right)} 
\end{align*}
In the last step we used inequality \ref{fdiff}.\\ \\
At this point we are ready to bound the distance between the calculated approximate occupancy measure $\bar{d}$ and the true optimal $d^*$. From the triangle inequality we have:
\begin{align*}
    \left\|d^*-\bar{d}\right\| &\leq \left\|\bar{d}-d^*_{\mathcal{L}}\right\|+\left\|d^*_{\mathcal{L}}-\hat{d}\right\|+\left\|\hat{d}-d^*\right\| \\
    & \leq \sqrt{\frac{14C(\delta)}{\lambda}}+ \frac{ \|r_n\|\sqrt{S}\sigma_n^{-1/2}}{(1-\gamma)h_{max}}+\sqrt{\frac{ 2\|r_n\|\sqrt{S}\sigma_n^{-1/2}}{(1-\gamma)h_{max}} \left(\frac{\|r_n\|}{\lambda}+\frac{1}{1-\gamma}\right)} := \alpha
\end{align*}
This holds with probability at least $1-\delta$, since the bound for $\left\|\bar{d}-d^*_{\mathcal{L}}\right\|$ holds with probability at least $1-\delta$ and the other bounds are deterministic.

\subsection{Proof of Theorem \ref{thr:perf_rl}}
\textbf{Statement.} Under Assumption \ref{assume.sensitivity} and Assumption \ref{assume.hoffman}, there exist $\lambda$ and $N$ such that the output of 
Algorithm \ref{alg:rob_rr_prl}
satisfies $\tilde{d}_N \in \{ d \in \mathcal D :\|d-d_S\|_2 \leq \tilde{C}:=4\cdot \bar{C}(\delta/N) \}$ with probability at least $1-\delta$, where $d_S$ is a performatively stable policy and $\bar{C}(\delta):=\bar \alpha(\delta) + c\sqrt{SA}$.\\

\begin{proof}
We set the error probability tolerance of the robust gradient estimators to $\frac{\delta}{NT}$. Then, from Theorem \ref{thm.pl_robust_gda} we have $\|d^*_n-\bar{d}\|_2 \leq \alpha(M_n,\delta/N)$, with probability at least $1-\delta/N$ for the $n$-th  iteration of repeated retraining. We apply a union bound over all $N$ iterations and we conclude that $\|d^*_n-\bar{d}\|_2 \leq \alpha(M_n,\delta/N)$ in all $N$ iterations with probability at least $1-\delta$, where $\bar{d}$ denotes the output of robust OFTRL at the $n$-th  iteration of repeated retraining. Due to Assumption \ref{assume.hoffman} we have $\alpha(M_n,\delta/N)\leq \bar \alpha(\delta/N)$.\\\\
After adding positive constant $c$ to $\bar d$ we get a solution $\tilde{d}_n$ that satisfies $\|\tilde{d}_n - d^*_n \|\leq \|\tilde{d} - d^*_n \| + \|\tilde{d}_n - \bar d\| \leq \bar \alpha(\delta/N) + c\sqrt{SA}$.\\\\
We consider the approximate policy optimization operator $PO(\cdot)$. Let $PO^C(\bar{d}_n)$ denote the estimation of the approximate occupancy measure in the environment $M_n$ induced by $\bar{d}_n$. This procedure includes the calculation of $\bar{d}_{n+1}$ by robust OFTRL, and the addition of positive constant $c$ to $\bar{d}_{n+1}$ (see Algorithm \ref{alg:rob_rr_prl}). Moreover, let $PO(\bar{d}_n)$ denote the estimation of the optimal occupancy measure in the environment $M_n$ induced by $\bar{d}_n$ (the solution to problem \ref{eq.perf_rl.langrangian}). For a performatively stable policy $d_S$ it holds that $PO(d_S)=d_S$. Moreover, $PO(\cdot)$ is a contraction mapping for some contraction constant $\beta$ (see the proof of Theorem 1 in \citep{mandal2023performative}).

\noindent At $n$-th timestep of repeated retraining we have:
\begin{align*}
    &\|PO^C(\bar{d}_n)-d_S\|\leq \|PO^C(\bar{d}_n)-PO(\bar{d}_n)\|+ \|PO(\bar{d}_n)-d_S\| \Rightarrow \\
    &\|PO^C(\bar{d}_n)-d_S\|\leq \bar{C}(\delta/N) +\beta\|\bar{d}_n-d_S\|
\end{align*}
The update rule of the repeated optimisation algorithm is $\bar{d}_{n+1}=PO^C(\bar{d}_n)$. Thus we have
\[\|\bar{d}_{n+1}-d_S\|\leq \bar{C}(\delta/N) +\beta\|\bar{d}_n-d_S\|\]
For $\|\bar{d}_n-d_S\|\geq \frac{\bar{C}(\delta/N)}{\beta}$ we have
\[\|\bar{d}_{n+1}-d_S\|\leq \bar{C}(\delta/N) +\beta\|\bar{d}_n-d_S\|\leq 2\beta\|\bar{d}_n-d_S\|\]
Using a large enough regularisation constant $\lambda$ we can make $\beta$ less than $\frac{1}{2}$, thus $PO^C(\cdot)$ is a contraction mapping for $\|\bar{d}_n-d_S\|\geq \frac{\bar{C}(\delta/N)}{\beta}$.\\
For $\|\bar{d}_n-d_S\|< \frac{\bar{C}(\delta/N)}{\beta}$ we have
$\|\bar{d}_{n+1}-d_S\|\leq 2\bar{C}(\delta/N)<\frac{\bar{C}(\delta/N)}{\beta}$. Thus, after some iterations (suppose this is less than $N$, without loss of generality, since we did not assume anything for $N$) all iterates $\bar{d}_n$ will satisfy $\|\bar{d}_n-d_S\|\leq \frac{\bar{C}(\delta/N)}{\beta}$
, which means that after $N$ iterations repeated application of $PO^C(\cdot)$ gives an approximate performatively stable occupancy measure $\hat{d}_{S}$, such that $\|\hat{d}_S-d_S\|< \frac{\bar{C}(\delta/N)}{\beta}$. For $\beta=\frac{1}{4}$ we obtain $\|\hat{d}_S-d_S\|< 4\bar{C}(\delta/N)$.
\end{proof}
\subsection{Return Suboptimality of the Approximately Stable Policy}
We are interested in comparing the Return of the approximately stable policy parameterised by $\hat{d}_{S}$ in the MDP induced by $\hat{d}_{S}$ with the Return of performatively stable occupancy measure ${d}_{S}$. To do this we will first bound the distance between the approximately stable policy and the performatively stable policy. Then we will bound the distance between the occupancy measures induced by these policies in their corresponding MDPs. Finally, we will use this bound to bound the difference of Returns, since the Return in an MDP is equal to the inner product between the occupancy measure and the reward function.\\\\
\noindent Since $\hat{d}_{S}$ might violate some equality constraints it is not necessarily a valid occupancy measure. To  overcome this issue, we first turn $\hat{d}_{S}$ into a policy and then deploy this policy in the environment to get a valid occupancy measure. In particular, we deploy policy $\hat{\pi}_{S}(a|s)=\frac{\hat{d}_{S}(s,a)}{\sum_{a' \in \mathcal{A}}\hat{d}_{S}(s,a')}$. The performatively stable policy $\pi_{S}$ can be written as $\pi_{S}(a|s)=\frac{d_{S}(s,a)}{\sum_{a' \in \mathcal{A}}d_{S}(s,a')}$.  We will try to bound the distance between  $\pi_{S}$ and $\hat{\pi}_{S}$.


\begin{align*}
    \|\pi_{S}-\hat{\pi}_{S}\|_2^2&=\sum_{s \in \mathcal{S}}\sum_{a \in \mathcal{A}}(\pi_{S}(a|s)-\hat{\pi}_{S}(a|s))^2 \\ 
    &=\sum_{s \in \mathcal{S}}\sum_{a \in \mathcal{A}}\left(\frac{d_{S}(s,a)}{\sum_{a' \in \mathcal{A}}d_{S}(s,a')}-\frac{\hat{d}_{S}(s,a)}{\sum_{a' \in \mathcal{A}}\hat{d}_{S}(s,a')}\right)^2 \\ 
    &=\sum_{s \in \mathcal{S}}\sum_{a \in \mathcal{A}}\left(\frac{d_{S}(s,a)}{\sum_{a' \in \mathcal{A}}d_{S}(s,a')}-\frac{\hat{d}_{S}(s,a)}{\sum_{a' \in \mathcal{A}}d_{S}(s,a')}+\frac{\hat{d}_{S}(s,a)}{\sum_{a' \in \mathcal{A}}d_{S}(s,a')}-\frac{\hat{d}_{S}(s,a)}{\sum_{a' \in \mathcal{A}}\hat{d}_{S}(s,a')}\right)^2 \\ 
    &\leq2\sum_{s \in \mathcal{S}}\sum_{a \in \mathcal{A}}\left(\frac{d_{S}(s,a)}{\sum_{a' \in \mathcal{A}}d_{S}(s,a')}-\frac{\hat{d}_{S}(s,a)}{\sum_{a' \in \mathcal{A}}d_{S}(s,a')}\right)^2+2\sum_{s \in \mathcal{S}}\sum_{a \in \mathcal{A}}\left(\frac{\hat{d}_{S}(s,a)}{\sum_{a' \in \mathcal{A}}d_{S}(s,a')}-\frac{\hat{d}_{S}(s,a)}{\sum_{a' \in \mathcal{A}}\hat{d}_{S}(s,a')}\right)^2 \\ 
    &=\underbrace{2\sum_{s \in \mathcal{S}}\frac{1}{\left(\sum_{a' \in \mathcal{A}}d_{S}(s,a')\right)^2}\sum_{a \in \mathcal{A}}\left(d_{S}(s,a)-\hat{d}_{S}(s,a)\right)^2}_{A_1}+ \underbrace{2\sum_{s \in \mathcal{S}}\sum_{a \in \mathcal{A}}\left(\hat{d}_{S}(s,a)\right)^2\left(\frac{1}{\sum_{a' \in \mathcal{A}}d_{S}(s,a')}-\frac{1}{\sum_{a' \in \mathcal{A}}\hat{d}_{S}(s,a')}\right)^2}_{A_2} \\ 
\end{align*}
\begin{align*}
    A_2&=2\sum_{s \in \mathcal{S}}\sum_{a \in \mathcal{A}}\left(\hat{d}_{S}(s,a)\right)^2\left(\frac{1}{\sum_{a' \in \mathcal{A}}d_{S}(s,a')}-\frac{1}{\sum_{a' \in \mathcal{A}}\hat{d}_{S}(s,a')}\right)^2\\
    &=2\sum_{s \in \mathcal{S}}\sum_{a \in \mathcal{A}}\left(\frac{\hat{d}_{S}(s,a)}{\sum_{a' \in \mathcal{A}}d_{S}(s,a')\sum_{a' \in \mathcal{A}}\hat{d}_{S}(s,a')}\right)^2\left(\sum_{a' \in \mathcal{A}}\hat{d}_{S}(s,a')-d_{S}(s,a')\right)^2 \\
    &\leq 2\sum_{s \in \mathcal{S}}\sum_{a \in \mathcal{A}}\left(\frac{\hat{d}_{S}(s,a)}{\sum_{a' \in \mathcal{A}}d_{S}(s,a')\sum_{a' \in \mathcal{A}}\hat{d}_{S}(s,a')}\right)^2A\sum_{a' \in \mathcal{A}}\left(\hat{d}_{S}(s,a')-d_{S}(s,a')\right)^2 \\
     &= 2\sum_{s \in \mathcal{S}}\left(\frac{1}{\sum_{a' \in \mathcal{A}}d_{S}(s,a')\sum_{a' \in \mathcal{A}}\hat{d}_{S}(s,a')}\right)^2\sum_{a \in \mathcal{A}}\left(\hat{d}_{S}(s,a)\right)^2A\sum_{a' \in \mathcal{A}}\left(\hat{d}_{S}(s,a')-d_{S}(s,a')\right)^2 \\
     &\leq 2\sum_{s \in \mathcal{S}}\left(\frac{1}{\sum_{a' \in \mathcal{A}}d_{S}(s,a')\sum_{a' \in \mathcal{A}}\hat{d}_{S}(s,a')}\right)^2\left(\sum_{a \in \mathcal{A}}\hat{d}_{S}(s,a)\right)^2A\sum_{a' \in \mathcal{A}}\left(\hat{d}_{S}(s,a')-d_{S}(s,a')\right)^2 \\
     &= 2\sum_{s \in \mathcal{S}}\left(\frac{1}{\sum_{a' \in \mathcal{A}}d_{S}(s,a')}\right)^2A\sum_{a' \in \mathcal{A}}\left(\hat{d}_{S}(s,a')-d_{S}(s,a')\right)^2 \\
     &= A\cdot A_1 \\
\end{align*}
We make the assumption that for each state $s$ the initial state distribution $\rho$ has measure $\rho(s)\geq p_1$. Then, after deploying $\pi_{S}$ for infinite time, the probability of occupation of each state $s$ is at least
$(1-\gamma)p_1$, because the probability of restarting at $s$ is at least $(1-\gamma)p_1$. Occupancy measure $d_{S}$ has sum $1/(1-\gamma)$, so it needs to be multiplied with $1-\gamma$ to become an occupancy distribution over state-actions. $(1-\gamma)\sum_{a' \in \mathcal{A}}d_{S}(s,a')$ is the probability of occupation of state $s$ after deploying $\pi_{S}$ for infinite time and as we saw right before it is at least $(1-\gamma)p_1$. Thus we conclude  that $\sum_{a' \in \mathcal{A}}d_{S}(s,a')\geq p_1$ for all $s\in\mathcal{S}$. Using this fact we can bound term $A_1$ as follows: \[A_1=2\sum_{s \in \mathcal{S}}\frac{1}{\left(\sum_{a' \in \mathcal{A}}d_{S}(s,a')\right)^2}\sum_{a \in \mathcal{A}}\left(d_{S}(s,a)-\hat{d}_{S}(s,a)\right)^2\leq 2\sum_{s \in \mathcal{S}}\frac{1}{{p_1}^2}\sum_{a \in \mathcal{A}}\left(d_{S}(s,a)-\hat{d}_{S}(s,a)\right)^2=\frac{2}{{p_1}^2}\|d_{S}-\hat{d}_{S}\|_2 ^2\]
and the policy distance as:
\[\|\pi_{S}-\hat{\pi}_{S}\|_2^2\leq (A+1)A_1 \leq \frac{2(A+1)}{{p_1}^2}\|d_{S}-\hat{d}_{S}\|_2 ^2\]
Taking the square root of both sides we obtain:
\[\|\pi_{S}-\hat{\pi}_{S}\|_2\leq (A+1)A_1 \leq \frac{2\sqrt{A+1}}{p_1}\|d_{S}-\hat{d}_{S}\|_2 \leq \frac{2\tilde{C}\sqrt{A+1}}{p_1}\]
Then we follow a similar approach as Lemma 14.1 of \cite{agarwal2019reinforcement} to bound the distance between $d_{S}$ and the occupancy measure induced by $\hat{\pi}_{S}$. In our case, not only the two polices, but also the environments they are deployed on differ.\\ \\
We denote $\mathbb{P}_h^\pi$ as the state distribution resulting from $\pi$ at time step $h$ with $\rho$ as the initial state distribution. We consider bounding $\left\|\mathbb{P}_h^{\pi_{S}}-\mathbb{P}_h^{\hat{\pi}_{S}}\right\|_1$ with $h \geq 1$.
$$
\begin{aligned}
\mathbb{P}_h^{\pi_{S}}\left(s^{\prime}\right)-\mathbb{P}_h^{\hat{\pi}_{S}}\left(s^{\prime}\right) &= \sum_{s, a}\mathbb{P}_{h-1}^{\pi_{S}}(s) \pi_{S}(a \mid s)P_{d_{S}}\left(s^{\prime} \mid s, a\right)-\sum_{s, a}\mathbb{P}_{h-1}^{\hat{\pi}_{S}}(s) \hat{\pi}_{S}(a \mid s) P_{\hat{d}_{S}}\left(s^{\prime} \mid s, a\right) \\
&= \sum_{s, a}\left(\mathbb{P}_{h-1}^{\pi_{S}}(s) \pi_{S}(a \mid s)-\mathbb{P}_{h-1}^{\hat{\pi}_{S}}(s) \hat{\pi}_{S}(a \mid s)\right) P_{d_{S}}\left(s^{\prime} \mid s, a\right)  \\
&\quad +\sum_{s, a}\mathbb{P}_{h-1}^{\hat{\pi}_{S}}(s) \hat{\pi}_{S}(a \mid s) \left(P_{d_{S}}(s^{\prime} \mid s, a)-P_{\hat{d}_{S}}(s^{\prime} \mid s, a)\right) \\
&= \sum_{s, a}\left(\mathbb{P}_{h-1}^{\pi_{S}}(s) \pi_{S}(a \mid s)-\mathbb{P}_{h-1}^{\pi_{S}}(s) \hat{\pi}_{S}(a \mid s)+\mathbb{P}_{h-1}^{\pi_{S}}(s) \hat{\pi}_{S}(a \mid s)-\mathbb{P}_{h-1}^{\hat{\pi}_{S}}(s) \hat{\pi}_{S}(a \mid s)\right) P_{d_{S}}\left(s^{\prime} \mid s, a\right) \\
& \quad+ \sum_{s, a}\mathbb{P}_{h-1}^{\hat{\pi}_{S}}(s) \hat{\pi}_{S}(a \mid s) \left(P_{d_{S}}(s^{\prime} \mid s, a)-P_{\hat{d}_{S}}(s^{\prime} \mid s, a)\right) \\
&= \sum_s \mathbb{P}_{h-1}^{\pi_{S}}(s) \sum_a\left(\pi_{S}(a \mid s)-\hat{\pi}_{S}(a \mid s)\right) P_{d_{S}}\left(s^{\prime} \mid s, a\right) 
 +\sum_s\left(\mathbb{P}_{h-1}^{\pi_{S}}(s)-\mathbb{P}_{h-1}^{\hat{\pi}_{S}}(s)\right) \sum_a \hat{\pi}_{S}(a \mid s) P_{d_{S}}\left(s^{\prime} \mid s, a\right) \\
 & \quad+ \sum_{s, a}\mathbb{P}_{h-1}^{\hat{\pi}_{S}}(s) \hat{\pi}_{S}(a \mid s) \left(P_{d_{S}}(s^{\prime} \mid s, a)-P_{\hat{d}_{S}}(s^{\prime} \mid s, a)\right) \\
\end{aligned}
$$
Taking absolute value on both sides, we get:
$$
\begin{aligned}
\sum_{s^{\prime}}\left|\mathbb{P}_h^{\pi_{S}}\left(s^{\prime}\right)-\mathbb{P}_h^{\hat{\pi}_{S}}\left(s^{\prime}\right)\right| \leq & \sum_s \mathbb{P}_{h-1}^{\pi_{S}}(s) \sum_a\left|\pi_{S}(a \mid s)-\hat{\pi}_{S}(a \mid s)\right| \sum_{s^{\prime}} P_{d_{S}}\left(s^{\prime} \mid s, a\right) \\
& +\sum_s\left|\mathbb{P}_{h-1}^{\pi_{S}}(s)-\mathbb{P}_{h-1}^{\hat{\pi}_{S}}(s)\right| \sum_{s^{\prime}} \sum_a \hat{\pi}_{S}(a \mid s) P_{d_{S}}\left(s^{\prime} \mid s, a\right) \\
&+ \sum_{s, a}\mathbb{P}_{h-1}^{\hat{\pi}_{S}}(s) \hat{\pi}_{S}(a \mid s) \left|P_{d_{S}}(s^{\prime} \mid s, a)-P_{\hat{d}_{S}}(s^{\prime} \mid s, a)\right| \\
\leq & \frac{2\tilde{C}(A+1)}{p_1}+\frac{2\tilde{C}{\tilde \epsilon_p}(A+1)S}{p_1}+\left\|\mathbb{P}_{h-1}^{\pi_{S}}-\mathbb{P}_{h-1}^{\hat{\pi}_{S}}\right\|_1\\
\leq & \frac{4\tilde{C}({\tilde \epsilon_p} S+1)(A+1)}{p_1}+\left\|\mathbb{P}_{h-2}^{\pi_{S}}-\mathbb{P}_{h-2}^{\hat{\pi}_{S}}\right\|_1\leq\cdots\leq\frac{2h\tilde{C}({\tilde \epsilon_p} S+1)(A+1)}{p_1} .
\end{aligned}
$$
where in the second step we used the fact that:
\[\sum_a\left|\pi_{S}(a \mid s)-\hat{\pi}_{S}(a \mid s)\right|=\|\pi_{S}(\cdot \mid s)-\hat{\pi}_{S}(\cdot \mid s)\|_1\leq \sqrt{A} \|\pi_{S}(\cdot \mid s)-\hat{\pi}_{S}(\cdot \mid s)\|_2\leq \sqrt{A} \|\pi_{S}-\hat{\pi}_{S}\|_2\leq \frac{2\tilde{C}(A+1)}{p_1}\] and 
\begin{align*}
    \sum_{s, a}\mathbb{P}_{h-1}^{\hat{\pi}_{S}}(s) \hat{\pi}_{S}(a \mid s) \left|P_{d_{S}}(s^{\prime} \mid s, a)-P_{\hat{d}_{S}}(s^{\prime} \mid s, a)\right| &\leq \sum_{s, a}\mathbb{P}_{h-1}^{\hat{\pi}_{S}}(s) \hat{\pi}_{S}(a \mid s) \sum_{s, a}\left|P_{d_{S}}(s^{\prime} \mid s, a)-P_{\hat{d}_{S}}(s^{\prime} \mid s, a)\right|\\
    &\leq \left\|P_{d_{S}}-P_{\hat{d}_{S}}\right\|_1\leq S\sqrt{A}\left\|P_{d_{S}}-P_{\hat{d}_{S}}\right\|_2 \leq S\sqrt{A}{\tilde \epsilon_p}\left\|d_{S}-\hat{d}_{S}\right\|_2\\
    &\leq \frac{2\tilde{C}{\tilde \epsilon_p}(A+1)S}{p_1}
\end{align*}
Now using the definition of occupancy measure $d_\rho^\pi$, we have:
$$
d_\rho^{\pi_{S}}-d_\rho^{\hat{\pi}_{S}}= \sum_{h=0}^{\infty} \gamma^h\left(\mathbb{P}_h^{\pi_{S}}-\mathbb{P}_h^{\hat{\pi}_{S}}\right) .
$$
We take the $\ell_1$ norm of both sides and we obtain:
$$
\left\|d_\rho^{\pi_{S}}-d_\rho^{\hat{\pi}_{S}}\right\|_1 \leq\sum_{h=0}^{\infty} \gamma^h \frac{2h\tilde{C}({\tilde \epsilon_p} S+1)(A+1)}{p_1}
$$
Using the identity $\sum_{h=0}^{\infty} \gamma^h h=\frac{\gamma}{(1-\gamma)^2}$ we conclude that:
$$
\left\|d_\rho^{\pi_{S}}-d_\rho^{\hat{\pi}_{S}}\right\|_1 \leq \frac{2\tilde{C}({\tilde \epsilon_p} S+1)(A+1)\gamma}{p_1(1-\gamma)^2}
$$
The difference between the return of the estimated policy $\hat{\pi}_{S}$ and the return of the performatively stable policy $\pi_{S}$ is bounded as follows:
\begin{align*}
    r_{\hat{\pi}_{S}}\cdot d_\rho^{\hat{\pi}_{S}} - r_{{\pi_{S}}}\cdot d_\rho^{\pi_{S}} &= r_{\hat{\pi}_{S}}\cdot d_\rho^{\hat{\pi}_{S}} -r_{\hat{\pi}_{S}}\cdot d_\rho^{\pi_{S}}+r_{\hat{\pi}_{S}}\cdot d_\rho^{\pi_{S}}- r_{{\pi_{S}}}\cdot d_\rho^{\pi_{S}} \\
    & = r_{\hat{\pi}_{S}}\cdot (d_\rho^{\hat{\pi}_{S}} -d_\rho^{\pi_{S}})+(r_{\hat{\pi}_{S}}- r_{{\pi_{S}}})\cdot d_\rho^{\pi_{S}} \\
    & \geq -R \left\|d_\rho^{\pi_{S}}-d_\rho^{\hat{\pi}_{S}}\right\|_1-\|r_{\hat{\pi}_{S}}- r_{{\pi_{S}}}\|_2 \|d_\rho^{\pi_{S}}\|_2 \\ 
    & \geq -R \left\|d_\rho^{\pi_{S}}-d_\rho^{\hat{\pi}_{S}}\right\|_1-{\tilde \epsilon_r}\left\|d_\rho^{\pi_{S}}-d_\rho^{\hat{\pi}_{S}}\right\|_2 \|d_\rho^{\pi_{S}}\|_2 \\
    & \geq -\left\|d_\rho^{\pi_{S}}-d_\rho^{\hat{\pi}_{S}}\right\|_1\left(R+\frac{{\tilde \epsilon_r}}{1-\gamma}\right) \\
    & \geq -\frac{2\gamma({\tilde \epsilon_p} S+1)(A+1)((1-\gamma)R+{\tilde \epsilon_r})\tilde{C}}{p_1(1-\gamma)^3}
\end{align*}

\subsection{Implementation Details for the Experimental Analysis}
In the simulations of Robust Performative RL we tuned the parameters $c_p$ and $\lambda$ to make repeated retraining converge in a number of iterations close to $25$, with the logic of avoiding too sharp and too slow convergence. We found that $c_p=1$ and $\lambda=0.001$ achieved this purpose. The Robust Performative RL experiments were conducted in a node with 200 CPUs and 20GB RAM and took approximately 24 hours to complete. 

}

\end{document}